\documentclass{article}
\usepackage{authblk}
\usepackage{times}
\usepackage{soul}
\usepackage{url}
\usepackage[hidelinks]{hyperref}
\usepackage[utf8]{inputenc}
\usepackage[small]{caption}
\usepackage{graphicx}
\usepackage{amsmath}
\usepackage{amsthm}
\usepackage{booktabs}
\usepackage{algorithm}
\usepackage{algorithmic}

\usepackage{helvet}
\usepackage{courier}
\usepackage{caption}

\usepackage[table,xcdraw]{xcolor}
\usepackage{makecell}
\usepackage{multirow}
\usepackage{algorithm}
\usepackage{algorithmic}
\usepackage{amsthm}
\usepackage{amssymb}
\usepackage[a4paper,left=2.5cm,right=2.5cm, top=3cm, bottom=3cm]{geometry}
\usepackage[
backend=biber,
sorting=ynt
]{biblatex}
\usepackage{amsmath}
\usepackage{todonotes}
\usepackage{booktabs}
\usepackage{subcaption}
\usepackage{newfloat}
\usepackage{listings}
\DeclareCaptionStyle{ruled}{labelfont=normalfont,labelsep=colon,strut=off} 
\floatstyle{ruled}
\newfloat{listing}{tb}{lst}{}
\floatname{listing}{Listing}

\newtheorem{definition}{Definition}
\newtheorem{property}{Property}
\newtheorem{corollary}{Corollary}
\newcommand{\TendsTo}[1]{\underset{#1}{\rightarrow}}
\DeclareRobustCommand{\bbone}{\text{\usefont{U}{bbold}{m}{n}1}}

\title{$k$-hop Fairness: Addressing Disparities in Graph Link Prediction Beyond First-Order Neighborhoods}
\date{}

\begin{document}

\author[1]{Lilian Marey}
\author[2]{Tiphaine Viard}
\author[3]{Charlotte Laclau}

\affil[1]{LTCI, Télécom Paris - Deezer Research\\
\texttt{lilian.marey@telecom-paris.fr}}
\affil[2]{i3, Télécom Paris\\
\texttt{tiphaine.viard@telecom-paris.fr}}
\affil[3]{LTCI, Télécom Paris\\
\texttt{charlotte.laclau@telecom-paris.fr}}

\maketitle

\begin{abstract}
Link prediction (LP) plays a central role in graph-based applications, particularly in social recommendation. However, real-world graphs often reflect structural biases, most notably homophily, the tendency of nodes with similar attributes to connect. While this property can improve predictive performance, it also risks reinforcing existing social disparities. In response, fairness-aware LP methods have emerged, often seeking to mitigate these effects by promoting inter-group connections, that is, links between nodes with differing sensitive attributes (e.g., gender), following the principle of \textit{dyadic fairness}. 
However, dyadic fairness overlooks potential disparities within the sensitive groups themselves.
To overcome this issue, we propose \emph{$k$-hop fairness}, a structural notion of fairness for LP, that assesses disparities conditioned on the distance between nodes in the graph. 
We formalize this notion through predictive fairness and structural bias metrics, and propose pre- and post-processing mitigation strategies.
Experiments across standard LP benchmarks reveal: (1) a strong tendency of models to reproduce structural biases at different $k$-hops; (2) interdependence between structural biases at different hops when rewiring graphs; and (3) that our post-processing method achieves favorable $k$-hop performance-fairness trade-offs compared to existing fair LP baselines.
\end{abstract}

\section{Introduction}\label{sec1}

Graphs are a natural way to represent interactions between individuals. In social networks, link prediction (LP) is used to recommend new connections, such as friendships or professional ties. These predictions can influence how people connect and interact, which has a direct impact on users' social or professional opportunities.
Recent work has shown that standard LP methods can amplify existing biases, raising concerns about their fairness and societal impact.
Notable examples include \textit{filter bubbles}~\cite{pariser2011filter}, which reinforce users’ existing preferences and limit exposure to diverse information, and \textit{glass ceiling effect}~\cite{avin2015homophily}, where certain groups face systemic barriers that hinder the formation of new links.
To address these harmful outcomes, several approaches have been proposed under the term \textit{fair LP}~\cite{laclau2024surveyfairnessmachinelearning,chen-survey}. 
These techniques rely on graphs having nodes annotated with sensitive attributes (e.g., race, gender, age), thereby highlighting communities that are potentially subject to such effects. 
While the formalization of fairness in this setting remains an open research question, the dyadic fairness paradigm~\cite{li2021dyadic} has emerged as the canonical framework. 
In this view, the sensitive attribute assigned to nodes is extended to edges, enabling the distinction between intra-group and inter-group links. 
Since real-world graphs exhibit strong homophily, fair predictors aim to counteract the tendency of algorithms to recommend intra-group links by acting on graph structures \cite{li2022fairlp,laclau2021all}, models' training \cite{khajehnejad2022crosswalk,buyl2020debayes}, or resulting predictions \cite{masrour2020bursting} to promote inter-group links.
However, this paradigm appears to address only part of the problem, which is the main focus of this paper. 
Specifically, considering predicted edges solely with respect to the sensitive attributes of the nodes they connect, without accounting for the existing graph structure, overlooks the fact that not all nodes within a community occupy equivalent positions, and therefore should not be treated identically. 
To account for these different node positions, we broaden the scope of dyadic fairness by centering our fairness formulation on the diversity of predictions within nodes \(k\)-hop neighborhoods.
Our approach is further motivated by industrial settings where edge recommendations are categorized by proximity, as $2$ or $3+$-hop suggestions in LinkedIn.
Our key contributions are as follows:
\begin{enumerate}
    \item We introduce $k$-hop fairness, enabling a more comprehensive evaluation of link predictors and graph structures by assessing disparities within nodes' $k$-hop neighborhoods, complemented with an analysis of the limitations of dyadic fairness.
    This framework applies to graphs with multi-valued sensitive attributes and naturally generalizes to weighted and directed networks.
    \item We propose algorithmic methods to quantify and mitigate $k$-hop inequities. These methods can be applied in pre- or post-processing, and are model-agnostic, broadening their applicability across diverse LP settings.
    \item Through extensive experiments on multiple real-world graphs, we investigate the relationship between $k$-hop structural bias and predictive fairness, analyze the interdependence of structural biases at different hops, and demonstrate the effectiveness of our post-processing mitigation method compared to fair LP baselines.
\end{enumerate}
In the following, Section~\ref{sec3} motivates and formalizes the concept of $k$-hop fairness, Section~\ref{sec4} details the methods to compute and mitigate the defined metrics, and Sections~\ref{sec5}--\ref{sec6} present experiments and results.

\section{Related Work}

Fairness is crucial in graph LP, where algorithms often reproduce biases present in graph data~\cite{avin2015homophily,pariser2011filter,baumann2020modeling}.

\subsection{Fair Link Predictors}
To tackle these biases, various fair LP methods have been proposed, which can be categorized based on the stage of the training pipeline they operate on.

\textit{Pre-processing} approaches focus on modifying the graph structure to reduce topological biases before model training. This can involve adjusting edge distributions to balance group representation~\cite{li2022fairlp}, or leveraging optimal transport to align edge weights~\cite{laclau2021all}.
\textit{In-processing} methods integrate fairness constraints directly into the representation learning process. 
Examples include FairWalk~\cite{rahman2019fairwalk}, which incorporates parity constraints into random walk sampling, CrossWalk~\cite{khajehnejad2022crosswalk}, which more finely adapts transition probabilities, and Debayes~\cite{buyl2020debayes}, which debiases embeddings using biased prior distributions.
\textit{Post-processing} techniques operate directly on model predictions, applying corrections to enforce fairness without modifying the underlying model. 
For example, fairness can be improved by adjusting prediction probabilities using adversarial training methods~\cite{masrour2020bursting}.

Several studies also investigate fair node representation learning~\cite{wang2022unbiased,dong2022edits,he2023fairmile,ling2023learning}, particularly in the context of Graph Neural Networks (GNNs). These methods can be adapted to link prediction by incorporating an appropriate prediction head and training with a suitable loss function.

In the aforementioned works, the fairness of designed predictors is evaluated within the framework of dyadic fairness~\cite{li2021dyadic}, following canonical metrics as \textit{demographic parity} (DP)~\cite{feldman2015certifying} and \textit{equal opportunity} (EO)~\cite{hardt2016equality}. Formally, the graph community has adopted the following metrics to assess fairness in link prediction.
\begin{definition}(Dyadic Fairness metrics)
\label{def_dyadic}
    $$\Delta DP = |\mathbb{E}(\hat{Y}|S=S') - \mathbb{E}(\hat{Y}|S\neq S')|,$$
    $$\Delta EO = |\mathbb{E}(\hat{Y}|S=S', Y = 1) - \mathbb{E}(\hat{Y}|S\neq S', Y = 1)|,$$
    where $\hat{Y}, Y$ are the predicted and true edge scores and $S, S'$ refer to nodes' sensitive attributes.
    Thus, dyadic fairness aims to balance inter- and intra-group predictions.
\end{definition}

This framework has only recently been questioned in~\cite{mattos2025breaking}, which highlights several undesirable effects in the context of ranking-based LP.
In this work, we continue along this line of research, arguing that this paradigm disregards the underlying graph structure, treating all inter-group edges as equivalent, and arguing that fairness assessment should rather incorporate the existing topology, particularly by considering the $k$-hop neighborhoods of the nodes involved.

\subsection{Relationship with Graph Structural Bias}

Several works have attempted to incorporate graph structural features into fairness frameworks, such as~\cite{marey2026topofair}.  
For instance, a number of studies use either explicitly or implicitly \cite{rahman2019fairwalk} the concept of \textit{neighborhood fairness}, which can be succinctly expressed as the requirement that neighborhoods should be diverse with respect to sensitive attributes.  
In~\cite{chen2022graph}, this concept is motivated by the observation that neighborhood aggregation is central in GNNs, \textit{“if a neighborhood is unfair, the resulting embeddings will also be unfair”}.  
However, these approaches still evaluate fairness using dyadic metrics, without fully grounding their work in a structural approach.

In parallel, some works have framed fairness on graphs as a problem of information access, thereby adopting a broader view of topology.  
For instance, in~\cite{arnaiz2023structural,jalali2020information}, the authors model structural bias via information flow in random walks, although these approaches are not directly tied to learning tasks.
To the best of our knowledge, \cite{dong2022edits} is the only work in which the fairness of predictors is considered across different neighborhood orders.
Nevertheless, this approach aggregates neighborhoods of several hops into a single measure, with a stronger weighting of close neighborhoods, which limits the granularity of the fairness assessment.  

In this work, we take a different perspective by evaluating fairness at a fixed distance $k>0$ and by explicitly grounding this notion of fairness in the structural bias observed at the same order.

\section{Assessing Fairness beyond Dyadic Framework}
\label{sec3}

\subsection*{Notations}
Let us consider a connected undirected graph $G = (\mathcal{V}, \mathcal{E}, \mathcal{S})$, where $\mathcal{V}$ is the set of nodes, $\mathcal{E} \subset \mathcal{V} \times \mathcal{V}$ is the set of edges, and $\mathcal{S} : \mathcal{V} \rightarrow \{0, 1\}$ is the sensitive attribute function. 
Also, we define the node distance $\sigma : \mathcal{V} \times \mathcal{V} \rightarrow \mathbb{N}$ as the length of shortest paths between two nodes. 
This allows us to define, for $v \in \mathcal{V}, s \in \{0, 1\}$: 
    the set of nodes having sensitive attribute value $s$, $\mathcal{V}_s = \{ v \in \mathcal{V} \mid \mathcal{S}(v) = s \}$;
    the set of $v$'s $k$-hop neighbors $N^{(k)}(v) = \{ v' \in \mathcal{V} | \sigma(v, v') = k \}$; 
    the set of nodes having a non-empty $k$-hop neighborhood $\mathcal{V}^{(k)} = \{ v \in \mathcal{V} | N^{(k)}(v) \neq \varnothing \}$; 
    and the set of nodes having sensitive value $s$ and a non-empty $k$-hop neighborhood $\mathcal{V}^{(k)}_s = \mathcal{V}_s \cap \mathcal{V}^{(k)}$. 
Finally, we denote $\mathcal{K} = \{ k \in \mathbb{N}^{*} | \mathcal{V}^{(k)} \neq \varnothing \}$ the set of valid neighborhood orders in $G$.

Denoting the true edge distribution $r : (v, v') \in \mathcal{V} \times \mathcal{V} \rightarrow \bbone_{\{(v, v') \in \mathcal{E} \}}$ (with $\bbone_X$ being the indicator of event $X$), LP can be formulated as finding $h : \mathcal{V} \times \mathcal{V} \rightarrow [0, 1]$ such that $h$ is close to $r$. 
In this work, we adopt a classification perspective (note that the LP task can also be framed as a ranking problem), thus for $v, v' \in \mathcal{V}$, $h(v, v')$ can be seen as an edge existence probability between the two nodes.

In the following, we work with random variables. Let $(V, V')$ be drawn uniformly from $\mathcal{V}\times\mathcal{V}$, and let $(S, S') = (\mathcal{S}(V), \mathcal{S}(V'))$  denote the sensitive attributes of sampled nodes.


\subsection{Motivations}

Dyadic fairness has gradually established itself as the standard approach for quantifying the fairness in LP. 
One reason is that, by propagating node-sensitive attributes to edges, it aligns naturally with the classical fairness framework. 
However, blindly promoting inter-group edges can be misleading, as it may obscure the underlying topology of the graph.
We detail below the undesirable effects that may arise from this perspective.

\begin{figure}[ht]
    \centering
    \includegraphics[width=.4\linewidth]{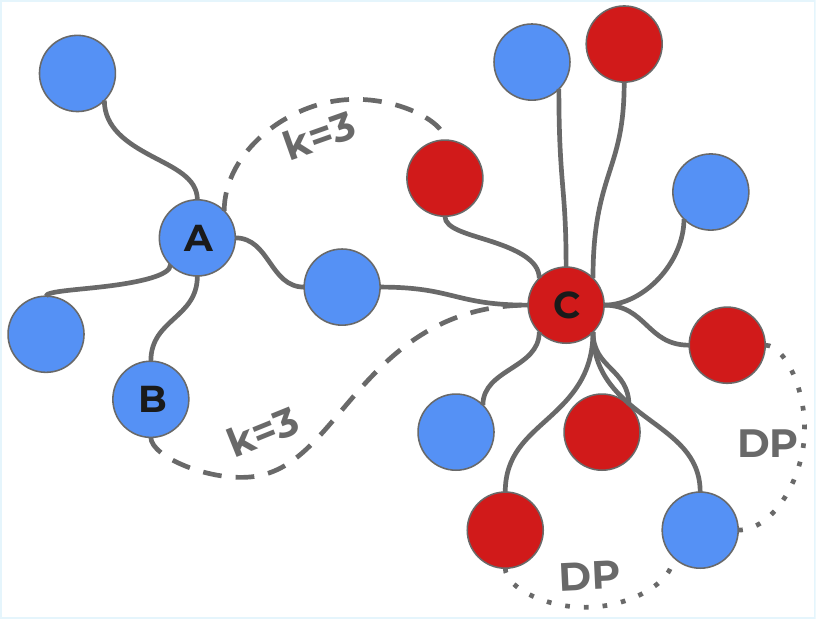}
    \caption{In this example, nodes A and B are segregated within a cluster of blue nodes. While a dyadic view would produce likely inter-edges (typically between nodes with shared neighbors around node C), $k$-hop fairness allows specifying the order at which fairness is desired (here, $k=3$), enabling connections to the segregated part.}
    \label{fig:fig1}
\end{figure}

\paragraph{Dyadic fairness obscures segregation phenomena.}

Consider a setting where a certain level of $\Delta DP$ is required for a link predictor in a homophilic setting. 
To counteract homophily amplification, the model will tend to promote inter-group edges while maximizing accuracy. 
As a result, the model primarily predicts the most likely inter-group edges, that is, in regions of the graph where inter-group edges are already present. 
This reinforces well-connected diverse structures at the expense of segregated parts of the graph. 

For instance, in Figure~\ref{fig:fig1}, where a diverse cluster appears on the right and a segregated cluster of blue nodes on the left, dyadic fairness is likely to drive the prediction of inter-group edges within the right-hand cluster. 
In terms of access to information, this results in only marginal changes. More importantly, it exacerbates intra-group disparities by improving the positions of blue nodes within the most diversified neighborhoods at the expense of the left cluster. 
This echoes well-known issues in which intra-group discrimination is amplified in favor of improved inter-group fairness~\cite{dwork2012fairness,castelnovo2022clarification,subramonian2023networked}.

To address this, we propose conditioning fairness on the distance between nodes in the original graph. 
In Figure~\ref{fig:fig1}, for instance, fairness at $k=3$-hops forces connecting the segregated part of the graph to the diverse cluster.

\paragraph{Dyadic fairness reinforces disparities within neighborhoods.} 

By mapping node sensitive attributes to edges, dyadic fairness may drift from the primary goal of algorithmic fairness, namely protecting sensitive nodes. 
It may be more relevant to instead ask: \textit{how does the local connectivity of individuals differ across sensitive groups?}, suggesting that fairness should be assessed in terms of node neighborhoods rather than just edge types. 
In particular, assuming that each new inter-group edge equally contributes to fairness, dyadic fairness makes no distinction between adding an inter-group edge between nodes whose neighborhoods are already diverse (node C in Figure~\ref{fig:fig1}) and between nodes where this is not the case (node B). 
This allows reinforcing the positions of nodes with already high access to information at the expense of isolated nodes. 
For this reason, we propose refocusing fairness in link prediction on node neighborhood rather than on the type of edge.

\subsection{Limits of Dyadic Fairness Metrics}

Building on the previous discussion, we formalize the notion of $k$-hop fairness by grounding it in the $k$-hop diversity of nodes.

\begin{definition}[$k$-hop node attribute exposure] 
For $s \in \{0, 1 \}, k\in \mathcal{K}, v \in \mathcal{V}^{(k)}$, and for a link predictor $h$, we define
$$f_s^{(k)}(v; h) = \mathbb{E}[h(V, V') \bbone_{ \{ S'=s \}} | \sigma(V, V') = k, V = v].$$     
\end{definition}

Intuitively, $f_s^{(k)}(v; h)$ quantifies how likely node $v$ is to form connections with nodes of sensitive attribute $s$ among those located exactly $k$ hops away. Building on a node-based measure ensures that each node contributes equally to the overall fairness assessment, unlike in the dyadic framework, as we will later show.

We now relate the canonical metric $\Delta DP$, defined in Definition \ref{def_dyadic}, to $k$-hop node attribute exposure $f$.
In the following, we treat the model $h$ as implicit and simply write $f_s^{(k)}(v)$ instead of $f_s^{(k)}(v; h)$.
While $\Delta DP$ is computed over pairs of sampled nodes, we now show that it can be explicitly decomposed in terms of the local terms $f^{(k)}_s$ by conditioning on the distance between the sampled nodes. 

\begin{property}[Decomposition of $\Delta DP$ in terms of $f^{(k)}_s$]\label{prop:deltadp}
\begin{equation*}
    \Delta DP  = \Big| \sum\limits_{k \in \mathcal{K}} \sum\limits_{v \in \mathcal{V}^{(k)}} \omega^{(k)}(v) \Big( \frac{f^{(k)}_{\mathrm{same}}(v)}{\mathbb{P}(S = S')}  - \frac{f^{(k)}_{\mathrm{diff}}(v)}{\mathbb{P}(S \neq S')} \Big) \Big|,
\end{equation*}
where $f^{(k)}_{\mathrm{same}}(v)\triangleq f^{(k)}_{\mathcal{S}(v)}(v)$ and $f^{(k)}_{\mathrm{diff}}(v)\triangleq f^{(k)}_{1 - \mathcal{S}(v)}(v)$ are the within- and cross-group exposure, respectively; and $\omega^{(k)}(v) = \mathbb{P}(V = v,  \sigma(V, V') = k)$.
\end{property}

The detailed derivation of this decomposition is provided in Supplementary B.
From this property, we note that although $\Delta DP$ is defined over node pairs, it aggregates contributions across all graph distances $k$ and all nodes, weighted by $\omega^{(k)}(v)$, which captures how frequently node $v$ appears in pairs separated by distance $k$.
As a result, structural effects arising at different hops, such as local homophily or long-range segregation, are mixed up into a single scalar, preventing the identification of the graph distances responsible for bias.
Moreover, the term $\omega^{(k)}(v)$ can overemphasize the influence of a small number of structurally central nodes, such as inter-group bridges, even when their connectivity patterns are not representative of their group as a whole.

In LP, where predicted interactions are directly constrained by graph structure, this lack of granularity limits both interpretability and targeted mitigation, as shown in Figure~\ref{fig:fig1}.

\subsection{Defining Fairness at Fixed Graph Distance}
To address this limitation, we adopt a hop-specific perspective and define fairness explicitly at a fixed graph distance $k$. 
In the following, we provide definitions that extend beyond the binary case and remain valid for sensitive attributes taking multiple values $s \in \mathbb{S}$.

\paragraph{From local to group level exposure}
The local quantity $f_s^{(k)}(v)$ captures how a node $v$ is predicted to connect, at distance $k$, to nodes of sensitive group $s$. To reason about fairness at the group level, we aggregate these local exposures across nodes sharing the same sensitive attribute.

\begin{definition}[$k$-hop group exposure]\label{def:khop-group}
For $ s, s' \in \mathbb{S},  k \in \mathcal{K}$ and an edge predictor $h$
    \begin{equation*}
        \phi_{s \rightarrow s'}^{(k)}(h) = \frac{1}{|\mathcal{V}^{(k)}_s|} \sum\limits_{v \in \mathcal{V}^{(k)}_s} f_{s'}^{(k)}(v; h)
    \end{equation*}
\end{definition}
The quantity $\phi_{s \rightarrow s'}^{(k)}$ can be interpreted as a group-to-group exposure measure,
capturing how nodes from sensitive group $s$ are collectively predicted to access nodes from group $s'$ at graph distance $k$. Importantly, this aggregation assigns equal weight to each node in $\mathcal{V}^{(k)}_s$, thereby characterizing a \emph{typical} exposure pattern rather than one dominated by structurally central nodes.

\paragraph{k-hop fairness}

At a fixed graph distance $k$, fairness can then be assessed by comparing how evenly nodes from the same source group are exposed to different sensitive groups.

\begin{definition}[$k$-hop neighborhood fairness]
\label{def:NF}
$\forall k \in \mathcal{K}$ and for an edge predictor $h$, we define the $k$-hop fairness gap as
    \begin{equation*}
        NF^{(k)}(h) = \max\limits_{s \in \mathbb{S}}
        \max\limits_{s_1 \neq s_2}|\phi_{s_1 \rightarrow s}^{(k)}(h) - \phi_{s_2 \rightarrow s}^{(k)}(h)|
    \end{equation*}
\end{definition}


Thus, a small $NF^{(k)}$ indicates balanced access across groups at hop $k$, while larger values reveal structural disparities in predicted connectivity.

Finally, this definition naturally induces a purely structural perspective on fairness. When the link predictor $h$ is replaced by the observed graph, that is, when the definition is evaluated on the $k$-hop indicator function, the resulting quantity reduces to a measure of diversity in the existing $k$-hop neighborhoods of the original graph. 

\begin{definition} 
For $k \in \mathcal{K}$, the $k$-hop structural bias of a graph $G$ is the fairness level of its $k$-hop indicator function:

$$NB^{(k)}(G) = \max\limits_{s \in \mathbb{S}} \max\limits_{s_1 \neq s_2} |\phi^{(k)}_{s_1 \rightarrow s}(r^{(k)}) - \phi^{(k)}_{s_2 \rightarrow s}(r^{(k)})|,$$
where $r^{(k)}(v, v') = \bbone_{ \{ \sigma(v, v') = k \}}$.
\end{definition}

In other words, instead of relying on the probability of connection, $NB^{(k)}$ focuses on sensitive attribute proportions in actual \( k \)-hop neighborhoods of the same group nodes.
In this context, an unbiased graph is one where each sensitive group has \( k \)-hop neighborhoods with similar proportions of sensitive attributes.
This raises the question of how structural bias at order $k$ aligns with the fairness of link predictors evaluated at the same order, which will later be explored.

In Supplementary A, we further characterize $NB^{(k)}$ through its relationship with assortativity, a standard measure of homophily in graphs, following the same analysis as for $\Delta DP$ and $NF^{(k)}$.
In addition, we show that all definitions introduced here naturally extend to weighted and directed graphs, and we detail these extensions in Supplementary C.

\section{Methodology: Computing and Controlling Bias and Fairness}
\label{sec4}

In this section, we outline the key challenges involved in measuring and controlling the disparity metrics defined above.
We introduce the main computational tools and corresponding algorithmic complexities, while leaving detailed methods and proofs to Supplementary C and D. 
First, we focus on assessing the $k$-hop structural bias of a graph $G$ and the $k$-hop fairness of a link predictor $h$ trained on this graph. 
Second, we present mitigation strategies based on the minimization of dedicated loss functions.

Here, we consider a fixed graph $G$ with $n$ nodes and $m$ edges, and a link predictor $h$ trained on $G$. 
In the following, $\|\cdot\|_F$ denotes the Frobenius norm.
All complexity results are derived under the standard sparsity assumption ($m \ll n^{2}$).

\subsection{Measuring k-hop Fairness and Structural Bias}
\label{sec:measure}
A primary use of the introduced metrics is to provide a diagnosis of a graph's structural bias or of a predictor's fairness. 
To this end, we present a general methodology for computing these quantities for a given graph and predictor.

As stated in Definitions~\ref{def:NF}, $NF^{(k)}(h)$ is computable from $\phi_{s \rightarrow s'}^{(k)}(h)$,  $\forall s, s' \in \mathbb{S}$.
By definition, and in its empirical version, this is given by the average over $v \in \mathcal{V}^{(k)}_s$ of $k$-hop group exposure $f^{(k)}_s(v; h)$ (Definition~\ref{def:khop-group}):
\[
\frac{1}{|N^{(k)}(v)|} \sum_{v' \in N^{(k)}(v)} h(v, v') \, \bbone_{\{S(v') = s\}}.
\]

We state that the quantities here involved are accessible in constant time, except for the $k$-hop neighborhood $N^{(k)}(v)$ of node $v$, whose computation over all nodes dominates the cost of evaluating $NF^{(k)}$. 
To encode this information for every node pair, we introduce the \textit{$h$-hop indicator matrix} specifying whether two nodes are at distance $k$,
\[
(\tilde{A}^{(k)})_{1 \leq i, j \leq n} \triangleq \bbone_{\{\sigma(v_i, v_j) = k\}},
\]
which will then allow us, through matrix operations, to recover the desired quantities.

To compute $\tilde{A}^{(k)}$, we perform a breadth-first search up to depth $k$ from every node in the graph, which yields the following property:

\begin{property}
For a graph $G$, for a link predictor $h$ and for $k \in \mathcal{K}$,
computing $NF^{(k)}(h)$ and $NB^{(k)}(G)$ is $\mathcal{O}(n^2)$.
\end{property}

\subsection{Controlling Fairness and Structural Bias}
\label{sec:mitigation}
A second use of the introduced metrics is to control bias levels, either through graph modifications or prediction adjustments, in order to mitigate undesired effects.

Thus, we propose mitigation methods for the biases defined above, relying on optimization procedures, using the Adam optimizer. 
In the losses we introduce, a $k$-hop bias term (either structural or predictive) is balanced with a fidelity term, allowing us to control the usual fairness–fidelity (or fairness–performance) trade-off.

\paragraph{Post-processing link predictions}

Let $P\in [0,1]^{n\times n}$ denote the edge probability matrix issued from predictor $h$, with $P_{ij}=h(v_i, v_j)$. 
In what follows, we will use $NF^{(k)}(h)$ and $NF^{(k)}(P)$ interchangeably.
In order to post-process predictions at $k$, we restrict modifications to node pairs that are exactly $k$-hops apart. 
We parametrize the post-processed $P'$ as 
$$P'=\Pi_{[0,1]}(P+U\odot \tilde{A}^{(k)}),$$
where $U\in \mathbb{R}^{n\times n}$ is the optimization variable and $\Pi_{[0,1]}$ denotes element-wise clipping to $[0,1]$. We then solve 
$$\min_{U \in \mathbb{R}^{n\times n}} \ell_{post}^{(k)}=NF^{(k)}(P')+\alpha||U\odot \tilde{A}^{(k)}||_F.$$

The regularization term $\|U \odot \tilde A^{(k)}\|_F$ controls the magnitude of the post-processing adjustments,
thereby preserving fidelity to the original predictor and preventing degenerate solutions.
It also contributes to the stability of the optimization, given the non-smooth nature of $NF^{(k)}$.

\textbf{Remark 2: } In this context, $\tilde{A}^{(k)}$ is independent of optimization variable $U$. In other words, the underlying graph structure supporting the computation of $NF^{(k)}$ remains fixed throughout the iterations.
This implies that once $\tilde{A}^{(k)}$ has been computed and stored, evaluating $\ell_{post}^{(k)}$ has negligible cost. Therefore, the overall complexity of the optimization is thus governed by that of the chosen optimizer.

\paragraph{Pre-processing graphs}
\label{rewiring}
To act on the structural bias of graph $G$, we operate directly on its adjacency matrix $A$. 
As before, the reduction of $k$-hop structural bias can be balanced with fidelity to the original graph through a coefficient $\alpha \in \mathbb{R}$. 
To this end, we define the following generic loss on a candidate adjacency matrix $A' \in \mathbb{R}^{n \times n}$:
\[
\ell_{\text{pre}}^{(k)}(A') = NB^{(k)}(A') + \alpha \|A' - A\|_F,
\]
where $NB^{(k)}(A')$ denotes the $k$-hop structural bias evaluated on the graph induced by the adjacency matrix $A'$.

This loss is versatile and can be adapted to various learning processes and constraint settings. 
For instance, by minimizing $\ell_{\text{pre}}^{(k)}$ while constraining  $A'$'s coefficients to $[0, 1]$, one obtains a weighted and directed graph that has lower bias. 
Additional constraints can be enforced depending on the target use case: for example, imposing symmetry on $A'$ at each iteration preserves the undirected nature of the graph, while restricting optimization to the non-zero entries of $A$ ensures that no new edges are introduced. 
Alternatively, one may iteratively identify and update the zero values of $A'$ having the lowest gradient, allowing for edge addition rewiring.

\textbf{Remark 3: } In this context, as we are rewiring the graph, the underlying $k$-hop structure is involved in the optimization process.
Consequently, it is necessary to provide a method to compute $\tilde{A}^{(k)}$ that is differentiable with respect to the optimization variable $A'$, which is not the case for the previously used breadth-first search method. 
To this end, we exploit the fact that successive powers of the adjacency matrix encode the number of walks of fixed length connecting each pair of nodes~\cite{biggs1993algebraic}.
This enables us to compute $\tilde{A}^{(k)}$ using gradient-preserving operations, and thus to establish the following property.

\begin{property}
For a graph $G$, $\alpha \in \mathbb{R}$, and $k \in \mathcal{K}$, the cost of one iteration in the minimization of $\ell_{\text{pre}}^{(k)}$ is $\mathcal{O}(k n^{2})$.
\end{property}

\section{Experiments}
\label{sec5}

In the previous section, we introduced computational tools to measure and control $k$-hop graph bias and predictor fairness, based on the definitions proposed in Section~\ref{sec3}. 
This framework enables us to investigate the core challenges and implications of $k$-hop fairness, which we explore through a series of experiments detailed below.

To structure our analysis, we organize our experiments around the following research questions:

\begin{itemize}
    \item \textbf{RQ1}: What are the relationships between $k$-hop structural bias and fairness at specific distances $k$ in standard LP models?
    \item \textbf{RQ2}: How does minimizing \(k\)-hop structural bias through a pre-processing strategy affect bias levels at other neighborhood orders?
    \item \textbf{RQ3}: To what extent does the post-processing strategy improve $k$-hop fairness of link predictors, and how does this affect predictive performance?
\end{itemize}

\paragraph{Standard models} 
To address these research questions, we employ classical link prediction algorithms: the random walk-based Node2Vec~\cite{grover2016node2vec} followed by logistic regression (\textbf{N2V}), a graph convolutional network designed for link prediction~\cite{kipf2016semi} (\textbf{GCN}), which has demonstrated strong performance across various real-world graphs~\cite{delarue2024link}, and GraphSAGE~\cite{hamilton2017inductive} (\textbf{SAGE}).

\paragraph{Fair baselines} To evaluate the effectiveness of our $k$-hop fairness mitigation method, we compare it against established fair LP baselines: \textbf{FairWalk}~\cite{rahman2019fairwalk}, \textbf{CrossWalk}~\cite{khajehnejad2022crosswalk}, \textbf{DeBayes}~\cite{buyl2020debayes}.
We also integrate fair representation learning baselines, namely \textbf{UGE}~\cite{wang2022unbiased}, \textbf{EDITS}~\cite{dong2022edits}, \textbf{FairMILE}~\cite{he2023fairmile}, \textbf{Graphair}~\cite{ling2023learning}.
For these baselines, we obtain fair node embeddings and train a logistic regression classifier on edge representations (computed as the Hadamard product of node vectors), ensuring identical training conditions.

\paragraph{Datasets} We evaluate our methods on diverse real-world graphs. \textbf{Polblogs}~\cite{adamic2005political} is a political blog network labeled by political affiliation. \textbf{Pokec} and \textbf{Facebook}~\cite{snapnets} are ego-networks from social platforms, with region and gender as sensitive attributes, respectively. \textbf{Citeseer}~\cite{caragea2014citeseer} is a co-authorship graph labeled by research field. We also include a \textbf{Synthetic} graph with three sensitive attributes, generated via a three-block stochastic block model~\cite{holland1983stochastic}.
For all these datasets, we reduce  $\mathcal{K}$ to the set of $k \in \mathbb{N}^*$ such that at least half of the nodes have a non-empty $k$-hop neighborhood, retaining meaningful neighborhood orders. 

\section{Results}
\label{sec6}

We now present the experimental results corresponding to the directions outlined in the previous section.
Models are trained on 80\% of the edges and an equal number of sampled negative edges, leaving the remaining 20\% for testing.  
$NB^{(k)}$ and $NF^{(k)}$ are computed on the full graph for evaluation, while post-processing mitigation relies on the training graph, avoiding leakage. To improve clarity and readability, we include only key results in the main paper. The full experimental results can be found in the Supplementary E.

\subsection{Relationship Between k-hop Structural Bias and Fairness (RQ1)}
\label{sec:rq1}
To characterize graph and predictor biases, (see Section~\ref{sec:measure}), we compute $NB^{(k)}$ on the observed graphs, as well as $NF^{(k)}$ for classical link predictors, across multiple $k$-hop neighborhoods and datasets.

\begin{figure}[t]
    \centering
    \begin{subfigure}{0.22\textwidth}
        \includegraphics[width=\linewidth]{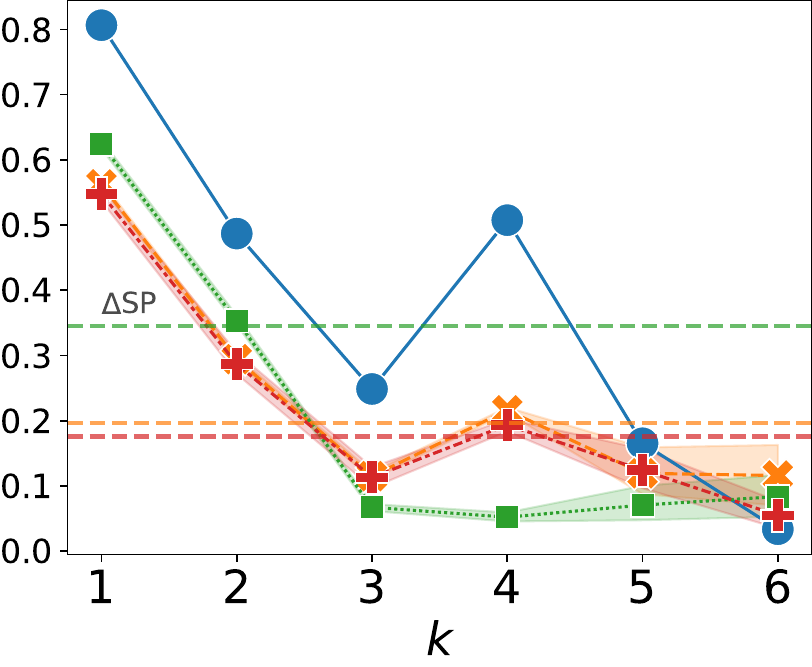}
        \caption{Polblogs}
    \end{subfigure}
    \hfill
    \begin{subfigure}{0.22\textwidth}
        \includegraphics[width=\linewidth]{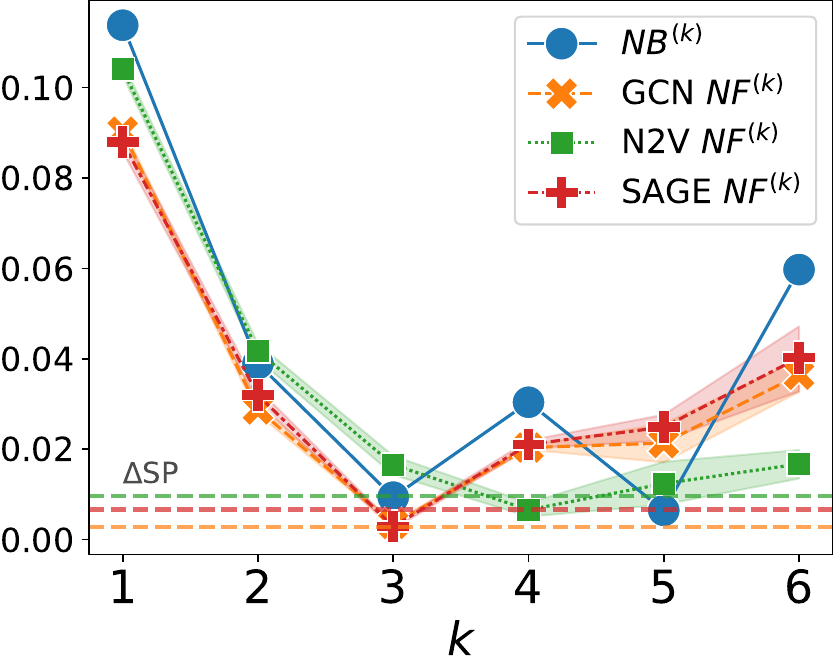}
        \caption{Facebook}

    \end{subfigure}
    \begin{subfigure}{0.22\textwidth}
        \includegraphics[width=\linewidth]{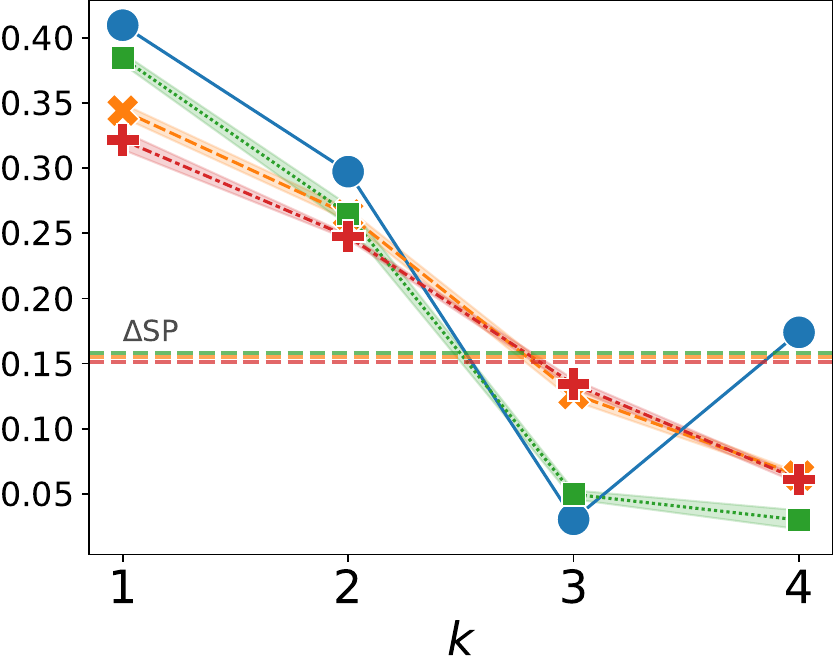}
        \caption{Pokec}
    \end{subfigure}
    \hfill
    \begin{subfigure}{0.22\textwidth}
        \includegraphics[width=\linewidth]{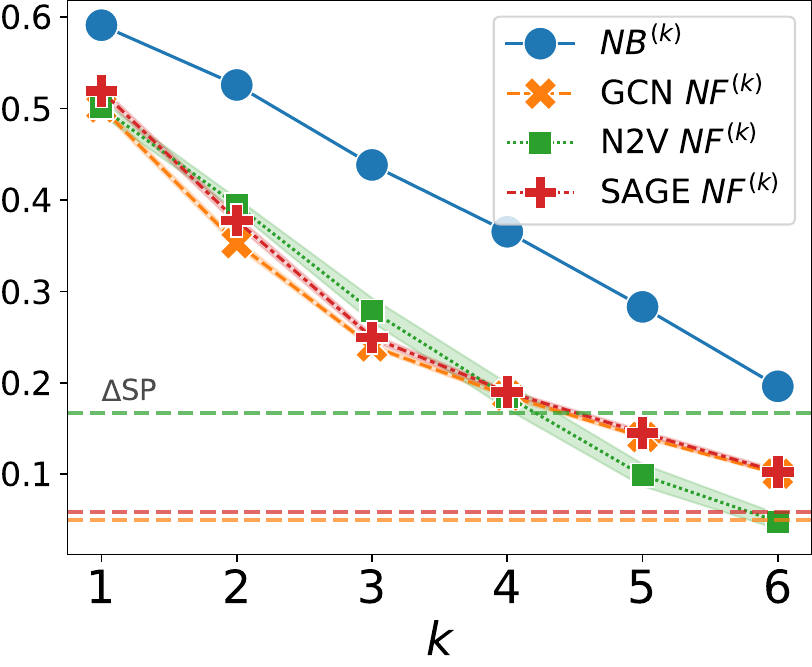}
        \caption{Citeseer}
    \end{subfigure}
    \caption{Comparison between graph structural bias and classical LP models $k$-hop fairness across meaningful hops.}
    \label{fig3}
\end{figure}

Figure~\ref{fig3} reveals that, in all presented graphs, the $1$-hop bias is consistently higher than the others, which supports the idea that local neighborhoods embed the primary source of bias.  
However, other significant biases occur at specific hops (such as at $4$-hop in Polblogs), highlighting the relevance of $k$-hop fairness for $k>1$.  
The variability in bias profiles across datasets reflects the diversity of topologies that can exist in real-world graphs, suggesting that uniformly aggregating disparities over hops as in~\cite{dong2022edits} is not a desirable strategy.  
Moreover, we find that $NB^{(k)}$ and $NF^{(k)}$ tend to match: in most cases, the hops with the highest structural bias correspond to the hops where fairness is lowest, and conversely.  
This both validates the relevance of $NB^{(k)}$ as a structural proxy for $NF^{(k)}$, and shows that unfair hops are traceable through structural analysis.  
Accordingly, for each dataset, we select as target hops the three most structurally biased hop distances in $\mathcal{K}$.


\subsection{Dependencies Across Different Hops (RQ2)}
\label{sec:rq2}

Here, we adopt a graph rewiring approach to mitigate structural biases across different hop distances following the edge-addition process described in Section~\ref{rewiring}. In particular, we examine how minimizing bias at a given hop affects the biases observed at other orders of the graph.
For a fixed $k \in \mathcal{K}$, we compute $\ell_{\text{pre}}^{(k)}$ on the adjacency matrix, with $\alpha = 0$. At each iteration, we identify the non-edge that most reduces $NB^{(k)}$ by selecting the coordinate with the lowest gradient value, and add the corresponding edge to the graph.
By repeating this procedure, we track the structural biases at other hops across iterations. 
The influence of mitigating $NB^{(k)}$ on other biases is then quantified using Pearson correlation coefficients between the corresponding bias trajectories.

\begin{figure}[t]
    \centering
    \begin{subfigure}{0.215\textwidth}
        \includegraphics[width=\linewidth]{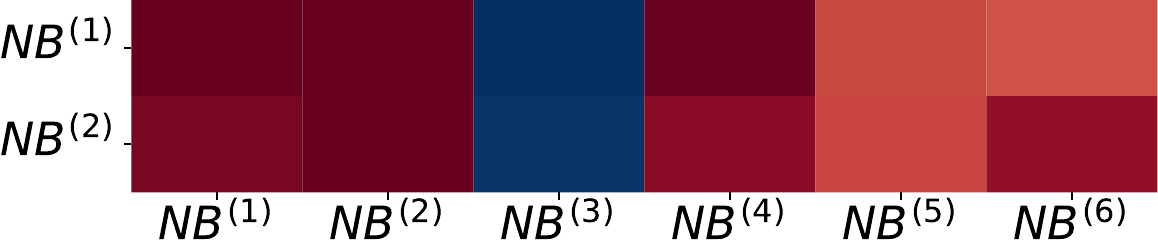}
        \caption{Polblogs}
    \end{subfigure}
    \hfill
    \begin{subfigure}{0.25\textwidth}
        \includegraphics[width=\linewidth]{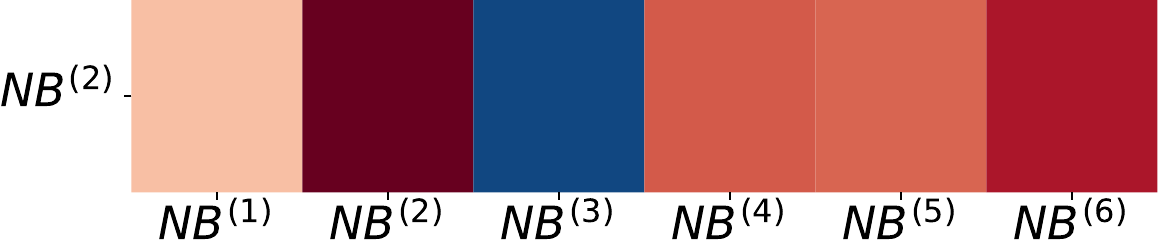}
        \caption{Facebook}
    \end{subfigure}
    \begin{subfigure}{0.22\textwidth}
        \includegraphics[width=\linewidth]{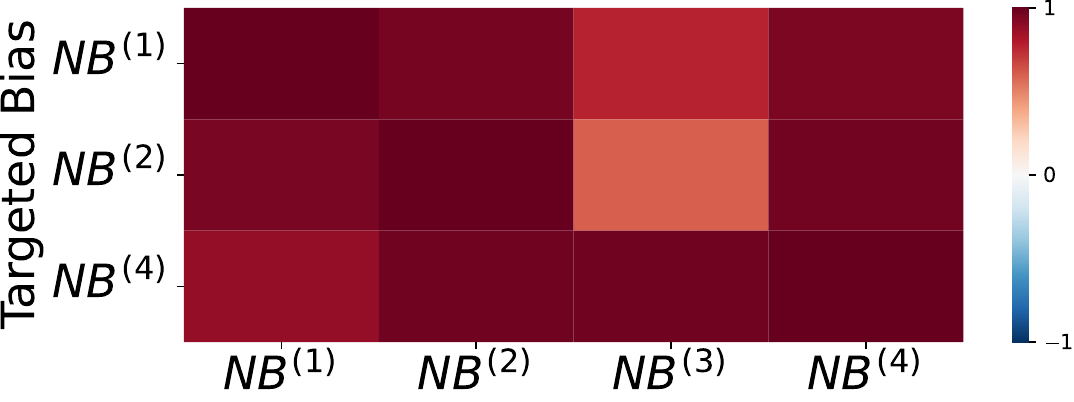}
        \caption{Pokec}
    \end{subfigure}
    \hfill
    \begin{subfigure}{0.24\textwidth}
        \includegraphics[width=\linewidth]{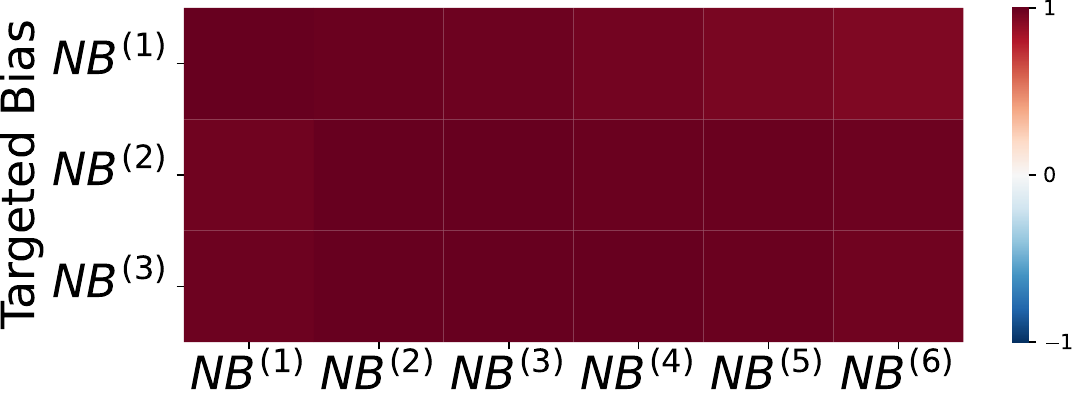}
        \caption{Citeseer}
    \end{subfigure}
    \caption{Influence of $NB^{(k)}$ minimization via edge addition on structural bias at other hops. All p-values reported were found to be below 0.01, indicating statistical significance. The evolution of biases over iterations is shown in Supplementary E.2. For some target $k$, the optimization process did not identify edges that reduce bias, these cases were excluded from the analysis.}
    \label{fig4}
\end{figure}

As shown in Figure~\ref{fig4}, the edge addition procedure reveals strong interdependencies between structural biases across hop distances. Consequently, bias mitigation at a single hop propagates to other structural scales of the graph. These interactions can be either reinforcing or compensatory. For instance, on Polblogs, reducing $NB^{(2)}$ also decreases $NB^{(1)}$ but increases $NB^{(3)}$, illustrating that effective multi-bias mitigation require data-specific strategies.


\begin{figure}[ht]
    \centering
    \begin{subfigure}{0.275\textwidth}
        \includegraphics[width=\linewidth]{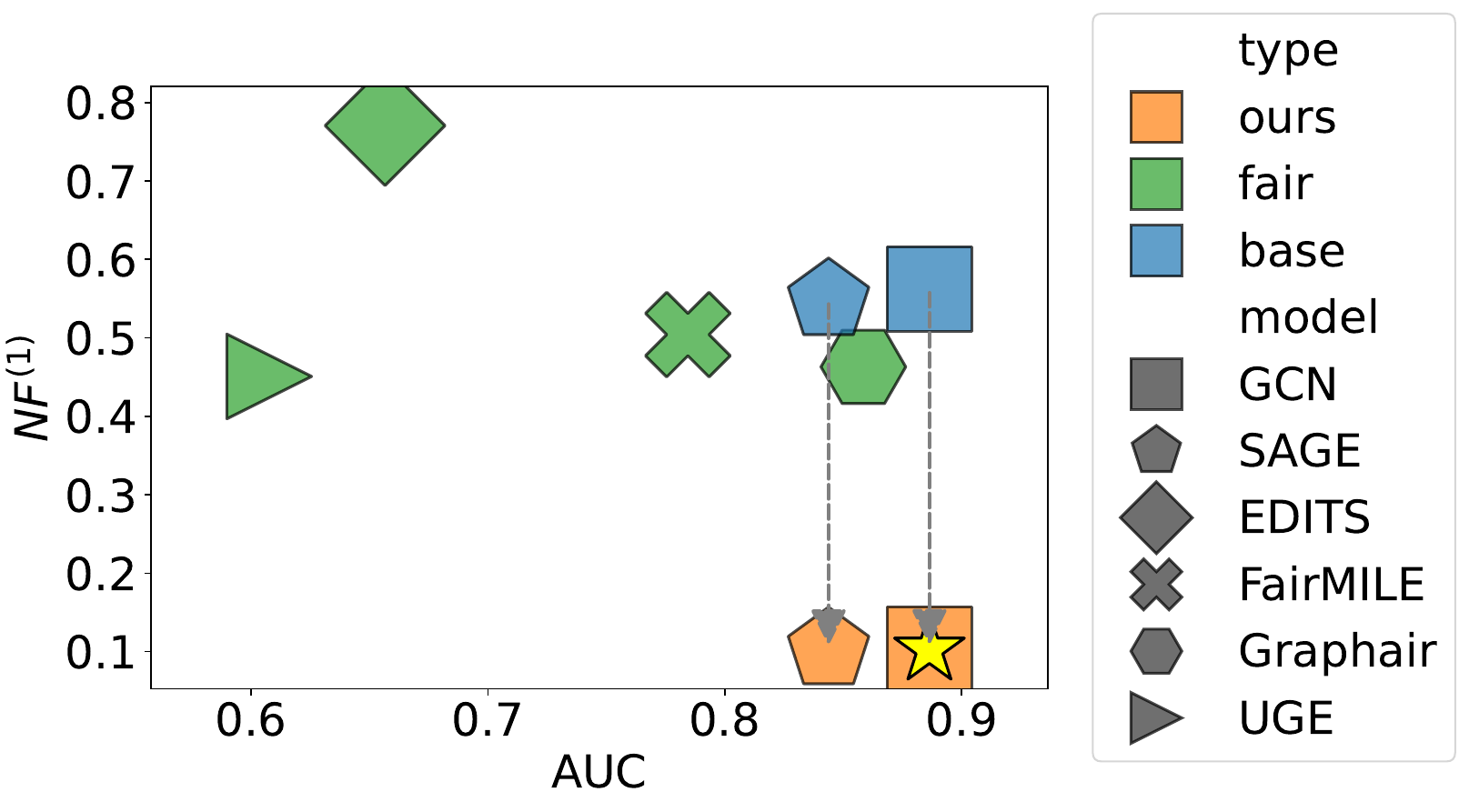}
    \end{subfigure}
    \hfill
    \begin{subfigure}{0.215\textwidth}
        \includegraphics[width=\linewidth]{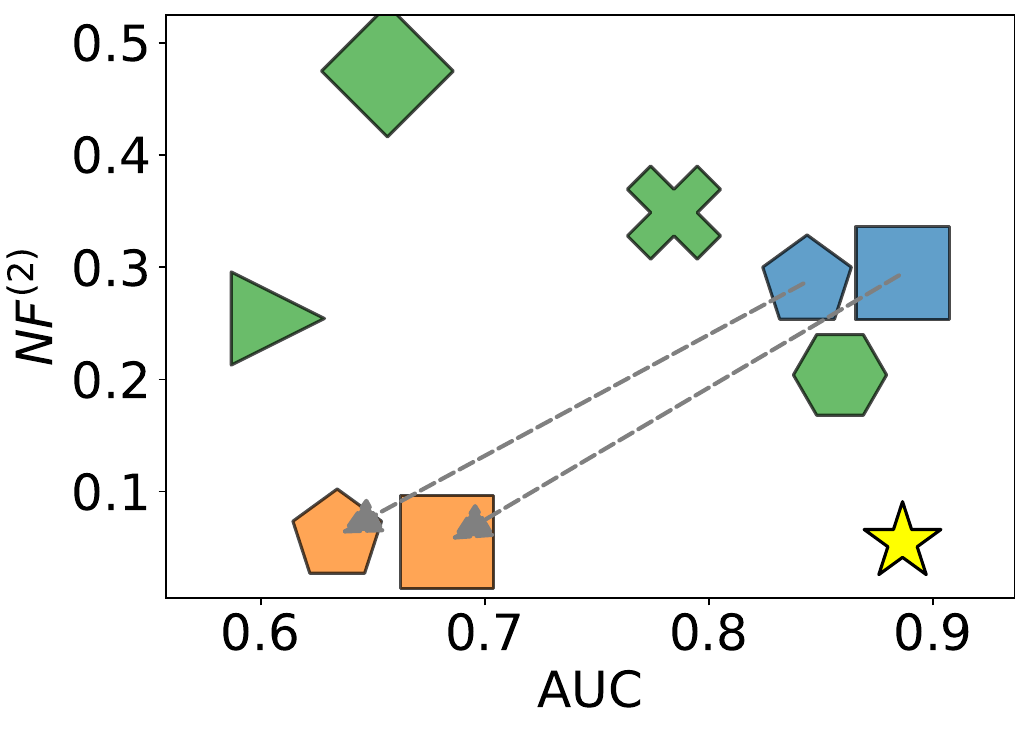}
    \end{subfigure}
    \hfill
    \begin{subfigure}{0.215\textwidth}
        \includegraphics[width=\linewidth]{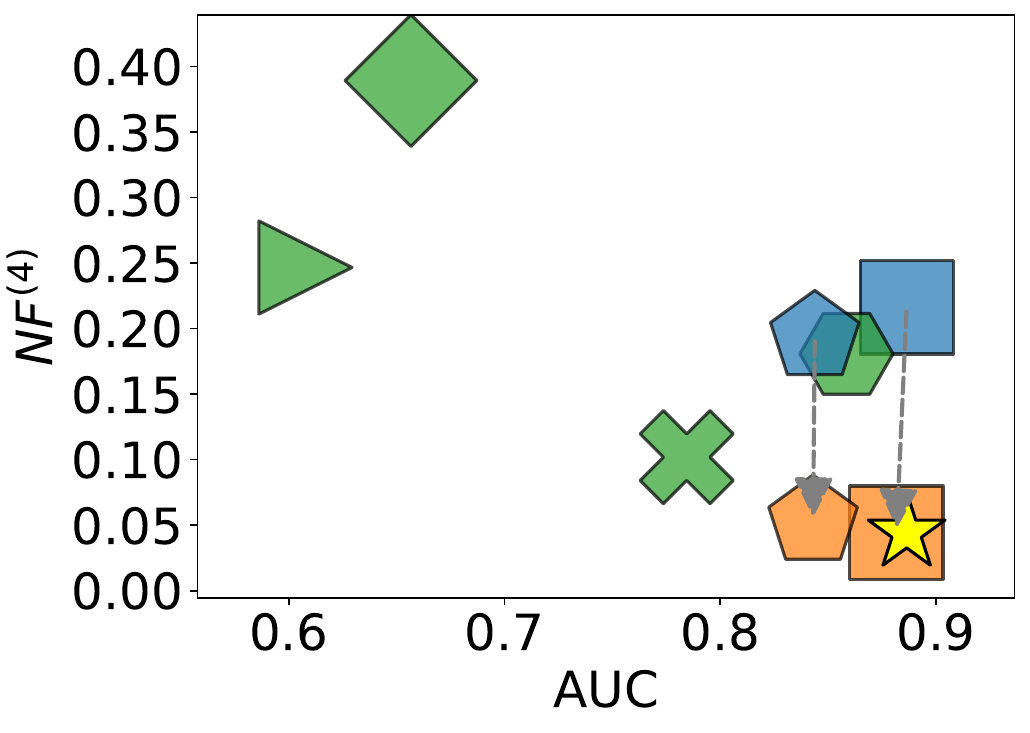}
    \end{subfigure}
    \hfill
    \begin{subfigure}{0.215\textwidth}
        \includegraphics[width=\linewidth]{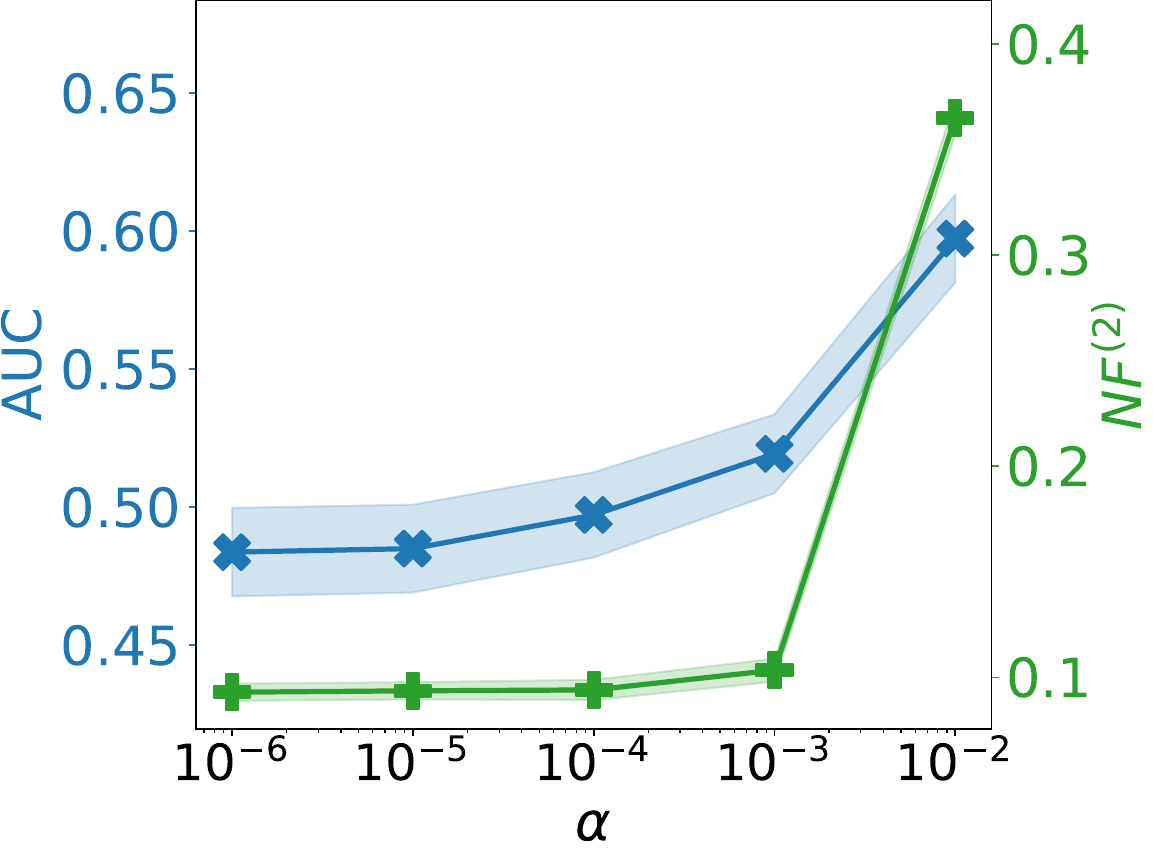}
    \end{subfigure}
    \hfill
    \begin{subfigure}{0.275\textwidth}
        \includegraphics[width=\linewidth]{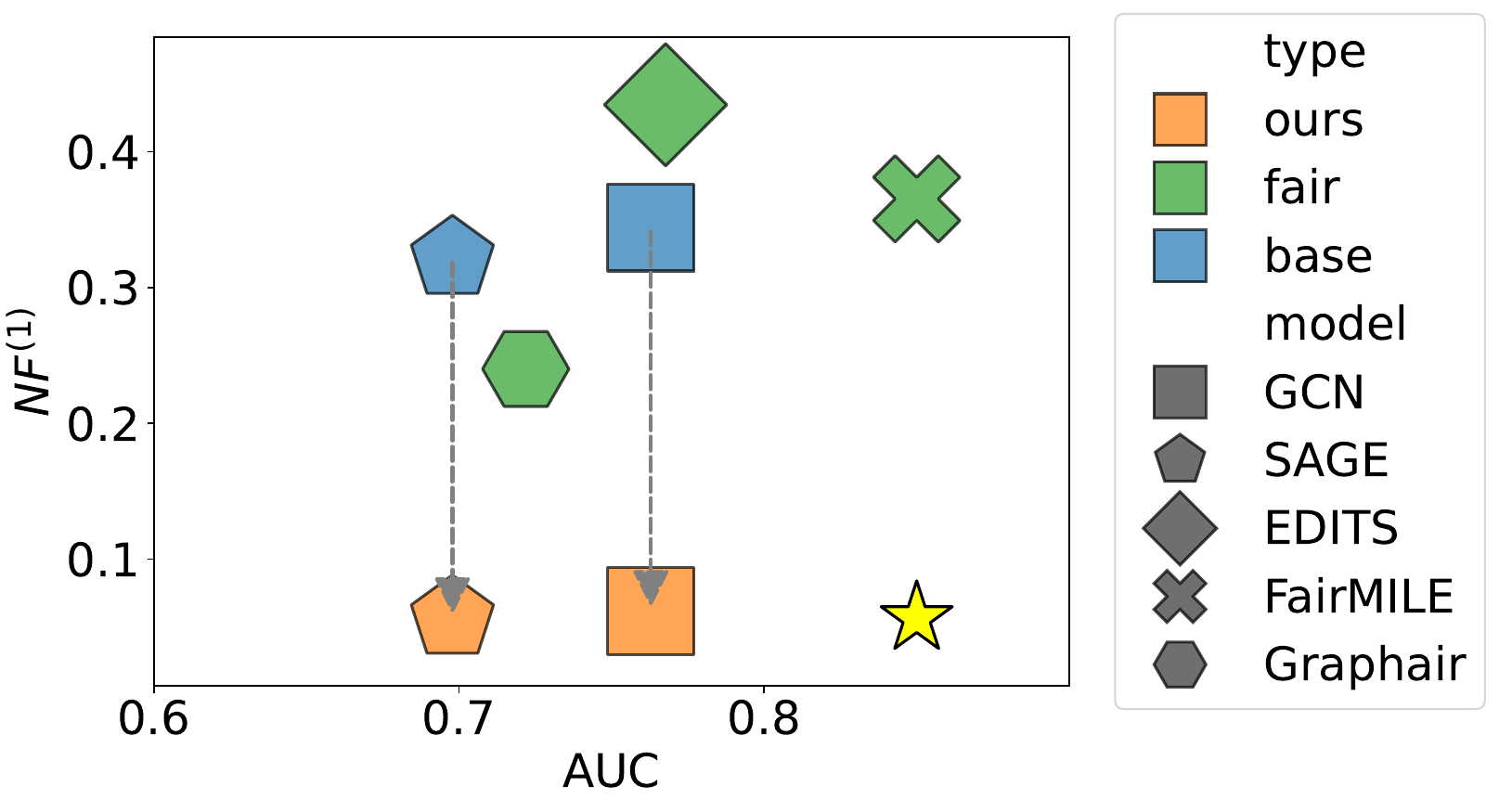}
        \caption{Polblogs $\uparrow$ Pokec $\downarrow$ - $k$ = 1}
    \end{subfigure}
    \hfill
    \begin{subfigure}{0.215\textwidth}
        \includegraphics[width=\linewidth]{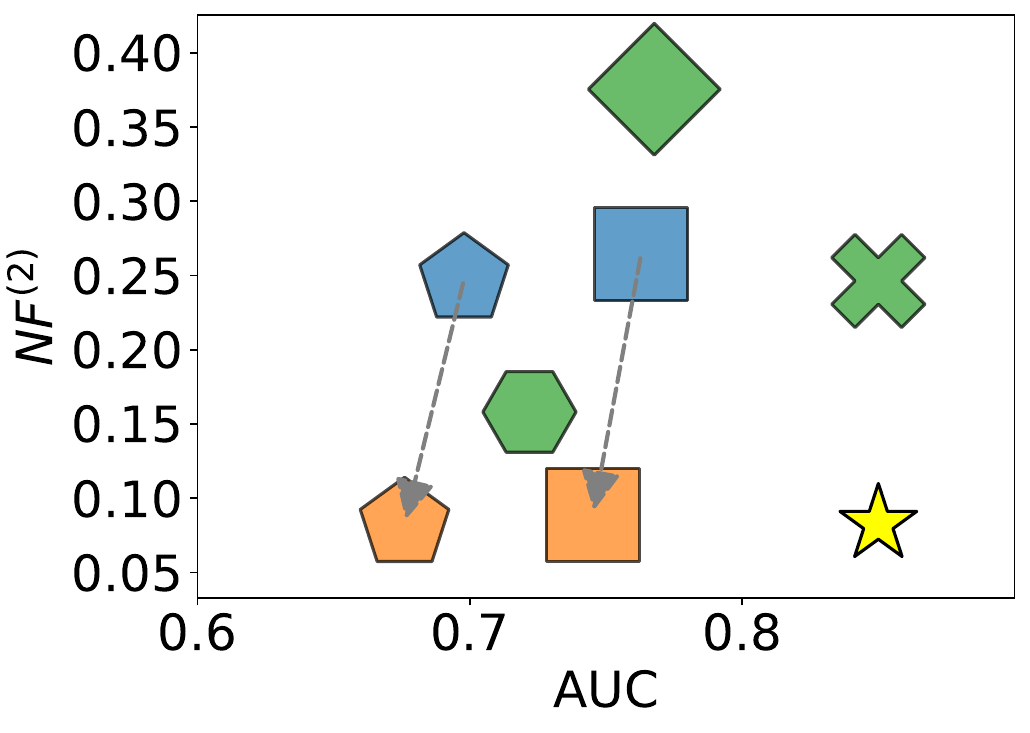}
        \caption{Polblogs $\uparrow$ Pokec $\downarrow$ - $k$ = 2}
    \end{subfigure}
    \hfill
    \begin{subfigure}{0.215\textwidth}
        \includegraphics[width=\linewidth]{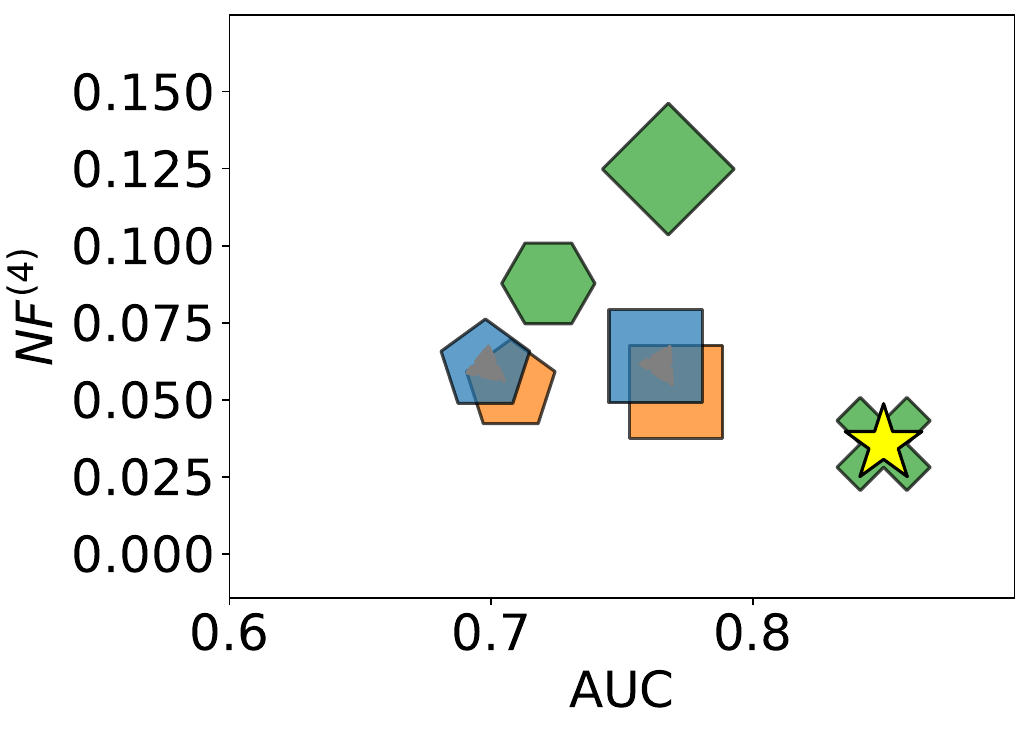}
        \caption{Polblogs $\uparrow$ Pokec $\downarrow$ - $k$ = 4}
    \end{subfigure}
    \hfill
    \begin{subfigure}{0.215\textwidth}
        \includegraphics[width=\linewidth]{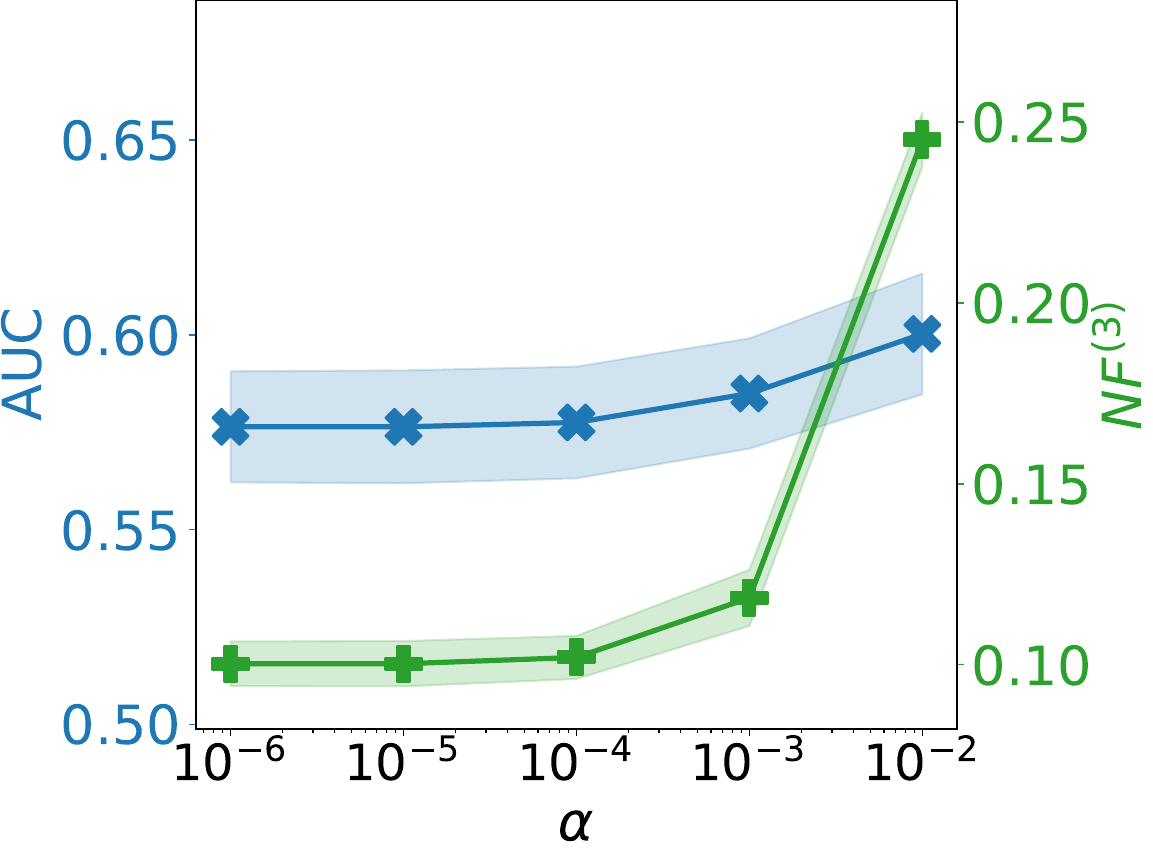}
        \caption{Citeseer - $k$ = 2 $\uparrow$, k = 3 $\downarrow$}
    \end{subfigure}
    \hfill
    \caption{AUC vs $NF^{(k)}$ plot for base and fairness-aware GNN-based baselines and for targeted $k$ values. Our post-processing results are shown through arrows from base models. The star marker denotes the hypothetical model achieving the best AUC and the best $NF^{(k)}$ across baselines. Only baselines with AUC $> 0.55$ are displayed. The outcome of our post-processing approach on the base models are represented with an arrow.  Figure (d) shows $\alpha$ effect in the post-processing method for \textbf{SAGE} predictions.}
    \label{fig:rq3}
\end{figure}

\subsection{Post-processing for k-hop Fair LP (RQ3)}
\label{sec:rq3}

We now target predictive bias $NF^{(k)}$ using the post-processing strategy introduced in Section~\ref{sec:mitigation}.
For each dataset, we train base link prediction (LP) models alongside fairness-aware baselines, and evaluate them using AUC, dyadic metrics  (DP, EO), and  $NF^{(k)}$ for $k \in \mathcal{K}$. 
For each standard model, we then apply our post-processing strategy on the selected target hops, using 500 training epochs and a learning rate of 0.01.
The parameter $\alpha$ is chosen via grid search to optimize a weighted trade-off between AUC and $NF^{(k)}$.
Results are averages over 10 random train/test splits. Figures~\ref{fig:rq3}(a--c) report the $NF^{(k)}$ versus AUC for all baselines. 

First, regardless of whether models are standard or fairness aware, none explicitly target $NF^{(k)}$ at any order, confirming that $k$-hop predictive biases are not addressed by existing dyadic approaches.
In contrast, our post-processing method consistently reduces the targeted $NF^{(k)}$, with trade-offs on AUC that depend on the hop order. At $k=1$, our method incurs no loss in predictive performance (AUC), while achieving higher-hop fairness becomes more challenging. This is explained by the post-processing mechanism: since $NF^{(1)}$ only involves training edges, modifying their scores does not affect the test prediction used to compute AUC. At intermediate orders, especially $k=2$, $NF^{(k)}$ involves many node pairs that overlap with test edges due to the high local clustering of real-world graphs~\cite{watts1998collective}. As a result, mitigation directly alters test scores and can substantially degrade performance. For larger $k$, test edges are less likely to be involved, leading to limited performance degradation and, in some cases, slight improvements in AUC. This observation is consistent with prior work in fairness-aware representation learning, where removing biases component can reveal useful predictive signals \cite{zemel13}.

Figure~\ref{fig:rq3}(d) presents a sensitivity analysis of $\alpha$.
Increasing $\alpha$ favors fidelity to the original predictions, thereby preserving AUC while limiting bias reduction. 
As expected, the trade-off is more pronounced at $k=2$ than at $k=3$, which is consistent with the previous discussion.

Finally, while structural biases were shown to be strongly correlated across hop orders under pre-processing interventions, our post-processing approach guarantees independence across predictive biases $NF^{(k)}$. By construction, node pairs at exact distance $k$ in the training graph do not contribute to $NF^{(k')}$ for $k' \neq k$. As a result, optimizing $NF^{(k)}$ cannot worsen predictive bias at other orders, enabling controlled and modular trade-offs across multiple hops. 

Additional results, including full numerical metrics with standard deviations, empirical comparisons with dyadic fairness metrics (DP and EO), results on random-walk–based approaches, and the complete sensitivity analysis of $\alpha$ across all datasets, are reported in Supplementary~E.3.

\section{Conclusions and Limitations}
This paper expands the conventional view of fairness in graph link prediction by accounting for structures underlying the prediction process, focusing on sensitive attribute diversity within nodes' $k$-hop neighborhoods.  
This direction was motivated by highlighting limitations of standard fairness and structural bias metrics, which we redefine within the $k$-hop fairness framework. Finally, we proposed mitigation strategies targeting these disparities, with experimental results demonstrating their effectiveness across several datasets and link prediction algorithms.
\paragraph{Limitations} 
Our framework relies on exact shortest-path distances, which limits scalability on large graphs despite the use of sparse representations. Approximate shortest-path estimators could mitigate this limitation.  
Moreover, our approach is intentionally purely structural, whereas real-world graphs often include node features, suggesting that feature-based fairness constitutes a complementary direction.

\printbibliography

@inproceedings{li2021dyadic,
  title={On dyadic fairness: Exploring and mitigating bias in graph connections},
  author={Li, Peizhao and Wang, Yifei and Zhao, Han and Hong, Pengyu and Liu, Hongfu},
  booktitle={International conference on learning representations},
  year={2021}
}

@article{rahman2019fairwalk,
  title={Fairwalk: Towards fair graph embedding},
  author={Rahman, Tahleen and Surma, Bartlomiej and Backes, Michael and Zhang, Yang},
  year={2019},
  journal={International Joint Conference on Artificial Intelligence (IJCAI)},
  publisher={CISPA}
}

@inproceedings{li2022fairlp,
  title={Fairlp: Towards fair link prediction on social network graphs},
  author={Li, Yanying and Wang, Xiuling and Ning, Yue and Wang, Hui},
  booktitle={Proceedings of the international AAAI conference on web and social media},
  volume={16},
  pages={628--639},
  year={2022}
}

@inproceedings{feldman2015certifying,
  title={Certifying and removing disparate impact},
  author={Feldman, Michael and Friedler, Sorelle A and Moeller, John and Scheidegger, Carlos and Venkatasubramanian, Suresh},
  booktitle={proceedings of the 21th ACM SIGKDD international conference on knowledge discovery and data mining},
  pages={259--268},
  year={2015}
}

@article{hardt2016equality,
  title={Equality of opportunity in supervised learning},
  author={Hardt, Moritz and Price, Eric and Srebro, Nati and al},
  journal={Advances in neural information processing systems},
  volume={29},
  year={2016}
}

@article{subramonian2023networked,
  title={Networked inequality: preferential attachment bias in graph neural network link prediction},
  author={Subramonian, Arjun and Sagun, Levent and Sun, Yizhou and al},
  journal={arXiv preprint arXiv:2309.17417},
  year={2023}
}

@article{castelnovo2022clarification,
  title={A clarification of the nuances in the fairness metrics landscape},
  author={Castelnovo, Alessandro and Crupi, Riccardo and Greco, Greta and Regoli, Daniele and Penco, Ilaria Giuseppina and Cosentini, Andrea Claudio},
  journal={Scientific Reports},
  volume={12},
  number={1},
  pages={4209},
  year={2022},
  publisher={Nature Publishing Group UK London}
}

@inproceedings{dwork2012fairness,
  title={Fairness through awareness},
  author={Dwork, Cynthia and Hardt, Moritz and Pitassi, Toniann and Reingold, Omer and Zemel, Richard},
  booktitle={Proceedings of the 3rd innovations in theoretical computer science conference},
  pages={214--226},
  year={2012}
}

@InProceedings{zemel13,
  title = 	 {Learning Fair Representations},
  author = 	 {Zemel, Rich and Wu, Yu and Swersky, Kevin and Pitassi, Toni and Dwork, Cynthia},
  booktitle = 	 {Proceedings of the 30th International Conference on Machine Learning},
  pages = 	 {325--333},
  year = 	 {2013},
  volume = 	 {28},
  number =       {3},
  series = 	 {Proceedings of Machine Learning Research},
  month = 	 Jun,
  publisher =    {PMLR}
}

@article{chen2022graph,
  title={Graph learning with localized neighborhood fairness},
  author={Chen, April and Rossi, Ryan and Lipka, Nedim and Hoffswell, Jane and Chan, Gromit and Guo, Shunan and Koh, Eunyee and Kim, Sungchul and Ahmed, Nesreen K},
  journal={arXiv preprint arXiv:2212.12040},
  year={2022}
}

@inproceedings{jalali2020information,
  title={On the information unfairness of social networks},
  author={Jalali, Zeinab S and Wang, Weixiang and Kim, Myunghwan and Raghavan, Hema and Soundarajan, Sucheta},
  booktitle={Proceedings of the 2020 SIAM International Conference on Data Mining},
  pages={613--521},
  year={2020},
  organization={SIAM}
}

@article{arnaiz2023structural,
  title={Structural Group Unfairness: Measurement and Mitigation by means of the Effective Resistance},
  author={Arnaiz, Rodriguez and Adrian and Curto, Georgina and Oliver, Nuria},
  journal={arXiv preprint arXiv:2305.03223},
  year={2023}
}

@inproceedings{laclau2021all,
  title={All of the fairness for edge prediction with optimal transport},
  author={Laclau, Charlotte and Redko, Ievgen and Choudhary, Manvi and Largeron, Christine},
  booktitle={International Conference on Artificial Intelligence and Statistics},
  pages={1774--1782},
  year={2021},
  organization={PMLR},
}

@inproceedings{grover2016node2vec,   
    title={node2vec: Scalable feature learning for networks},   author={Grover, Aditya and Leskovec, Jure},   
    booktitle={Proceedings of the 22nd ACM SIGKDD international conference on Knowledge discovery and data mining},   
    pages={855--864},   
    year={2016} 
    }

@book{pariser2011filter,
  title={The filter bubble: What the Internet is hiding from you},
  author={Pariser, Eli},
  year={2011},
  publisher={penguin UK}
}

@inproceedings{avin2015homophily,
  title={Homophily and the glass ceiling effect in social networks},
  author={Avin, Chen and Keller, Barbara and Lotker, Zvi and Mathieu, Claire and Peleg, David and Pignolet, Yvonne-Anne},
  booktitle={Proceedings of the 2015 conference on innovations in theoretical computer science},
  pages={41--50},
  year={2015}
}

@article{baumann2020modeling,
  title={Modeling echo chambers and polarization dynamics in social networks},
  author={Baumann, Fabian and Lorenz-Spreen, Philipp and Sokolov, Igor M and Starnini, Michele},
  journal={Physical review letters},
  volume={124},
  number={4},
  pages={048301},
  year={2020},
  publisher={APS}
}

@inproceedings{buyl2020debayes,
  title={Debayes: a bayesian method for debiasing network embeddings},
  author={Buyl, Maarten and De Bie, Tijl},
  booktitle={International Conference on Machine Learning},
  pages={1220--1229},
  year={2020},
  organization={PMLR}
}

@inproceedings{masrour2020bursting,
  title={Bursting the filter bubble: Fairness-aware network link prediction},
  author={Masrour, Farzan and Wilson, Tyler and Yan, Heng and Tan, Pang-Ning and Esfahanian, Abdol},
  booktitle={Proceedings of the AAAI conference on artificial intelligence},
  volume={34},
  pages={841--848},
  year={2020}
}

@inproceedings{khajehnejad2022crosswalk,
  title={Crosswalk: Fairness-enhanced node representation learning},
  author={Khajehnejad, Ahmad and Khajehnejad, Moein and Babaei, Mahmoudreza and Gummadi, Krishna P and Weller, Adrian and Mirzasoleiman, Baharan},
  booktitle={Proceedings of the AAAI Conference on Artificial Intelligence},
  volume={36},
  pages={11963--11970},
  year={2022}
}

@article{newman2002assortative,
  title={Assortative mixing in networks},
  author={Newman, Mark EJ},
  journal={Physical review letters},
  volume={89},
  number={20},
  pages={208701},
  year={2002},
  publisher={APS}
}

@book{biggs1993algebraic,
  title={Algebraic graph theory},
  author={Biggs, Norman},
  number={67},
  year={1993},
  publisher={Cambridge university press}
}

@inproceedings{adamic2005political,
  title={The political blogosphere and the 2004 US election: divided they blog},
  author={Adamic, Lada A and Glance, Natalie},
  booktitle={Proceedings of the 3rd international workshop on Link discovery},
  pages={36--43},
  year={2005}
}

@misc{snapnets,
                    author       = {Jure Leskovec and Andrej Krevl},
                    title        = {{SNAP Datasets}: {Stanford} Large Network Dataset Collection},
                    howpublished = {\url{http://snap.stanford.edu/data}},
                    month        = jun,
                    year         = 2014
                  }

@inproceedings{caragea2014citeseer,
  title={Citeseer x: A scholarly big dataset},
  author={Caragea, Cornelia and Wu, Jian and Ciobanu, Alina and Williams, Kyle and Fern{\'a}ndez-Ram{\'\i}rez, Juan and Chen, Hung-Hsuan and Wu, Zhaohui and Giles, Lee},
  booktitle={European Conference on Information Retrieval},
  pages={311--322},
  year={2014},
  organization={Springer}
}

@article{holland1983stochastic,
  title={Stochastic blockmodels: First steps},
  author={Holland, Paul W and Laskey, Kathryn Blackmond and Leinhardt, Samuel and al},
  journal={Social networks},
  volume={5},
  number={2},
  pages={109--137},
  year={1983},
  publisher={Elsevier}
}

@inproceedings{delarue2024link,
  title={Link Prediction Without Learning},
  author={Delarue, Simon and Bonald, Thomas and Viard, Tiphaine and al},
  booktitle={European Conference on Artificial Intelligence},
  year={2024}
}

@article{kipf2016semi,
  title={Semi-Supervised Classification with Graph Convolutional Networks},
  author={Kipf, TN},
  journal={arXiv preprint arXiv:1609.02907},
  year={2016}
}

@article{watts1998collective,
  title={Collective dynamics of ‘small-world’networks},
  author={Watts, Duncan J and Strogatz, Steven H},
  journal={nature},
  volume={393},
  number={6684},
  pages={440--442},
  year={1998},
  publisher={Nature Publishing Group}
}

@article{hamilton2017inductive,
  title={Inductive representation learning on large graphs},
  author={Hamilton, Will and Ying, Zhitao and Leskovec, Jure},
  journal={Advances in neural information processing systems},
  volume={30},
  year={2017}
}

@inproceedings{wang2022unbiased,
  title={Unbiased graph embedding with biased graph observations},
  author={Wang, Nan and Lin, Lu and Li, Jundong and Wang, Hongning},
  booktitle={Proceedings of the ACM web conference 2022},
  pages={1423--1433},
  year={2022}
}

@inproceedings{dong2022edits,
  title={Edits: Modeling and mitigating data bias for graph neural networks},
  author={Dong, Yushun and Liu, Ninghao and Jalaian, Brian and Li, Jundong},
  booktitle={Proceedings of the ACM web conference 2022},
  pages={1259--1269},
  year={2022}
}

@inproceedings{he2023fairmile,
  title={FairMILE: Towards an efficient framework for fair graph representation learning},
  author={He, Yuntian and Gurukar, Saket and Parthasarathy, Srinivasan},
  booktitle={Proceedings of the 3rd ACM Conference on Equity and Access in Algorithms, Mechanisms, and Optimization},
  pages={1--10},
  year={2023}
}

@inproceedings{ling2023learning,
  title={Learning fair graph representations via automated data augmentations},
  author={Ling, Hongyi and Jiang, Zhimeng and Luo, Youzhi and Ji, Shuiwang and Zou, Na},
  booktitle={International Conference on Learning Representations (ICLR)},
  year={2023}
}

@article{mattos2025breaking,
  title={Breaking the Dyadic Barrier: Rethinking Fairness in Link Prediction Beyond Demographic Parity},
  author={Mattos, Jo{\~a}o and Lina, Debolina Halder and Silva, Arlei},
  journal={arXiv preprint arXiv:2511.06568},
  year={2025}
}

@misc{laclau2024surveyfairnessmachinelearning,
      title={A Survey on Fairness for Machine Learning on Graphs}, 
      author={Charlotte Laclau and Christine Largeron and Manvi Choudhary},
      year={2024},
      eprint={2205.05396},
      archivePrefix={arXiv},
      primaryClass={cs.LG},
      url={https://arxiv.org/abs/2205.05396}, 
}

@article{chen-survey,
author = {Chen, April and Rossi, Ryan A. and Park, Namyong and Trivedi, Puja and Wang, Yu and Yu, Tong and Kim, Sungchul and Dernoncourt, Franck and Ahmed, Nesreen K.},
title = {Fairness-Aware Graph Neural Networks: A Survey},
year = {2024},
issue_date = {July 2024},
publisher = {Association for Computing Machinery},
address = {New York, NY, USA},
volume = {18},
number = {6},
issn = {1556-4681},
url = {https://doi.org/10.1145/3649142},
doi = {10.1145/3649142},
journal = {ACM Trans. Knowl. Discov. Data},
month = apr,
articleno = {138},
numpages = {23}
}

@article{marey2026topofair,
  title={TopoFair: Linking Topological Bias to Fairness in Link Prediction Benchmarks},
  author={Marey, Lilian and Perez, Mathilde and Viard, Tiphaine and Laclau, Charlotte},
  journal={arXiv preprint arXiv:2602.11802},
  year={2026}
}

\newpage
\appendix

\section{Case Study: Comparing k-hop Structural Bias with Assortativity}

To support the previously introduced definitions, we present a series of toy graphs in which we compute $NB^{(k)}$ for several values of $k$, and compare it to the canonical homophily metric, namely assortativity with respect to sensitive attribute~\cite{newman2002assortative}, denoted $a$, which is, as in dyadic fairness, computed based on edge types. 
In the following, node colors refer to sensitive attribute values. 

\subsection{Results}
\subsubsection{Sensitive Star Graph}
We begin with a simple example: consider the star graph composed of one blue central node and 
\( n \) peripheral nodes, among which a proportion \( p \) are blue and the remaining \( 1-p \) are red, 
as illustrated in Figure~\ref{fig:star_graph}.

\begin{figure}[htbp]
    \centering
    \includegraphics[width=.4\linewidth]{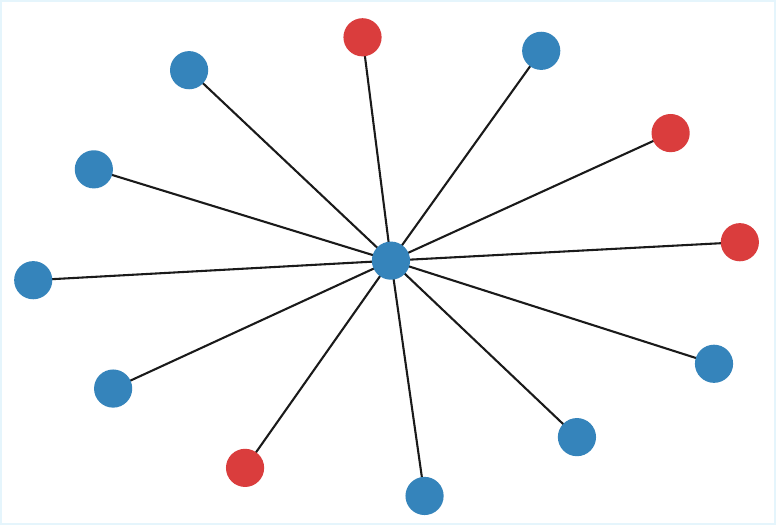}
    \caption{Sensitive star graph with $n=12$ and $p=\frac{2}{3}$}
    \label{fig:star_graph}
\end{figure}

Here, assortativity is $a = -\frac{1-p}{p+1}$ and \( NB^{(1)} = \frac{1-p}{np+1} \).
It is noteworthy that assortativity is independent of \( n \), whereas \( NB^{(1)} \) depends on \( n \) through its denominator, causing it to converge to zero as \( n \) grows. 
In fact, the central node occupies a favorable position because it can access all nodes with $1$ hop, while it is not the case for peripheral nodes. 
As \( n \) increases, the number of peripheral nodes grows, reducing the central node's share in the overall blue group. 
Asymptotically, both groups tend to occupy equivalent structural positions, making it coherent for the bias to vanish.
This result highlights a fundamental feature of our bias definition: it is node-centric rather than edge-based.

\begin{figure*}[t]
    \centering
    \includegraphics[width=.9\linewidth]{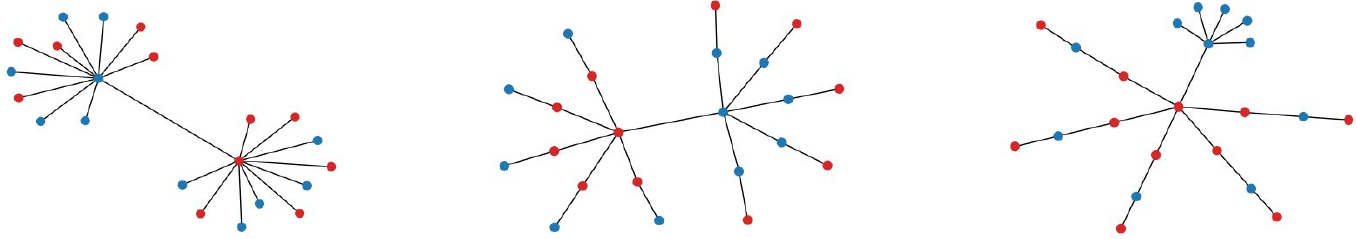}
    \caption{Toy graphs (a), (b), (c) (left to right) with constant assortativity. Node colors represent sensitive attributes and the dotted lines represent different edge prediction sets according to color.}
    \label{fig:toy_graphs}
\end{figure*}

\subsubsection{Toy graphs}

We now focus on the three toy graphs in Figure~\ref{fig:toy_graphs}. 
Each graph starts with two connected bridge nodes (one blue, one red). 
In \textbf{(a)}, \(n\) blue and \(n\) red nodes are attached to each bridge node, in \textbf{(b)}, the same nodes are hierarchically attached according to bridge-node color, and in \textbf{(c)}, \(n\) blue nodes attach to the blue bridge node, while the red bridge node receives \(n\) red, \(n\) blue, then \(n\) red nodes hierarchically. 
Graphs in Figure~\ref{fig:toy_graphs} are shown for \(n=5\).

In Toy Graph (a), all peripheral nodes experience identical information flow across successive orders. In Toy Graph (b), some peripheral nodes are relegated to the fringe, requiring an additional hop to access the full network information, while in Toy Graph (c), the two parts of the network are unbalanced, resulting in a different form of disparity in information access.
This observation is confirmed by Table~\ref{toygraph_bias}, where the three graph structures display significant variability: asymptotically, graph (a) is unbiased, graph (b) is biased at hops $1$ and $3$, while graph (c) is biased at all three hops. 

\begin{table}[ht]
\renewcommand{\arraystretch}{1.5}
\centering
\begin{tabular}{c|cccc}
Toy Graph & $a$ & $NB^{(1)}$   & $NB^{(2)}$        & $NB^{(3)}$        \\ \hline
\textbf{(a)}   & $\sim\frac{1}{n}$    & $\sim\frac{1}{n^2}$ & $\sim\frac{1}{n}$ & $0$               \\
\textbf{(b)}   & $\sim\frac{1}{n}$    & $\sim\frac{1}{2}$   & $\sim\frac{1}{n}$ & $\sim1$           \\
\textbf{(c)}   & $\sim\frac{1}{n}$   & $\sim\frac{1}{4}$   & $\sim\frac{1}{2}$ & $\sim\frac{1}{2}$
\end{tabular}
\caption{Asymptotic assortativity and $NB^{(k)}$ for $k \in \{ 1, 2, 3\}$ when $n \rightarrow \infty$ for the three presented toy graphs.}
\label{toygraph_bias}
\end{table}

However, we note that the three proposed networks are invariant in terms of number and type of edges, which results in a fixed assortativity value of \( a = \frac{1}{4n + 1} \TendsTo{n\rightarrow \infty} 0 \). 
This constancy appears counter-intuitive given the clearly different disparities described across the three cases. 
By contrast, $NB^{(1)}$ (its closest conceptual counterpart) varies across the three networks and remains asymptotically non-zero for toy graphs (b) and (c), which is more consistent with the topologies at hand. 
By taking a node-level view and accounting for structural bias at a fixed hop $k$, our approach captures disparities that remain hidden under the traditional homophily and dyadic fairness paradigm.

\subsection{Proofs}

In this section, we detail the calculation of $k$-hop structural bias $NB^{(k)}$ and assortativity $a$ for the presented toy graphs. 
Since these graphs involve only two sensitive attribute values, the general expressions of $NB^{(k)}$ and $a$ simplify in this setting. The next two sections focus on these simplifications.
We then provide explicit computations for the \textit{Sensitive Star Graph} and for the three illustrative Toy Graphs (a), (b), and (c).

In the following, as $|\mathbb{S}| = 2$, we assume for convenience $\mathbb{S}  = \{ 0, 1 \}$, and consider $S(v) = 0$ and $S(v) = 1$ for blue and red nodes, respectively.

\subsubsection{Structural k-hop bias with binary sensitive attribute}

Here, we show that the expression of \( NB^{(k)} \) simplifies in the case of binary sensitive attributes, allowing us to avoid the use of the \(\max\) operator.

In the following, for $s \in \mathbb{S}$ and $k \in \mathbb{N}^*$, we will denote $\mathcal{V}^{(k)}_s$ the set of nodes with sensitive attribute $s$ that have at least one $k$-hop neighbor. Formally,  $\mathcal{V}^{(k)}_s = \{ v \in \mathcal{V}_s, N^{(k)}(v) \neq \varnothing \} $.

\begin{property}
\label{propNB}
    If sensitive attributes are binary, then, for $k \in \mathbb{N}^*, NB^{(k)} = |\phi^{(k)}_{0 \rightarrow 0}(r^{(k)}) + \phi^{(k)}_{1 \rightarrow 1}(r^{(k)}) -1|$
\end{property}

\begin{proof}
By definition,
\begin{equation*}
    \begin{aligned}
        NB^{(k)} = \max (&|\phi^{(k)}_{0 \rightarrow 0}(r^{(k)}) - \phi^{(k)}_{1 \rightarrow 0}(r^{(k)})|, \\ 
        &|\phi^{(k)}_{0 \rightarrow 1}(r^{(k)}) - \phi^{(k)}_{1 \rightarrow 1}(r^{(k)})|)
    \end{aligned}
\end{equation*}
Yet, 
\begin{equation*}
    \begin{aligned}
        \phi^{(k)}_{0 \rightarrow 1}(r^{(k)}) 
        &= \frac{1}{|\mathcal{V}^{(k)}_0|} \sum\limits_{v \in \mathcal{V}^{(k)}_0} \frac{|N^{(k)}_1(v)|}{|N^{(k)}(v)|} \\
        &= \frac{1}{|\mathcal{V}^{(k)}_0|} \sum\limits_{v \in \mathcal{V}^{(k)}_0} (1 - \frac{|N^{(k)}_0(v)|}{|N^{(k)}(v)|}) \\
        &= 1 - \phi^{(k)}_{0 \rightarrow 0}(r^{(k)})
    \end{aligned}
\end{equation*}

Similarly, $\phi^{(k)}_{1 \rightarrow 0}(r^{(k)}) = 1 - \phi^{(k)}_{1 \rightarrow 1}(r^{(k)})$.

Thus, 
\[
\begin{aligned}
NB^{(k)} = \max ( 
    &|\phi^{(k)}_{0 \rightarrow 0}(r^{(k)}) + \phi^{(k)}_{1 \rightarrow 1}(r^{(k)}) - 1|, \\
    &|1 - (\phi^{(k)}_{0 \rightarrow 0}(r^{(k)}) + \phi^{(k)}_{1 \rightarrow 1}(r^{(k)})|)
\end{aligned}
\]
\[
NB^{(k)} = |\phi^{(k)}_{0 \rightarrow 0}(r^{(k)}) + \phi^{(k)}_{1 \rightarrow 1}(r^{(k)}) - 1|
\]
\end{proof}

\subsubsection{Assortativity with respect to binary sensitive attribute}
Here, we recall the definition of assortativity with respect to a sensitive attribute~\cite{newman2002assortative}, then simplify the expression for the binary attribute case.

\begin{definition}
\label{def:assortativity}
    Assortativity with respect to sensitive attribute is defined as:
\begin{equation*}
    a = \frac{Tr(e) - \sum e^2}{1 - \sum e^2}
\end{equation*}

with $e$ being the mixing matrix:
\[
e = 
\frac{1}{|\mathcal{E}|}
\begin{bmatrix} 
|\mathcal{E}_{ 0, 0}| & 
\frac{1}{2}|\mathcal{E}_{ 0, 1}| \\ 
\frac{1}{2}|\mathcal{E}_{ 0, 1}| &
|\mathcal{E}_{ 1, 1}| \end{bmatrix}
\]
with $\mathcal{E}_{s, s'} = \{ \{v, v'\} \in \mathcal{E} | S(v) = s, S(v') = s' \}$ and $\sum e^2$ being the sum of all the coefficient of $e^2$ matrix.
\end{definition}

We now expand the expression of \(\sum e^2\).

\begin{property}
\label{prop:mixingmatrix}
     $\sum e^2 = (e_{0, 0} + e_{0, 1})^2 + (e_{1, 1} + e_{0, 1})^2$.
\end{property}

\begin{proof}
\[
e^2 = \begin{bmatrix} 
e_{0, 0}^2 + e_{0, 1}^2 & 
e_{0, 0} e_{0, 1} + e_{0, 1} e_{1, 1} \\ 
e_{0, 0} e_{0, 1} + e_{0, 1} e_{1, 1} &
e_{1, 1}^2 + e_{0, 1}^2 \end{bmatrix}
\]
Hence, 
\begin{equation*}
    \begin{aligned}
\sum e^2 &= 
e_{0, 0}^2 + e_{0, 1}^2 + 
2 e_{0, 0} e_{0, 1} + 2 e_{0, 1} e_{1, 1} +
e_{1, 1}^2 + e_{0, 1}^2 \\
\sum e^2 &= (e_{0, 0}^2 + 2 e_{0, 0} e_{0, 1} + e_{0, 1}^2)
+ (e_{1, 1}^2 + 2 e_{1, 1} e_{0, 1} + e_{0, 1}^2) \\
\sum e^2 &= (e_{0, 0} + e_{0, 1})^2 + (e_{1, 1} + e_{0, 1})^2
    \end{aligned}
\end{equation*}
    
\end{proof}

This simplification leads us to a more direct closed-form expression for calculating assortativity.

\begin{corollary}
\label{cor:assortativity}
   \begin{equation*}
    a = \frac{(e_{0, 0} - (e_{0, 0} + e_{0, 1})^2) + (e_{1, 1} - (e_{1, 1} + e_{0, 1})^2)}{(\frac{1}{2} - (e_{0, 0} + e_{0, 1})^2) + (\frac{1}{2} - (e_{1, 1} + e_{0, 1})^2)}
\end{equation*} 
\end{corollary}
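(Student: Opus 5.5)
The plan is to substitute the expansion of $\sum e^2$ from Property~\ref{prop:mixingmatrix} directly into Definition~\ref{def:assortativity}, and then rewrite the numerator and denominator so that each splits into one contribution per sensitive group. Writing $p_0 = e_{0,0}+e_{0,1}$ and $p_1 = e_{1,1}+e_{0,1}$ for the row sums of the mixing matrix, Property~\ref{prop:mixingmatrix} gives $\sum e^2 = p_0^2 + p_1^2$. Since $Tr(e) = e_{0,0}+e_{1,1}$, the numerator $Tr(e) - \sum e^2$ regroups immediately as $(e_{0,0} - p_0^2) + (e_{1,1} - p_1^2)$. This is exactly the numerator in Corollary~\ref{cor:assortativity}.

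For the denominator, the only identity needed is that the entries of $e$ sum to one. I would establish it from Definition~\ref{def:assortativity}: every undirected edge falls into exactly one of $\mathcal{E}_{0,0}$, $\mathcal{E}_{1,1}$ or $\mathcal{E}_{0,1}$. Hence
\[
e_{0,0}+e_{1,1}+2e_{0,1} = \frac{|\mathcal{E}_{0,0}|+|\mathcal{E}_{1,1}|+|\mathcal{E}_{0,1}|}{|\mathcal{E}|} = 1.
\]
The point to check is that the off-diagonal entries are each $\tfrac12|\mathcal{E}_{0,1}|/|\mathcal{E}|$, so together they count each inter-group edge once. I also need $\mathcal{E}_{0,1}$ to be read as unordered inter-group pairs, so that it is not double counted against $\mathcal{E}_{1,0}$.

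Given this identity, I would write $1 = \tfrac12 + \tfrac12$ and conclude $1 - \sum e^2 = (\tfrac12 - p_0^2) + (\tfrac12 - p_1^2)$, which is the stated denominator. One could equally write it as $1 - p_0^2 - p_1^2$; since $p_0 + p_1 = 1$, this also equals $2p_0p_1$.

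The argument is purely algebraic, so there is no real obstacle. The only delicate step is the normalization $\sum_{i,j} e_{i,j} = 1$, which depends on the edge-set conventions above. I would state it explicitly as a small lemma before the substitution.
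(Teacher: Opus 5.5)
Your proof is correct and is essentially the paper's own argument, which simply substitutes $\sum e^2 = (e_{0,0}+e_{0,1})^2 + (e_{1,1}+e_{0,1})^2$ from Property~\ref{prop:mixingmatrix} into Definition~\ref{def:assortativity} and regroups. One small correction: the normalization lemma you call the delicate step is not needed, because $1 - p_0^2 - p_1^2 = (\tfrac12 - p_0^2) + (\tfrac12 - p_1^2)$ is pure arithmetic, and the fact that the entries of $e$ sum to one is used only in your optional rewriting as $2p_0p_1$.
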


\begin{proof}
Injecting the result of Property~\ref{prop:mixingmatrix} in Definition~\ref{def:assortativity} gives the result.
    
\end{proof}

\subsubsection{Sensitive Star Graph}

\paragraph{Assortativity}
The edges proportions lead to the following values in mixing matrix $e$:
\[
e_{0, 0} = p, \quad e_{1, 1} = 0, \quad e_{0, 1} = \frac{1}{2}(1-p)
\]

Hence, using Corollary~\ref{cor:assortativity},
\begin{equation*}
\begin{aligned}
    a &=  \frac{(e_{0, 0} - (e_{0, 0} + e_{0, 1})^2) + (e_{1, 1} - (e_{1, 1} + e_{0, 1})^2)}{(\frac{1}{2} - (e_{0, 0} + e_{0, 1})^2) + (\frac{1}{2} - (e_{1, 1} + e_{0, 1})^2)} \\
    a &= \frac{(p - (p + \frac{1}{2}(1-p))^2) + (0 - (0 + \frac{1}{2}(1-p))^2)}{(\frac{1}{2} - (p + \frac{1}{2}(1-p))^2) + (\frac{1}{2} - (0 + \frac{1}{2}(1-p))^2)} \\
    a & = \frac{(\frac{-1}{4} + \frac{p}{2} - \frac{p^2}{4}) + (\frac{-1}{4} + \frac{p}{2} - \frac{p^2}{4})}{(\frac{1}{4} - \frac{p}{2} - \frac{p^2}{4}) + (\frac{1}{4} + \frac{p}{2} - \frac{p^2}{4})} \\
    a & = \frac{\frac{-1}{2} + p - \frac{p^2}{2}}{\frac{1}{2} - \frac{p^2}{2}} \\
    a & = \frac{(1-p)^2}{1-p^2} \\
    & \boxed{a = -\frac{1-p}{p+1}}
\end{aligned}
\end{equation*}

\paragraph{$1$-hop Structural Bias}
Let us compute $\phi^{(1)}_{0 \rightarrow 0}$.
\begin{itemize}
    \item The central blue node is surrounded by $np$ blue nodes and $n(1-p)$ red nodes $\rightarrow p$
    \item The $np$ peripheral blue nodes only access the blue central node $\rightarrow 1$
\end{itemize}
Thus, $\phi^{(1)}_{0 \rightarrow 0} = \frac{1}{np+1}(p+np) = p\frac{n+1}{np+1}$

Let us compute $\phi^{(1)}_{1 \rightarrow 1}$.
\begin{itemize}
    \item The n peripheral red nodes access 1 blue node $\rightarrow 0$
\end{itemize}
Thus, $\phi^{(1)}_{1 \rightarrow 1} = 0$

Hence, using Property~\ref{propNB}, $NB^{(1)} = \Big|p\frac{n+1}{np+1} + 0 - 1   \Big|$
$$\boxed{NB^{(1)} = \frac{1-p}{np+1}}$$

\subsubsection{Toy Graph (a)}

\paragraph{Assortativity}

The edges proportions lead to the following values in mixing matrix $e$:
\[
e_{0, 0} = \frac{n}{4n+1} \quad e_{1, 1} = \frac{n}{4n+1} \quad e_{0, 1} = \frac{1}{2}\frac{2n+1}{4n+1}
\]

Hence,
\begin{equation*}
\begin{aligned}
    &(e_{0, 0} + e_{0, 1}) = (e_{1, 1} + e_{0, 1}) = \frac{n}{4n+1} + \frac{1}{2}\frac{2n+1}{4n+1} = \frac{1}{2}\\
    &(e_{0, 0} + e_{0, 1})^2 = (e_{1, 1} + e_{0, 1})^2 = \frac{1}{4}\\
    &e_{0, 0} - (e_{0, 0} + e_{0, 1})^2 = e_{1, 1} - (e_{1, 1} + e_{0, 1})^2 \\
    &= \frac{n}{4n+1} - \frac{1}{4} = \frac{-1}{4(4n+1)}\\
    &\frac{1}{2} - (e_{0, 0} + e_{0, 1})^2 = \frac{1}{2} - (e_{1, 1} + e_{0, 1})^2 = \frac{1}{2} - \frac{1}{4} = \frac{1}{4}\\
\end{aligned}
\end{equation*}

Thus,  using Corollary~\ref{cor:assortativity},
\begin{equation*}
\begin{aligned}
    a &=  \frac{(e_{0, 0} - (e_{0, 0} + e_{0, 1})^2) + (e_{1, 1} - (e_{1, 1} + e_{0, 1})^2)}{(\frac{1}{2} - (e_{0, 0} + e_{0, 1})^2) + (\frac{1}{2} - (e_{1, 1} + e_{0, 1})^2)} \\
    a &=  \frac{\frac{-1}{4(4n+1)} + \frac{-1}{4(4n+1)}}{\frac{1}{4} + \frac{1}{4}}  \\
    & \boxed{a =  \frac{-1}{4n+1}}
\end{aligned}
\end{equation*}

\paragraph{$1$-hop Structural Bias}
Let us compute $\phi^{(1)}_{0 \rightarrow 0}$.
\begin{itemize}
    \item The blue bridge node accesses $n$ blue nodes and $n+1$ red nodes $\rightarrow \frac{n}{2n+1}$
    \item The $n$ peripheral blue nodes situated around the blue bridge node only access the blue bridge node $\rightarrow 1$
    \item The $n$ peripheral blue nodes situated around the red bridge node only access the red bridge node $\rightarrow 0$
\end{itemize}
Thus, $\phi^{(1)}_{0 \rightarrow 0} = \frac{1}{2n+1}\Big( \frac{n}{2n+1} + n \Big) = \frac{2n(n+1)}{(2n+1)^2}$.

Symmetrically, $\phi^{(1)}_{1 \rightarrow 1} = \frac{2n(n+1)}{(2n+1)^2}$.

Hence, using Property~\ref{propNB}, $NB^{(1)} = \Big|2\frac{2n(n+1)}{(2n+1)^2} - 1   \Big|$
$$\boxed{NB^{(1)} = \frac{1}{(2n+1)^2}}$$

\paragraph{$2$-hop Structural Bias}
Let us compute $\phi^{(2)}_{0 \rightarrow 0}$.

At $2$-hop:
\begin{itemize}
    \item The blue bridge node accesses $n$ blue nodes and $n$ red nodes $\rightarrow \frac{1}{2}$
    \item The $n$ peripheral blue nodes situated around the blue bridge node access $n-1$ blue nodes and $n+1$ red nodes $\rightarrow \frac{n-1}{2n}$
    \item The $n$ peripheral blue nodes situated around the red bridge node access $n$ blue nodes and $n$ red nodes $\rightarrow \frac{1}{2}$
\end{itemize}
Thus, $\phi^{(2)}_{0 \rightarrow 0} = \frac{1}{2n+1}\Big( \frac{1}{2} + \frac{n(n-1)}{2n} + \frac{n}{2}\Big) = \frac{n}{2n+1}$.

Symmetrically, $\phi^{(2)}_{1 \rightarrow 1} = \frac{n}{2n+1}$.

Hence, using Property~\ref{propNB}, $NB^{(2)} = \Big|2\frac{n}{2n+1} - 1   \Big|$
$$\boxed{NB^{(2)} = \frac{1}{2n+1}}$$

\paragraph{$3$-hop Structural Bias}
Let us compute $\phi^{(3)}_{0 \rightarrow 0}$.

At $3$-hop:
\begin{itemize}
    \item The blue bridge node does not access any nodes.
    \item The $n$ peripheral blue nodes situated around the blue bridge node access $n$ blue nodes and $n$ red nodes $\rightarrow \frac{1}{2}$
    \item The $n$ peripheral blue nodes situated around the red bridge node access $n$ blue nodes and $n$ red nodes $\rightarrow \frac{1}{2}$
\end{itemize}
Thus, $\phi^{(3)}_{0 \rightarrow 0} = \frac{1}{2n}\Big( \frac{n}{2} + \frac{n}{2}\Big) = \frac{1}{2}$.

Symmetrically, $\phi^{(2)}_{1 \rightarrow 1} = \frac{1}{2}$.

Hence, using Property~\ref{propNB}, $NB^{(3)} = \Big|2\frac{1}{2} - 1   \Big|$
$$\boxed{NB^{(3)} = 0}$$

\subsubsection{Toy Graph (b)}

\paragraph{Assortativity}

The edges proportions are the same as in Toy Graph (a), thus
$\boxed{a = \frac{-1}{4n+1}}$.

\paragraph{$1$-hop Structural Bias}
Let us compute $\phi^{(1)}_{0 \rightarrow 0}$.
\begin{itemize}
    \item The blue bridge node accesses $n$ blue nodes and $1$ red nodes $\rightarrow \frac{n}{n+1}$
    \item The $n$ peripheral blue nodes situated around the blue bridge node access $1$ blue node and $1$ red node $\rightarrow \frac{1}{2}$
    \item The $n$ peripheral blue nodes situated around the red bridge node access $1$ blue node and $1$ red node $\rightarrow \frac{1}{2}$
\end{itemize}
Thus, $\phi^{(1)}_{0 \rightarrow 0} = \frac{1}{2n+1}\Big( \frac{n}{n+1} + \frac{1}{2} \Big) = \frac{n^2 + 3n}{2(n+1)(2n+1)}$.

Symmetrically, $\phi^{(1)}_{1 \rightarrow 1} = \frac{n^2 + 3n}{2(n+1)(2n+1)}$.

Hence, using Property~\ref{propNB}, $NB^{(1)} = \Big|2\frac{n^2 + 3n}{2(n+1)(2n+1)} - 1 \Big|$
$$\boxed{NB^{(1)} = \frac{n^2 + 1}{(n+1)(2n+1)}}$$

\paragraph{$2$-hop Structural Bias}
Let us compute $\phi^{(2)}_{0 \rightarrow 0}$.

At $2$-hop:
\begin{itemize}
    \item The blue bridge node only accesses red nodes $\rightarrow 0$
    \item The $n$ peripheral blue nodes situated around the blue bridge node access $n-1$ blue node and $1$ red node $\rightarrow \frac{n-1}{n}$
    \item The $n$ peripheral blue nodes situated around the red bridge node only access $1$ red node $\rightarrow 0$
\end{itemize}
Thus, $\phi^{(2)}_{0 \rightarrow 0} = \frac{1}{2n+1}\Big( n-1 \Big) = \frac{n-1}{2n+1}$.

Symmetrically, $\phi^{(2)}_{1 \rightarrow 1} = \frac{n-1}{2n+1}$.

Hence, using Property~\ref{propNB}, $NB^{(2)} = \Big|2\frac{n-1}{2n+1} - 1 \Big|$
$$\boxed{NB^{(2)} = \frac{3}{2n+1}}$$

\paragraph{$3$-hop Structural Bias}
Let us compute $\phi^{(3)}_{0 \rightarrow 0}$.

At $3$-hop:
\begin{itemize}
    \item The blue bridge node accesses $n$ blue nodes $\rightarrow 1$
    \item The $n$ peripheral blue nodes situated around the blue bridge node only access red nodes $\rightarrow 0$
    \item The $n$ peripheral blue nodes situated around the red bridge node access $1$ blue node and $n-1$ red nodes $\rightarrow \frac{1}{n}$
\end{itemize}
Thus, $\phi^{(3)}_{0 \rightarrow 0} = \frac{1}{2n+1}\Big( 1 + 1 \Big) = \frac{2}{2n+1}$.

Symmetrically, $\phi^{(3)}_{1 \rightarrow 1} = \frac{2}{2n+1}$.

Hence, using Property~\ref{propNB}, $NB^{(3)} = \Big|2\frac{2}{2n+1} - 1 \Big|$
$$\boxed{NB^{(3)} = \frac{2n-3}{2n+1}}$$

\subsubsection{Toy Graph (c)}

\paragraph{Assortativity}

The edges proportions are the same as in Toy Graph (a), thus
$\boxed{a = \frac{-1}{4n+1}}$.

\paragraph{$1$-hop Structural Bias}
Let us compute $\phi^{(1)}_{0 \rightarrow 0}$.
\begin{itemize}
    \item The blue bridge node accesses $n$ blue nodes and $1$ red node $\rightarrow \frac{n}{n+1}$
    \item The $n$ peripheral blue nodes situated around the blue bridge node only access $1$ blue node $\rightarrow 1$
    \item The $n$ peripheral blue nodes situated around the red bridge node only access red nodes $\rightarrow 0$
\end{itemize}
Thus, $\phi^{(1)}_{0 \rightarrow 0} = \frac{1}{2n+1}\Big( \frac{n}{n+1} + n \Big) = \frac{n^2 + 2n}{(n+1)(2n+1)}$.

Let us compute $\phi^{(1)}_{1 \rightarrow 1}$.
\begin{itemize}
    \item The red bridge node accesses $n$ red nodes and $1$ blue node $\rightarrow \frac{n}{n+1}$
    \item The $n$ peripheral red nodes situated close to the red bridge node access $1$ red node and $1$ blue node $\rightarrow \frac{1}{2}$
    \item The $n$ peripheral red nodes situated far from the red bridge node only access blue nodes $\rightarrow 0$\end{itemize}
Thus, $\phi^{(1)}_{1 \rightarrow 1} = \frac{1}{2n+1}\Big( \frac{n}{n+1} + \frac{n}{2} \Big) = \frac{n^2 + 3n}{2(n+1)(2n+1)}$.

Hence, using Property~\ref{propNB}, $$NB^{(1)} = \Big|\frac{n^2 + 2n}{(n+1)(2n+1)} + \frac{n^2 + 3n}{2(n+1)(2n+1)} - 1 \Big|$$
$$\boxed{NB^{(1)} = \frac{n^2 - n + 2}{2(n+1)(2n+1)}}$$

\paragraph{$2$-hop Structural Bias}
Let us compute $\phi^{(2)}_{0 \rightarrow 0}$.

At $2$-hop:
\begin{itemize}
    \item The blue bridge node only accesses red nodes $\rightarrow 0$
    \item The $n$ peripheral blue nodes situated around the blue bridge node access $n-1$ blue nodes and $1$ red node $\rightarrow \frac{n-1}{n}$
    \item The $n$ peripheral blue nodes situated around the red bridge node only access red nodes $\rightarrow 0$
\end{itemize}
Thus, $\phi^{(2)}_{0 \rightarrow 0} = \frac{1}{2n+1}\Big( n-1 \Big) = \frac{n-1}{2n+1}$.

Let us compute $\phi^{(2)}_{1 \rightarrow 1}$.

At $2$-hop:
\begin{itemize}
    \item The red bridge node only accesses blue nodes $\rightarrow 0$
    \item The $n$ peripheral red nodes situated close to the red bridge node access $n$ red node and $1$ blue node $\rightarrow \frac{n}{n+1}$
    \item The $n$ peripheral red nodes situated far from the red bridge node only access red nodes $\rightarrow 1$\end{itemize}
Thus, $\phi^{(2)}_{1 \rightarrow 1} = \frac{1}{2n+1}\Big( \frac{n^2}{n+1} + n \Big) = \frac{2n^2 + n}{(n+1)(2n+1)}$.

Hence, using Property~\ref{propNB}, $$NB^{(2)} = \Big|\frac{n-1}{2n+1} + \frac{2n^2 + n}{(n+1)(2n+1)} - 1 \Big|$$
$$\boxed{NB^{(2)} = \frac{n^2 - 2n - 2}{(n+1)(2n+1)}}$$

\paragraph{$3$-hop Structural Bias}
Let us compute $\phi^{(3)}_{0 \rightarrow 0}$.

At $3$-hop:
\begin{itemize}
    \item The blue bridge node only accesses blue nodes $\rightarrow 1$
    \item The $n$ peripheral blue nodes situated around the blue bridge node only access red nodes $\rightarrow 0$
    \item The $n$ peripheral blue nodes situated around the red bridge node access $1$ blue node and $n-1$ red nodes $\rightarrow \frac{1}{n}$
\end{itemize}
Thus, $\phi^{(3)}_{0 \rightarrow 0} = \frac{1}{2n+1}\Big( 1 + 1 \Big) = \frac{2}{2n+1}$.

Let us compute $\phi^{(3)}_{1 \rightarrow 1}$.

At $3$-hop:
\begin{itemize}
    \item The red bridge node only accesses red nodes $\rightarrow 1$
    \item The $n$ peripheral red nodes situated close to the red bridge node only access blue nodes $\rightarrow 0$
    \item The $n$ peripheral red nodes situated far from the red bridge node only access red nodes $\rightarrow 1$\end{itemize}
Thus, $\phi^{(3)}_{1 \rightarrow 1} = \frac{1}{2n+1}\Big( 1 + n \Big) = \frac{n+1}{2n+1}$.

Hence, using Property~\ref{propNB}, $$NB^{(3)} = \Big|\frac{2}{2n+1} + \frac{n+1}{2n+1} - 1 \Big|$$
$$\boxed{NB^{(3)} = \frac{n-2}{2n+1}}$$

\section{Decomposition of DP in terms of node k-hop exposure}

In the paper we state the following property:
\begin{property}
\begin{equation*}
    \Delta DP  = \Big| \sum\limits_{k \in \mathcal{K}} \sum\limits_{v \in \mathcal{V}^{(k)}} \omega^{(k)}(v) \Big( \frac{f^{(k)}_{\mathrm{same}}(v)}{\mathbb{P}(S = S')}  - \frac{f^{(k)}_{\mathrm{diff}}(v)}{\mathbb{P}(S \neq S')} \Big) \Big|
\end{equation*}
where $f^{(k)}_{\mathrm{same}}(v)\triangleq f^{(k)}_{\mathcal{S}(v)}(v)$ and $f^{(k)}_{\mathrm{diff}}(v)\triangleq f^{(k)}_{1 - \mathcal{S}(v)}(v)$ are the within and cross group exposure, respectively; and $\omega^{(k)}(v) = \mathbb{P}(V = v,  \sigma(V, V') = k)$.
\end{property}

\begin{proof}
For clarity, we introduce the following notation.
\begin{itemize}
    \item $B = \{ S(V) = S(V') \}$ denotes the event that the two sampled nodes share the same sensitive attribute.
    \item $A_s = \{ S(V) = s \}$ and $A'_{s'} = \{ S(V') = s' \}$ denote the events that the first and second sampled nodes, respectively, have sensitive attributes $s$ and $s'$.
    \item $D_k = \{ \sigma(V, V') = k \}$ denotes the event that the shortest-path distance between the two sampled nodes is exactly $k$.
\end{itemize}

Let us start from $\Delta DP$ definition.
\begin{equation*}
    \Delta DP = |\mathbb{E}[\hat{Y} | B] - \mathbb{E}[\hat{Y} | \overline{B}]|
\end{equation*}

\paragraph{First term}
First,
\begin{equation*}
    \mathbb{E}[\hat{Y} | B] = \sum\limits_{k \in \mathcal{K}}\mathbb{P}(D_k|B)\mathbb{E}[\hat{Y} | B, D_k]
\end{equation*}
More,
\begin{equation*}
    \mathbb{E}[\hat{Y} | B, D_k] = \sum\limits_{v \in \mathcal{V}^{(k)}}\mathbb{P}(V =v|B, D_k)\mathbb{E}[\hat{Y} | B, D_k, V =v]
\end{equation*}
More, for $v \in \mathcal{V}^{(k)}$,
\begin{equation*}
    \begin{aligned}
    \mathbb{E}[\hat{Y} &| B, D_k, V =v] = \mathbb{E}[\hat{Y} | D_k, V =v, A'_{S(v)}]\\ 
     &= \frac{1}{\mathbb{P}(A'_{S(v)} | D_k, V = v)} \mathbb{E}[\hat{Y} \bbone_{A'_{S(v)}} | D_k, V =v]\\ 
     &= \frac{1}{\mathbb{P}(A'_{S(v)} | D_k, V = v)} f^{(k)}_{S(v)}(v)
    \end{aligned}
\end{equation*}

Thus, 
\begin{equation*}
    \mathbb{E}[\hat{Y} | B] = \sum\limits_{k \in \mathcal{K}}\sum\limits_{v \in \mathcal{V}^{(k)}} (\omega^{(k)}_{\text{intra}}(v)) \times f^{(k)}_{S(v)}(v)
\end{equation*}
with 
\begin{equation*}
    \begin{aligned}
        \omega^{(k)}_{\text{intra}}(v) &= \frac{\mathbb{P}(D_k|B)\mathbb{P}(V =v|B, D_k)}{\mathbb{P}(A'_{S(v)} | D_k, V = v)} \\
        \omega^{(k)}_{\text{intra}}(v) &= \mathbb{P}(D_k|B)\frac{\mathbb{P}(V =v \cap B| D_k)}{\mathbb{P}(B|D_k)}\frac{\mathbb{P}(V = v | D_k)}{\mathbb{P}(A'_{S(v)} \cap V = v | D_k)} \\
        \omega^{(k)}_{\text{intra}}(v) &= \frac{\mathbb{P}(D_k)}{\mathbb{P}(B)}\mathbb{P}(V = v | D_k) \\
        \omega^{(k)}_{\text{intra}}(v) &= \frac{\mathbb{P}(V = v \cap D_k)}{\mathbb{P}(B)} \\
    \end{aligned}
\end{equation*}

\paragraph{Second term}
First,
\begin{equation*}
    \mathbb{E}[\hat{Y} | \overline{B}] = \sum\limits_{k \in \mathcal{K}}\mathbb{P}(D_k|\overline{B})\mathbb{E}[\hat{Y} | \overline{B}, D_k]
\end{equation*}
More,
\begin{equation*}
    \mathbb{E}[\hat{Y} | \overline{B}, D_k] = \sum\limits_{v \in \mathcal{V}^{(k)}}\mathbb{P}(V = v|\overline{B}, D_k)\mathbb{E}[\hat{Y} | \overline{B}, D_k, V =v]
\end{equation*}
More, for $v \in \mathcal{V}^{(k)}$,
\begin{equation*}
    \begin{aligned}
    \mathbb{E}[\hat{Y} &| \overline{B}, D_k, V =v] = \mathbb{E}[\hat{Y} | D_k, V =v, A'_{1 - S(v)}]\\ 
     &= \frac{1}{\mathbb{P}(A'_{1 - S(v)} | D_k, V = v)} \mathbb{E}[\hat{Y} \bbone_{A'_{1 - S(v)}} | D_k, V =v]\\ 
     &= \frac{1}{\mathbb{P}(A'_{1 - S(v)} | D_k, V = v)} f^{(k)}_{1 - S(v)}(v)
    \end{aligned}
\end{equation*}

Thus, 
\begin{equation*}
    \mathbb{E}[\hat{Y} | \Bar{B}] = \sum\limits_{k \in \mathcal{K}}\sum\limits_{v \in \mathcal{V}^{(k)}} (\omega^{(k)}_{\text{inter}}(v)) \times f^{(k)}_{1 - S(v)}(v)
\end{equation*}
following the same steps as for first term, we obtain
\begin{equation*}
    \begin{aligned}
        \omega^{(k)}_{\text{inter}}(v) &= \frac{\mathbb{P}(D_k|\Bar{B})\mathbb{P}(V =v|\Bar{B}, D_k)}{\mathbb{P}(A'_{1 - S(v)} | D_k, V = v)} \\
        \omega^{(k)}_{\text{inter}}(v) &= \frac{\mathbb{P}(V = v \cap D_k)}{\mathbb{P}(\Bar{B})} \\
    \end{aligned}
\end{equation*}

\paragraph{Conclusion}
Finally,
\begin{equation*}
    \Delta DP  = \Big|\sum\limits_{v \in \mathcal{V}^{(k)}}\sum\limits_{k \in \mathcal{K}} \mathbb{P}(V = v \cap D_k)\big( \frac{f^{(k)}_{S(v)}(v)}{\mathbb{P}(B)} - \frac{f^{(k)}_{1 - S(v)}(v)}{1 - \mathbb{P}(B)} \big) \Big|
\end{equation*}

\end{proof}

\section{Methodology: Computing and Controlling Bias and Fairness}

We introduce computational methodologies to efficiently measure and control structural bias $NB^{(k)}$ and predictive fairness $NF^{(k)}$.

In the following, we consider a given graph $G$, and denote by $n$ its number of nodes and by $A \in \{0,1\}^{n \times n}$ its adjacency matrix. 
Also, for a given matrix $M, \mathop{\chi} (M) = (\bbone\{ M_{i,j} > 0 \})_{i, j}$, $\textbf{1}_{p,q}$ is the all-ones matrix of size $(p,q)$, $I_p$ is the identity matrix of size $p$, and $\odot$ denotes the element-wise (Hadamard) product between matrices.

\subsection{Computing Structural Bias and Predictor Fairness}

We first present the steps for computing $k$-hop structural bias $NB^{(k)}$.

\paragraph{Computing $k$-hop matrix}
To begin, we build the $k$-hop matrix $\Tilde{A}^{(k)} \in \{0,1\}^{n \times n}$ defined as $\Tilde{A}^{(k)}_{i, j} = \bbone_{\sigma(v_i, v_j) = k}$, which indicates whether two nodes are $k$ hops apart.
For this, we use the fact that for $k' \in \mathbb{N}^*, A^{k'}_{i, j}$ is the number of walks of length $k'$ linking nodes $v_i$ and $v_j$~\cite{biggs1993algebraic}.
Hence, computing successive powers of adjacency matrix $A$ and removing previously reached nodes allows us to compute $\Tilde{A}^{(k)}$:
\[
\Tilde{A}^{(k)} = \mathop{\chi}\Big(\mathop{\chi}(A^k) - \mathop{\chi}\big(\sum\limits_{i=1}^{k-1}A^i\big)\Big ) \odot \Big(\textbf{1}_{n, n} - I_n\Big).
\]
Here, the first term of the product retrieves the connections occurring exactly at hop $k$ and the second part acts as a mask for removing self-loops.

\paragraph{Computing $\phi_{s \rightarrow s'}^{(k)}$}

For computing sensitive attribute proportions within $k$-hop neighborhoods, we begin by defining sensitive attribute indicator matrices.
\[
\forall s \in \mathbb{S}, M_s = (\bbone\{\mathcal{V}_s[i] = v_j\})_{1 \leq i \leq |\mathcal{V}_s|, 1 \leq j \leq n} \in \{0,1\}^{|\mathcal{V}_s| \times n}.
\]
This allows us to filter $k$-hop neighbors with a specific sensitive attribute.
This way, the vector containing the number of $k$-hop neighbors with sensitive attribute $s'$ for nodes in $\mathcal{V}_{s}$ can be computed as:
\[
\psi_{s \rightarrow s'}^{(k)}(\Tilde{A}^{(k)}) = (M_{s}\Tilde{A}^{(k)}M_{s'}^\top) \times \textbf{1}_{|\mathcal{V}_{s'}|, 1} \in \mathbb{R}^{|\mathcal{V}_s|}.
\]

To assess neighborhood proportions, we compute the normalization vector:
\[
\lambda_{s}(\Tilde{A}^{(k)}) = 
(\mathop{\chi}(\psi^{(k)}_{s \rightarrow s'}(\Tilde{A}^{(k)})))_{s' \in \mathbb{S}}
\times \textbf{1}_{|\mathbb{S}|, 1} \in \mathbb{R}^{|\mathcal{V}_s|}.
\]

Finally, $\phi$ values can be computed:
\[
\phi_{s \rightarrow s'}^{(k)}(\Tilde{A}^{(k)}) = avg_{\lambda_{s} \neq 0}(\psi_{s \rightarrow s'}^{(k)}(\Tilde{A}^{(k)}) / \lambda_{s}(\Tilde{A}^{(k)})) \in \mathbb{R},
\]
where $ / $ is the element wise division for vectors and $avg_{\lambda_{s} \neq 0}$ is the average of vector elements, ignoring undefined values.

Hence, following the described steps, 
\[
NB^{(k)}(A) = \underset{s_1, s_2, s \in \mathbb{S}}{max} |\phi_{s_1 \rightarrow s}^{(k)}(\Tilde{A}^{(k)}) - \phi_{s_2 \rightarrow s}^{(k)}(\Tilde{A}^{(k)})|.
\]

\noindent\textbf{Remark:} To ensure the subdifferentiability of the framework, we recommend replacing the function $\mathop{\chi}$ with a sigmoid and the $\max$ operation with $LogSumExp$, which preserves subgradients and allows the use of the Adam optimizer.

\paragraph{Evaluating models with $NF^{(k)}$}

The previous process can be used to compute the $k$-hop fairness of a link predictor $h$.
For this, we compute the edge probability matrix $P = (h(v_i, v_j))_{1 \leq i, j \leq |\mathcal{V}|}$, and use $\Tilde{A}^{(k)}$ as a mask matrix:
\[
NF^{(k)}(P) = \underset{s_1, s_2, s \in \mathbb{S}}{max} |\phi_{s_1 \rightarrow s}^{(k)}(P \odot \Tilde{A}^{(k)}) - \phi_{s_2 \rightarrow s}^{(k)}(P \odot \Tilde{A}^{(k)})|
\]
This way, instead of relying on the sensitive attribute proportions within $k$-hop neighborhoods, $NF^{(k)}(P)$ considers existence probabilities, allowing us to evaluate the predictor's tendency to cluster sensitive groups.


\paragraph{Remark on Weighted Directed Graphs}

In this paper, we emphasize that the developed framework generalizes to weighted directed graphs. Indeed, in this case, the adjacency matrix contains the edge weights, and the property from~\cite{biggs1993algebraic} can be reformulated as follows: for all \(k' > 1\), \(A^{k'}_{i,j}\) represents the sum of weights of all walks of length \(k'\) connecting nodes \(v_i\) and \(v_j\), where the weight of a walk is defined as the product of the weights of the edges along that walk.  

Consequently, computing \(\phi_{s \rightarrow s'}^{(k)}(\tilde{A}^{(k)})\) corresponds to calculating the average proportion of information flow originating from nodes with sensitive attribute \(s'\) within the total flow received by nodes with sensitive attribute \(s\), which aligns with the conceptual framework discussed throughout this paper.

\section{Computation Challenges}
\label{sec:challenge}

\begin{table}[!ht]
    \centering
    \renewcommand{\arraystretch}{1.2}
    \setlength{\tabcolsep}{10pt}
    \begin{tabular}{lccccc}
    \toprule
    \textbf{Metric} & \textit{Polblogs} & \textit{Facebook} & \textit{Pokec} & \textit{Citeseer} & \textit{Synthetic} \\
    \midrule
    \# Nodes & 1,222 & 1,034 & 2,779 & 3,264 & 950 \\
    \# Edges & 16,714 & 26,749 & 22,650 & 4,536 & 9,496 \\
    \% Sensitive nodes & 48 & 15 & 34 & 33 & 32 / 16 \\
    Density & 0.022 & 0.050 & 0.006 & 0.001 & 0.021 \\
    \midrule
    $NB^{(1)}$ Compute Time (s) & 0.088 & 0.053 & 0.797 & 1.149 & 0.071 \\
    $NB^{(2)}$ Compute Time (s) & 0.130 & 0.085 & 0.872 & 1.148 & 0.093 \\
    $NB^{(3)}$ Compute Time (s) & 0.277 & 0.174 & 1.316 & 1.173 & 0.167 \\
    $NB^{(4)}$ Compute Time (s) & 0.380 & 0.280 & 1.902 & 1.235 & 0.203 \\
    $NB^{(5)}$ Compute Time (s) & 0.420 & 0.344 & -- & 1.246 & -- \\
    $NB^{(6)}$ Compute Time (s) & 0.561 & 0.580 & -- & 1.363 & -- \\
    \bottomrule
    \end{tabular}
    \caption{Compute time (in seconds) for different $NB^{(k)}$ across datasets and for $k$ in meaningful hop set $\mathcal{K}$, with recursive method. Compute time are averaged over 10 iterations.}
    \label{tab:nb_compute_time}
\end{table}

This section outlines the challenges involved in computing the metrics used in the experiments presented in this paper.

\paragraph{Proof of the $\Tilde{A}^{(k)}$ formula}
In the paper, $k$-hop matrix $\Tilde{A}^{(k)} \in \{0,1\}^{n \times n}$ is defined as $\Tilde{A}^{(k)}_{i, j} = \bbone\{\sigma(v_i, v_j) = k\}$, which indicates whether two nodes are $k$ hops apart.

To this end, we state the following formula:
\[
\Tilde{A}^{(k)} = \mathop{\chi}\Big(\mathop{\chi}(A^k) - \mathop{\chi}\big(\sum\limits_{i=1}^{k-1}A^i\big)\Big ) \odot \Big(\textbf{1}_{n, n} - I_n\Big),
\]
where for a given graph, $A \in \{0,1\}^{n \times n}$ is the adjacency matrix, $n$ is the number of nodes, for a matrix $M$, $\mathop{\chi}(M) = (\bbone\{ M_{i,j} > 0 \})_{i, j}$, $I_p$ is the identity matrix of size $p$, and $\odot$ denotes the element-wise (Hadamard) product between matrices.

\begin{proof}

The proof relies on the fact that $\forall k' > 1, A^{k'}_{i, j}$ is the number of walks of length $k'$ linking nodes $v_i$ and $v_j$~\cite{biggs1993algebraic}.

Suppose that for specific $i \neq j$, $\sigma(v_i, v_j) = k$.
\begin{equation*}
    \begin{aligned}
        &\Leftrightarrow \forall m \leq k-1, A^m_{i, j} = 0 \land A^{k}_{i, j} > 1 \land v_i \neq v_j \\
        &\Leftrightarrow \sum\limits_{m=1}^{k-1} A^m_{i, j} = 0 \land A^{k}_{i, j} > 1\land v_i \neq v_j\\
        &\Leftrightarrow \mathop{\chi}\Big(\mathop{\chi}(A^k) - \mathop{\chi}\big(\sum\limits_{i=1}^{k-1}A^i\big)\Big )_{i, j} = 1 \land \Big(\textbf{1}_{n, n} - I_n\Big)_{i, j} = 1\\
        &\Leftrightarrow \Big(\mathop{\chi}\Big(\mathop{\chi}(A^k) - \mathop{\chi}\big(\sum\limits_{i=1}^{k-1}A^i\big)\Big ) \odot \Big(\textbf{1}_{n, n} - I_n\Big)\Big)_{i, j} = 1
    \end{aligned}
\end{equation*}
Hence the result.
    
\end{proof}

\paragraph{Recursive Computation}

Alternatively, we may prefer to compute \(\Tilde{A}^{(k)}\) using its recursive formulation:

\small
\begin{equation*}
  \begin{aligned}
    &\Tilde{A}^{(1)} = A, \quad \\
&\Tilde{A}^{(k)} = \chi\left(\chi\left(\Tilde{A}^{(k-1)} \times A\right) - \chi\left(\sum_{i=1}^{k-1} \Tilde{A}^{(i)}\right)\right) \odot \left(\mathbf{1}_{n, n} - I_n\right),
\end{aligned}  
\end{equation*}

The proof of correctness follows the same reasoning as the direct formulation presented above.

\paragraph{Complexity Analysis}

Using dense matrix representations, computing \(A^k\) has a time complexity of \(\mathcal{O}(k n^3)\). 

We focus here on the recursive formulation of the computation of \(\Tilde{A}^{(k)}\).
With sparse matrix representations, the product of two square matrices \(B \times C\) of size \(n \times n\) has complexity \(\mathcal{O}(\text{nnz}(B) \cdot d_C)\), where \(\text{nnz}(B)\) denotes the number of non-zero elements in \(B\) and \(d_C\) is the average number of non-zero entries per row in \(C\).

Considering the recursive formula for \(\Tilde{A}^{(k)}\), the dominant cost lies in the computation of \(\Tilde{A}^{(k-1)} \times A\).

In the worst case, the number of non-zero elements in \(\Tilde{A}^{(k)}\) is \(\mathcal{O}(n^2)\), corresponding to the scenario where all pairs of nodes are at distance \(k\). Given that the average number of non-zero elements per row in \(A\) is \(d = m / n\), where \(m\) is the number of edges, the complexity of the matrix multiplication \(\Tilde{A}^{(k-1)} \times A\) is thus \(\mathcal{O}(n^2 \cdot d) = \mathcal{O}(n m)\).

Therefore, the total complexity for computing \(\Tilde{A}^{(k)}\) for \(k\) iterations is:
\[
\mathcal{O}(k \, n \, m).
\]

Under the common sparsity assumption $m = \mathcal{O}(n)$, this gives us the following result.
\begin{property}
For a graph $G$, $\alpha \in \mathbb{R}$, and $k \in \mathcal{K}$, the cost of one iteration in the minimization of $\ell_{\text{pre}}^{(k)}$ is $\mathcal{O}(k n^{2})$.
\end{property}

Alternatively, when differentiability is not required, computing these quantities only involves a breadth-first search (BFS) from each node, which has worst-case complexity 
\[
\mathcal{O}(n \cdot (n+m)).
\] 
All other operations have lower complexity, thus, under the sparsity assumption, this yields an overall complexity of 
\[
\mathcal{O}(n^2).
\]
Hence,
\begin{property}
For a graph $G$, for a link predictor $h$ and for $k \in \mathcal{K}$,
computing $NF^{(k)}(h)$ and $NB^{(k)}(G)$ is $\mathcal{O}(n^2)$.
\end{property}

\paragraph{Compute Time}

Table~\ref{tab:nb_compute_time} reports several characteristics of the graphs used in the experiments, along with the execution time required to compute the $k$-hop biases $NB^{(k)}$ for different hops.

All computations were performed using a single CPU, according to the infrastructure described in the following section.

Note that the matrices $\Tilde{A}^{(k)}$ can be stored in memory to avoid redundant computations throughout the rest of the framework.

\paragraph{Computing Infrastructure}
\label{infra}
All experiments were run on a MacBook with an Apple M3 Pro chip, 11-core CPU, and 18GB of unified memory. No GPU was used. All code was executed on CPU only. The software environment was based on Python 3.11.6. Versions of all relevant packages are specified in the provided code.

\cleardoublepage
\section{Additional results}
\subsection{Relationship Between k-hop Structural Bias and Fairness (RQ1)}
We present here the result for the \textit{Synthetic} dataset.  
The figures highlight the particularly strong structural biases \(NB^{(k)}\) embedded in this graph, with values above $0.6$ for hops $k \in \{1, 2, 4 \}$  .  

\begin{figure}[htbp]
    \centering
    \includegraphics[width=.4\linewidth]{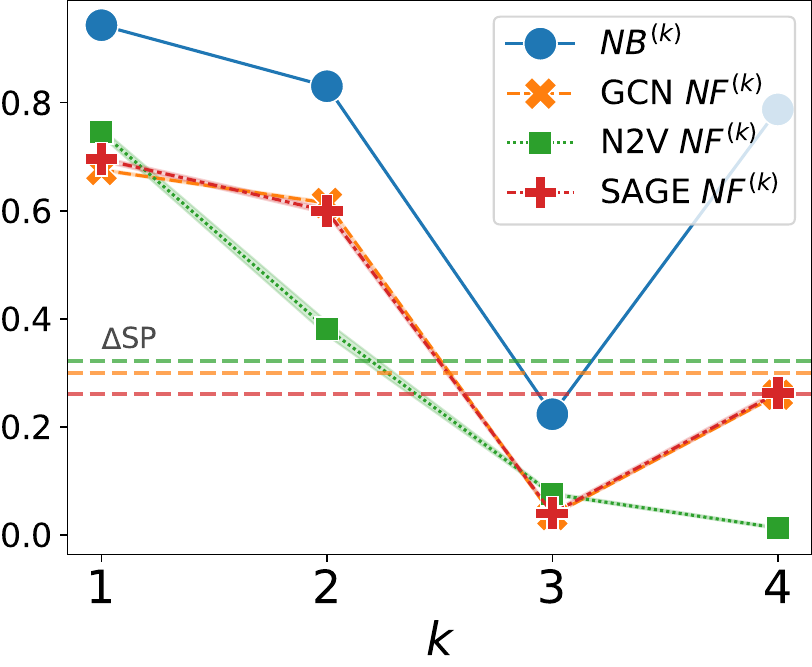}
    \caption{Comparison between graph structural bias and classical LP models $k$-hop fairness across meaningful hops for \textbf{Synthetic} dataset.}
\end{figure}

\subsection{Dependencies Across Different Hops (RQ2)}

We present here the result for the \textit{Synthetic} dataset.  
We observe that all when targeting $NB^{(1)}$ and $NB^{(2)}$ biases are generally positively correlated under an edge adjunction process, except for $NB^{(3)}$.

\begin{figure}[H]
    \centering
    \includegraphics[width=.4\linewidth]{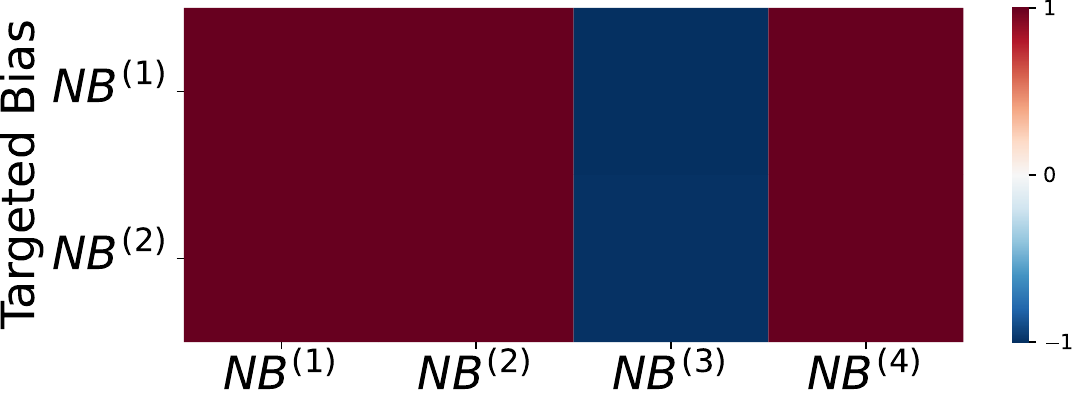}
    \caption{Influence of $NB^{(k)}$ minimization via edge addition on structural bias at other hops for \textbf{Synthetic} dataset. }
\end{figure}

\begin{figure}[ht]
    \centering
    \begin{subfigure}{0.9\linewidth}
        \centering \includegraphics[width=.4\linewidth]{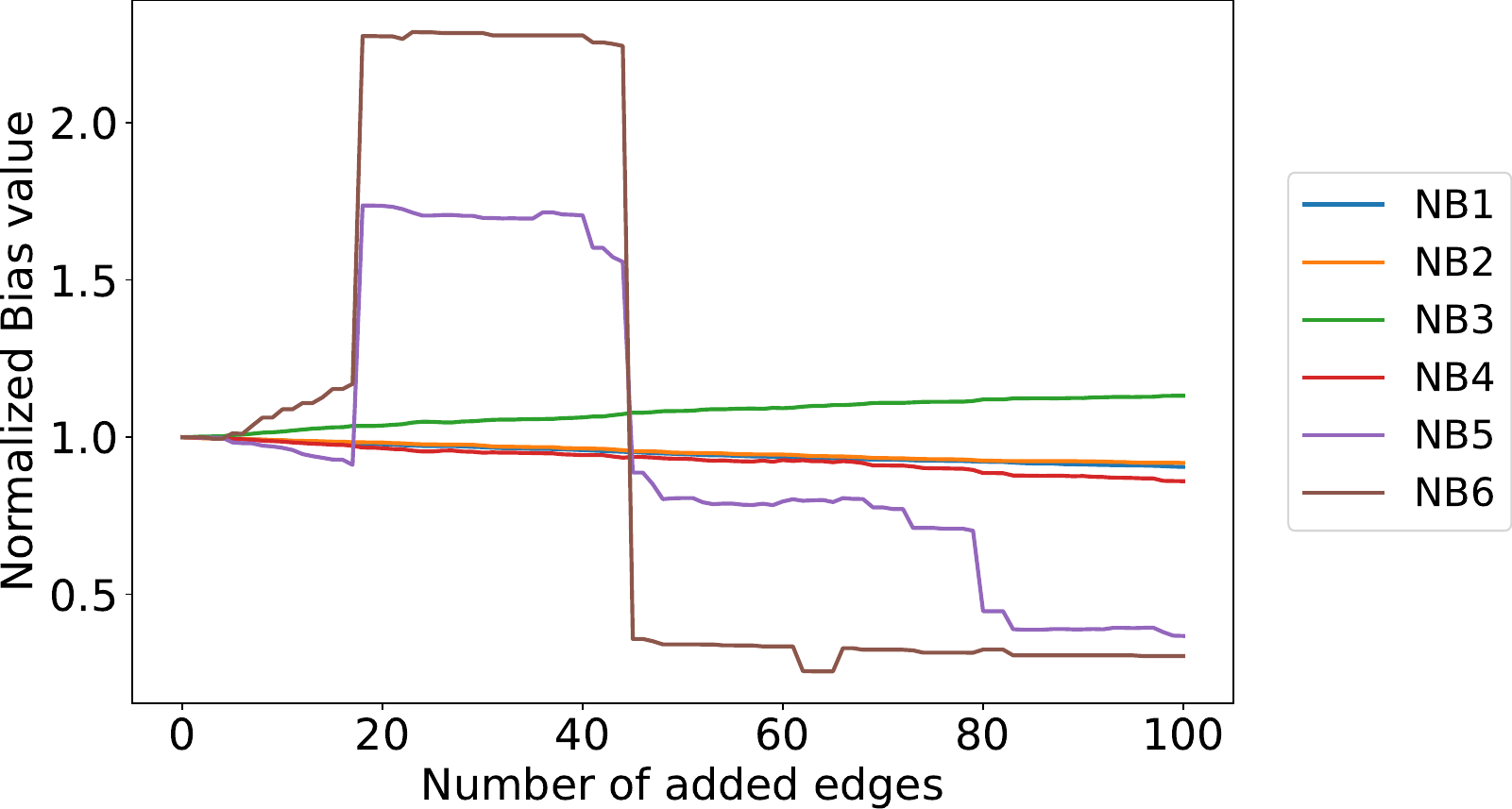}
        \caption{Polblogs - $k=1$}
    \end{subfigure}
    \begin{subfigure}{0.9\linewidth}
        \centering \includegraphics[width=.4\linewidth]{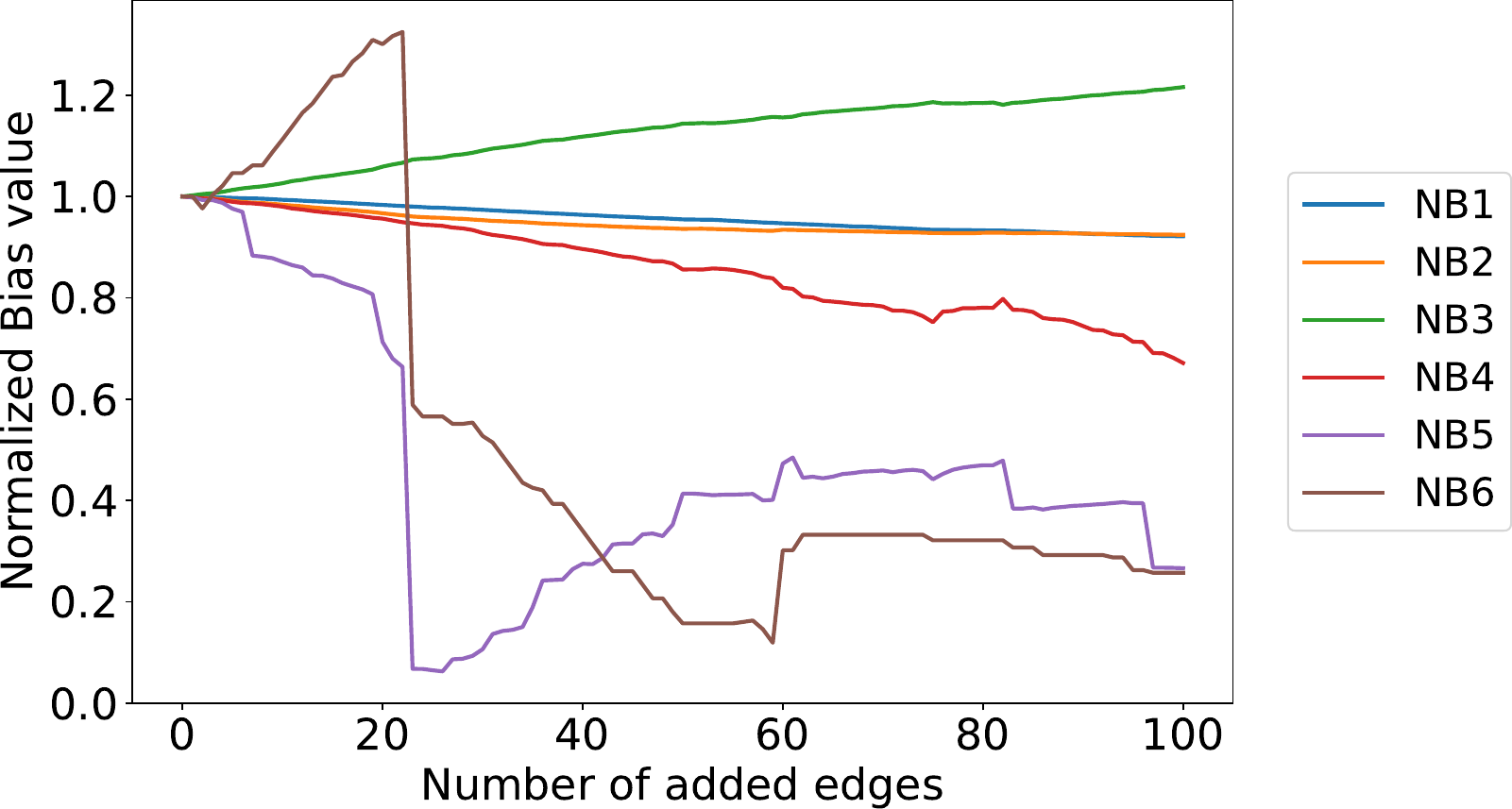}
        \caption{Polblogs - $k=2$}
    \end{subfigure}
    \begin{subfigure}{0.9\linewidth}
        \centering \includegraphics[width=.4\linewidth]{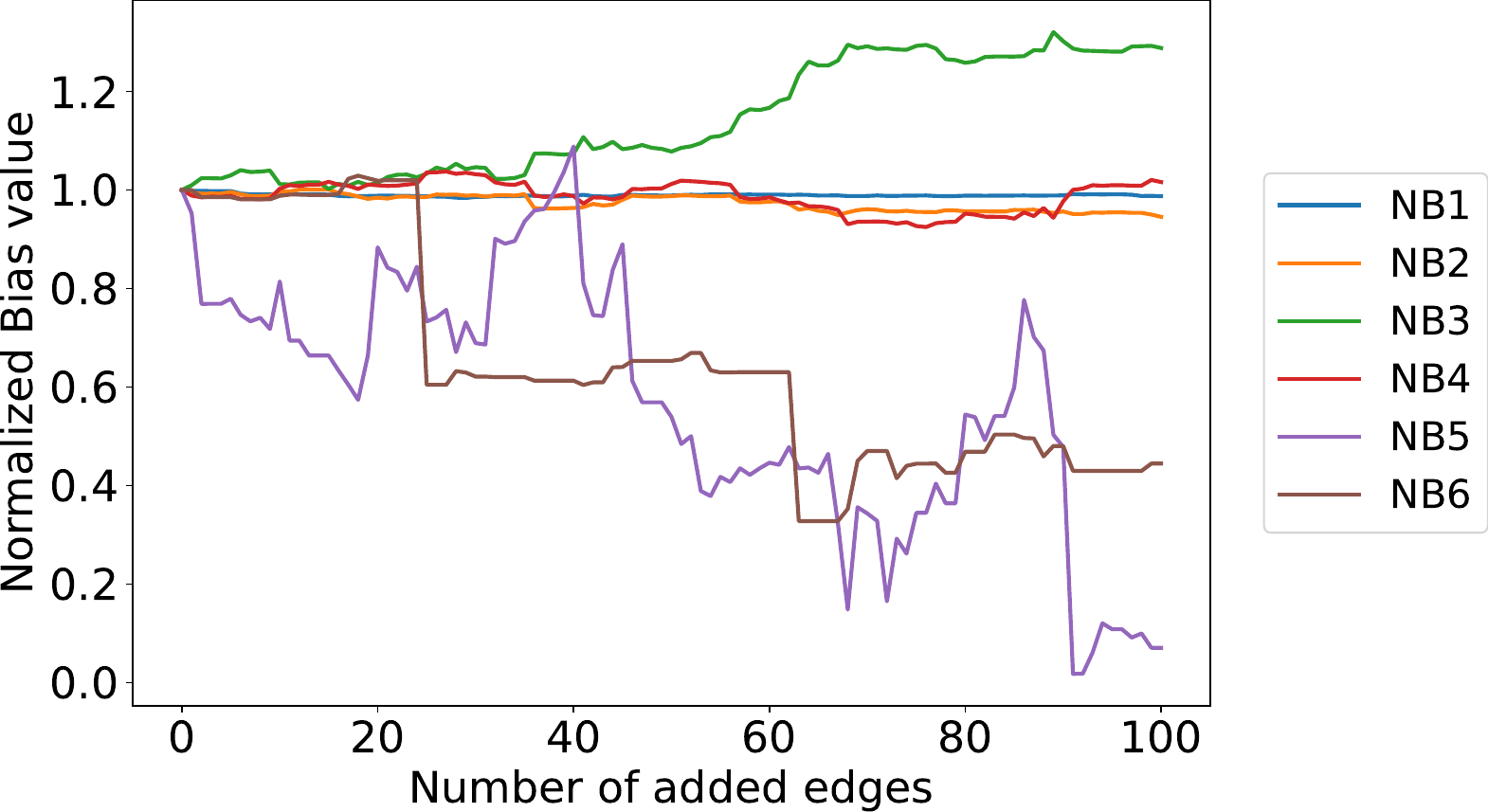}
        \caption{Facebook - $k=2$}
    \end{subfigure}
    \begin{subfigure}{0.9\linewidth}
        \centering \includegraphics[width=.4\linewidth]{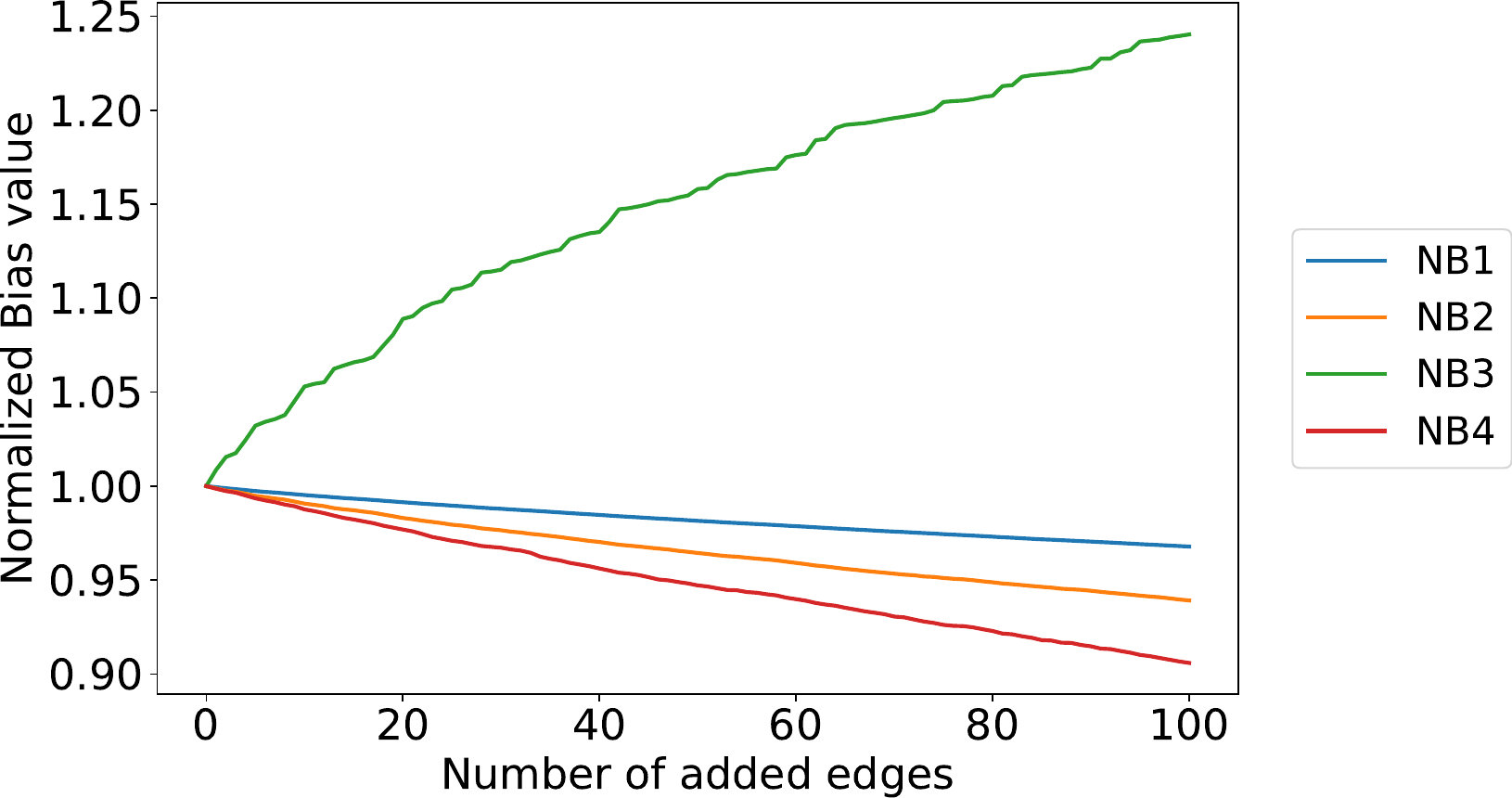}
        \caption{Synthetic - $k=1$}
    \end{subfigure}
    \begin{subfigure}{0.9\linewidth}
        \centering \includegraphics[width=.4\linewidth]{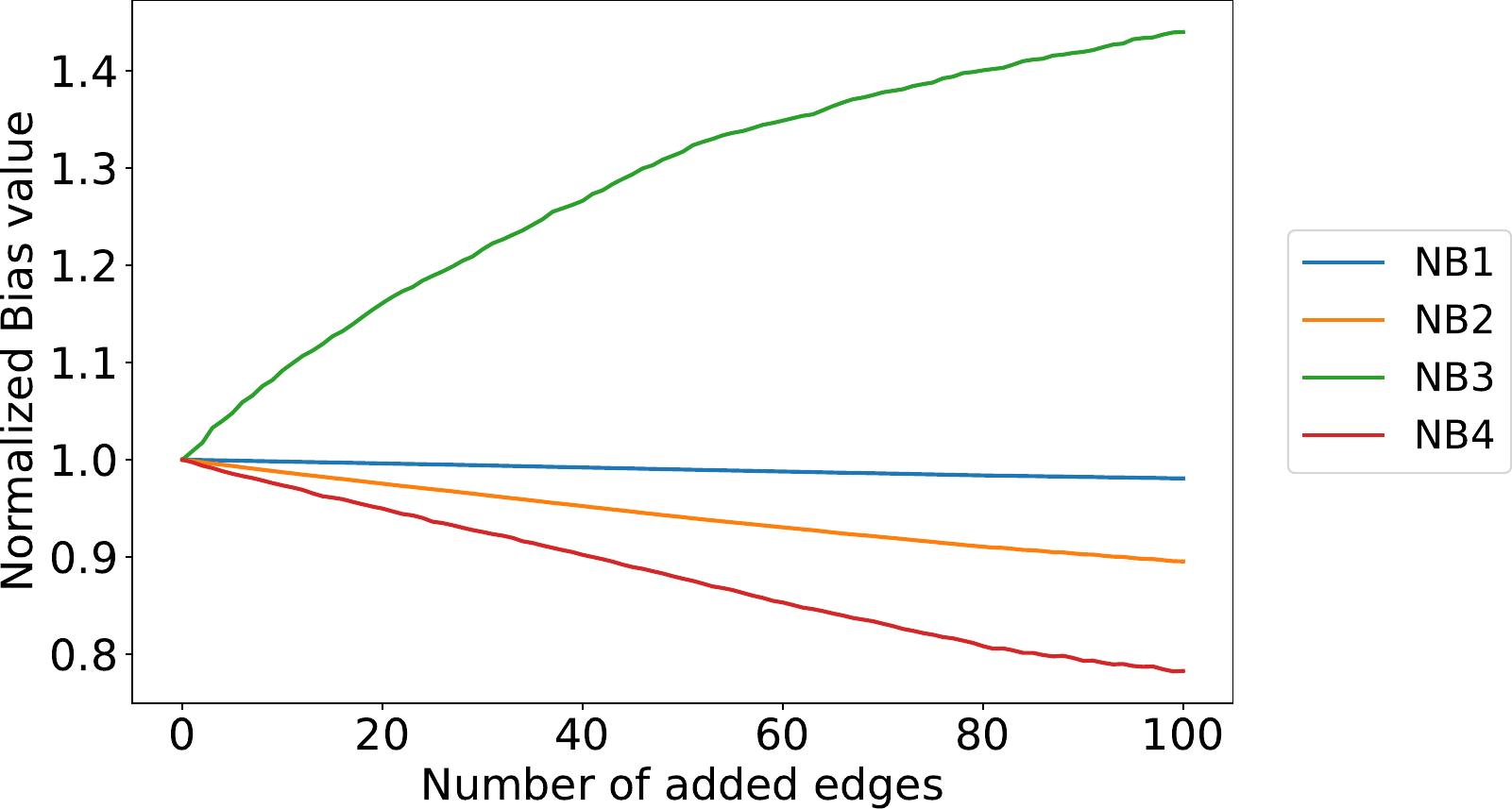}
        \caption{Synthetic - $k=2$}
    \end{subfigure}
    \caption{Evolution of structural bias across edge addition rewiring for several values of targeted $k$ for \textbf{Polblogs}, \textbf{Facebook} and \textbf{Synthetic} datasets. Bias values are normalized (divided by original value) to increase readability.}
\end{figure}

\begin{figure}[t]
    \centering
    \begin{subfigure}{0.9\linewidth}
        \centering \includegraphics[width=.4\linewidth]{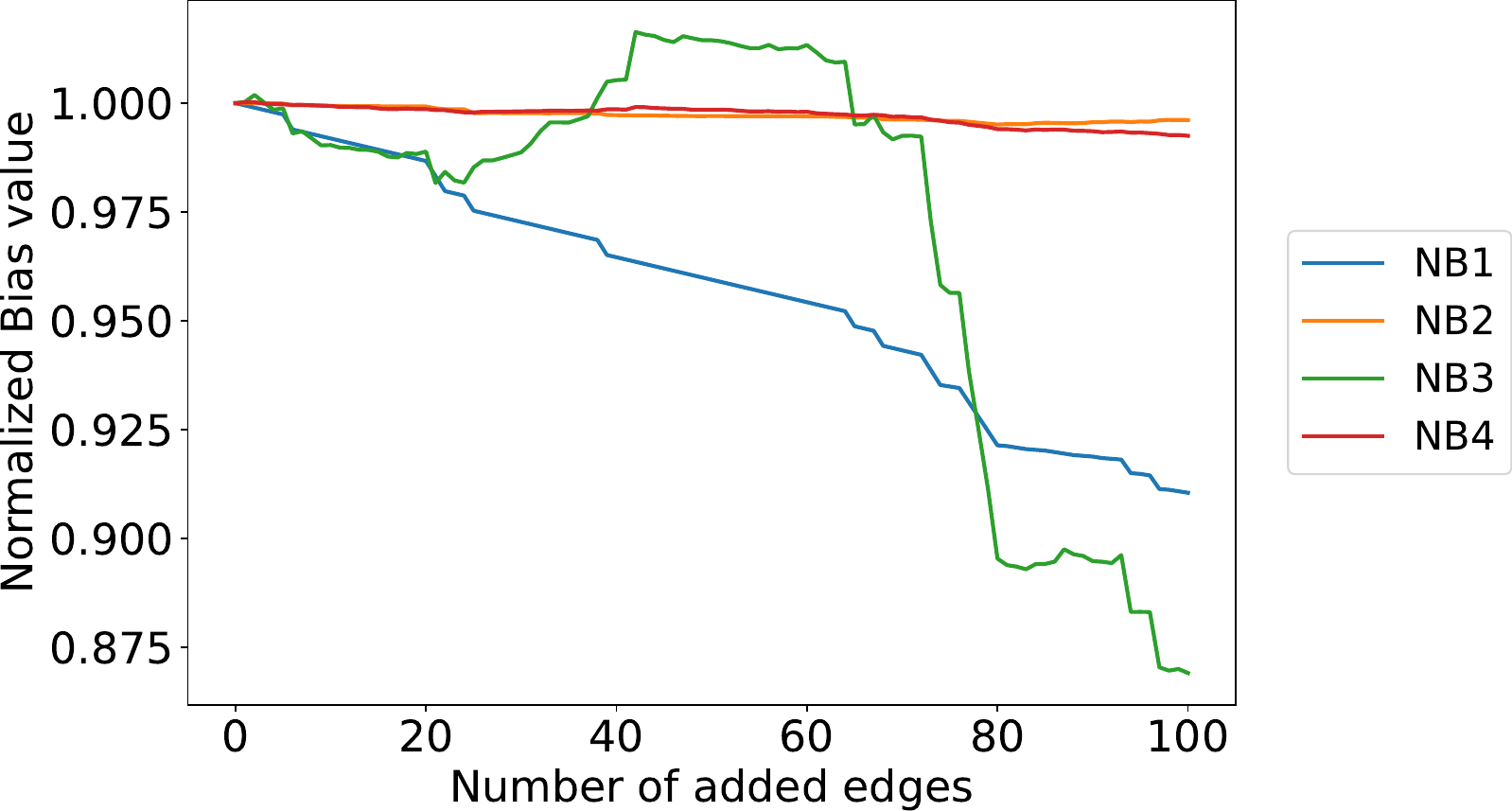}
        \caption{Pokec - $k=1$}
    \end{subfigure}
    \begin{subfigure}{0.9\linewidth}
        \centering \includegraphics[width=.4\linewidth]{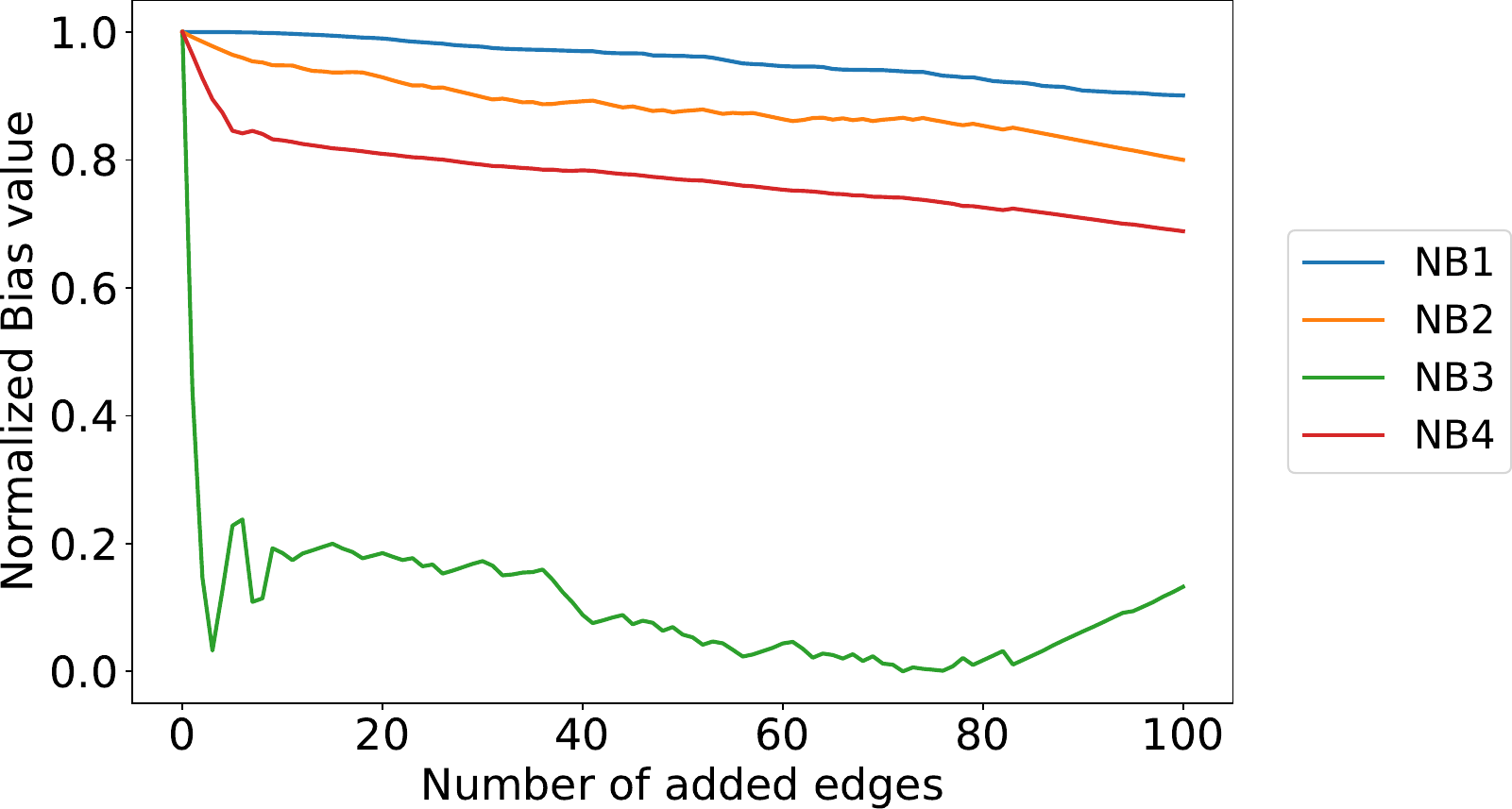}
        \caption{Pokec - $k=2$}
    \end{subfigure}
    \begin{subfigure}{0.9\linewidth}
        \centering \includegraphics[width=.4\linewidth]{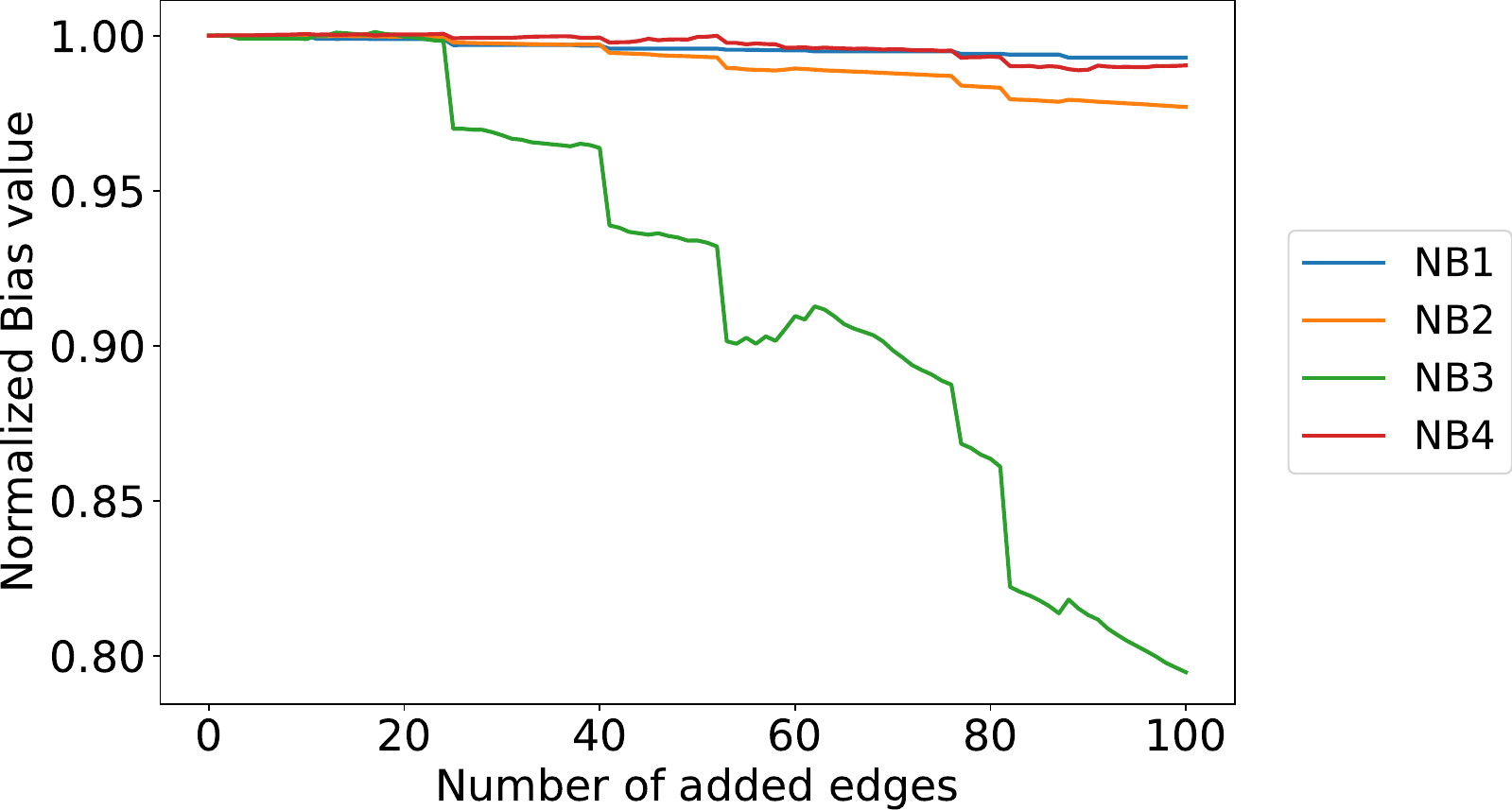}
        \caption{Pokec - $k=4$}
    \end{subfigure}
    \caption{Evolution of structural bias across edge addition rewiring for several values of targeted $k$ for \textbf{Pokec} dataset. Bias values are normalized (divided by original value) to increase readability.}
\end{figure}

\begin{figure}[t]
    \centering
    \begin{subfigure}{0.9\linewidth}
        \centering \includegraphics[width=.4\linewidth]{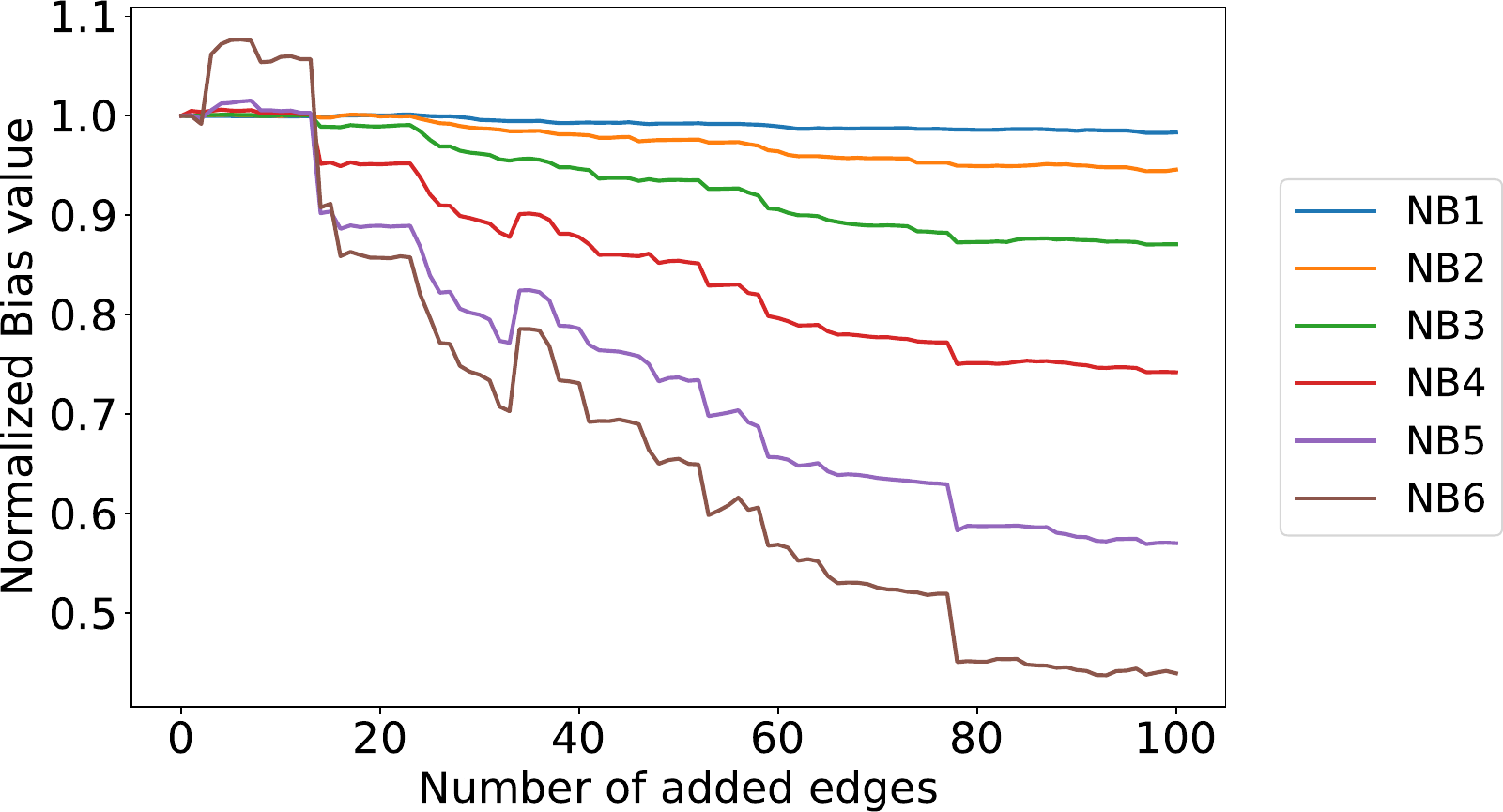}
        \caption{Citeseer - $k=1$}
    \end{subfigure}
    \begin{subfigure}{0.9\linewidth}
        \centering \includegraphics[width=.4\linewidth]{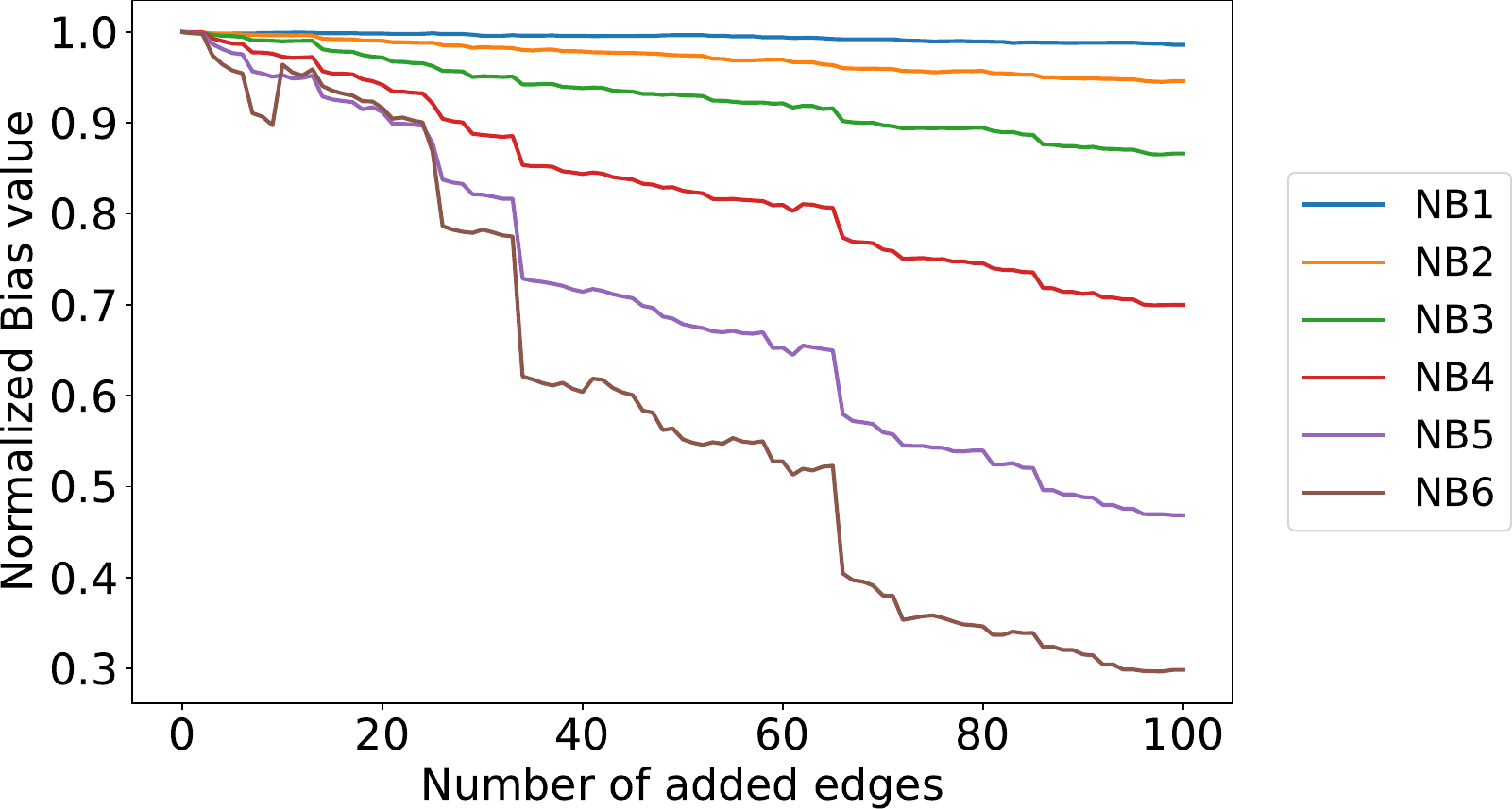}
        \caption{Citeseer - $k=2$}
    \end{subfigure}
    \begin{subfigure}{0.9\linewidth}
        \centering \includegraphics[width=.4\linewidth]{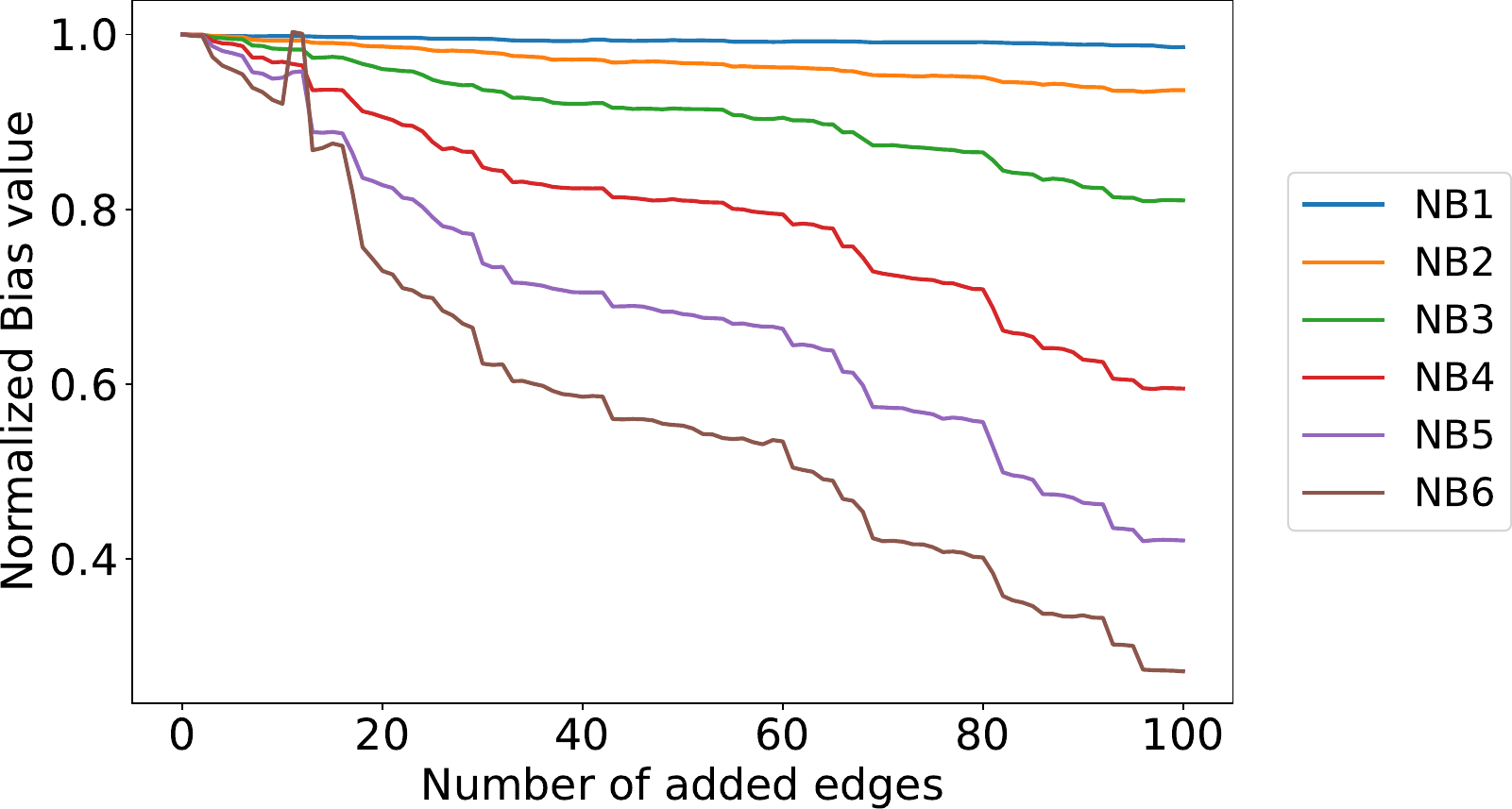}
        \caption{Citeseer - $k=3$}
    \end{subfigure}
    \caption{Evolution of structural bias across edge addition rewiring for several values of targeted $k$ for \textbf{Citeseer} dataset. Bias values are normalized (divided by original value) to increase readability.}
\end{figure}

\clearpage
\subsection{Post-processing for k-hop Fair LP (RQ3)}

In Table~\ref{tab:rq3}, we report the impact of our post-processing method on dyadic metrics. Overall, no consistent trend is observed across datasets or hop orders. This highlights that improvements in $NF^{(k)}$ are not trivially reflected in dyadic metrics, reinforcing the complementarity of our $k$-hop perspective. In particular, on \textit{Facebook}, the absence of noticeable changes in $\Delta DP$ and $\Delta EO$ is consistent with the low initial level of bias.

\begin{table}[ht]
\centering
\renewcommand{\arraystretch}{1.05}
\begin{tabular}{l c p{1.3cm} p{1.3cm} p{1.3cm} p{1.3cm}}
\toprule
Dataset & $k$ & $\Delta NF^{(k)}$ & $\Delta AUC$ & $\Delta DP$ & $\Delta EO$ \\
\midrule
\textit{Polblogs} & $1$ & \scriptsize -0.46 \tiny ± 0.01 & \scriptsize 0.00 \tiny ± 0.00 & \scriptsize -0.00 \tiny ± 0.00 & \scriptsize -0.00 \tiny ± 0.00 \\
 & $2$ & \scriptsize -0.25 \tiny ± 0.01 & \scriptsize -0.28 \tiny ± 0.01 & \scriptsize -0.07 \tiny ± 0.03 & \scriptsize 0.31 \tiny ± 0.04 \\
 & $4$ & \scriptsize -0.17 \tiny ± 0.01 & \scriptsize -0.00 \tiny ± 0.00 & \scriptsize 0.05 \tiny ± 0.00 & \scriptsize 0.00 \tiny ± 0.00 \\
\midrule
\textit{Facebook} & $1$ & \scriptsize -0.07 \tiny ± 0.00 & \scriptsize 0.00 \tiny ± 0.00 & \scriptsize 0.00 \tiny ± 0.00 & \scriptsize -0.00 \tiny ± 0.00 \\
 & $2$ & \scriptsize -0.03 \tiny ± 0.00 & \scriptsize -0.00 \tiny ± 0.00 & \scriptsize 0.02 \tiny ± 0.00 & \scriptsize 0.04 \tiny ± 0.00 \\
 & $6$ & \scriptsize -0.01 \tiny ± 0.01 & \scriptsize 0.00 \tiny ± 0.00 & \scriptsize 0.00 \tiny ± 0.00 & \scriptsize -0.00 \tiny ± 0.00 \\
\midrule
\textit{Pokec} & $1$ & \scriptsize -0.28 \tiny ± 0.01 & \scriptsize 0.00 \tiny ± 0.00 & \scriptsize 0.00 \tiny ± 0.00 & \scriptsize -0.00 \tiny ± 0.00 \\
 & $2$ & \scriptsize -0.18 \tiny ± 0.01 & \scriptsize -0.07 \tiny ± 0.01 & \scriptsize -0.07 \tiny ± 0.00 & \scriptsize -0.03 \tiny ± 0.04 \\
 & $4$ & \scriptsize -0.01 \tiny ± 0.00 & \scriptsize 0.00 \tiny ± 0.00 & \scriptsize 0.01 \tiny ± 0.00 & \scriptsize 0.00 \tiny ± 0.00 \\
\midrule
\textit{Citeseer} & $1$ & \scriptsize -0.43 \tiny ± 0.01 & \scriptsize 0.00 \tiny ± 0.00 & \scriptsize -0.00 \tiny ± 0.00 & \scriptsize 0.00 \tiny ± 0.00 \\
 & $2$ & \scriptsize -0.26 \tiny ± 0.01 & \scriptsize -0.13 \tiny ± 0.01 & \scriptsize 0.01 \tiny ± 0.03 & \scriptsize 0.14 \tiny ± 0.06 \\
 & $3$ & \scriptsize -0.14 \tiny ± 0.01 & \scriptsize -0.03 \tiny ± 0.00 & \scriptsize -0.04 \tiny ± 0.01 & \scriptsize 0.03 \tiny ± 0.06 \\
\midrule
\textit{Synthetic} & $1$ & \scriptsize -0.48 \tiny ± 0.06 & \scriptsize 0.00 \tiny ± 0.00 & \scriptsize 0.00 \tiny ± 0.00 & \scriptsize -0.00 \tiny ± 0.00 \\
 & $2$ & \scriptsize -0.45 \tiny ± 0.02 & \scriptsize -0.25 \tiny ± 0.01 & \scriptsize -0.29 \tiny ± 0.01 & \scriptsize -0.25 \tiny ± 0.02 \\
 & $4$ & \scriptsize -0.25 \tiny ± 0.00 & \scriptsize 0.00 \tiny ± 0.00 & \scriptsize 0.18 \tiny ± 0.00 & \scriptsize 0.19 \tiny ± 0.01 \\
\bottomrule
\end{tabular}
\caption{Effects of our post-processing method on dyadic fairness metrics DP, EO, performance AUC and targeted $k$-hop metrics on GCN base model (average ± standard deviation). }
\label{tab:rq3}
\end{table}

We now present the extended results for \textbf{RQ3}, including AUC vs. $NF^{(k)}$ plots and $\alpha$ sensitivity analysis plots (Figures~\ref{fig:rq3Pol}--\ref{fig:rq3Sage}), for the five datasets used in this paper.
Also, in Table~\ref{table:full1}--\ref{table:full2}, we present numerical evaluations of the baselines and proposed models over all the datasets and metrics, including the standard deviations computed over 10 iterations.

The extended results confirm the conclusions previously reported: our models achieve the best fairness scores with respect to the targeted hops, at the cost of a certain trade-off with AUC, depending on the targeted hop.  
The standard deviations further support the robustness of these results.

\clearpage

\begin{figure}[htbp]
    \centering
    \begin{subfigure}{0.9\linewidth}
        \centering \includegraphics[width=.4\linewidth]{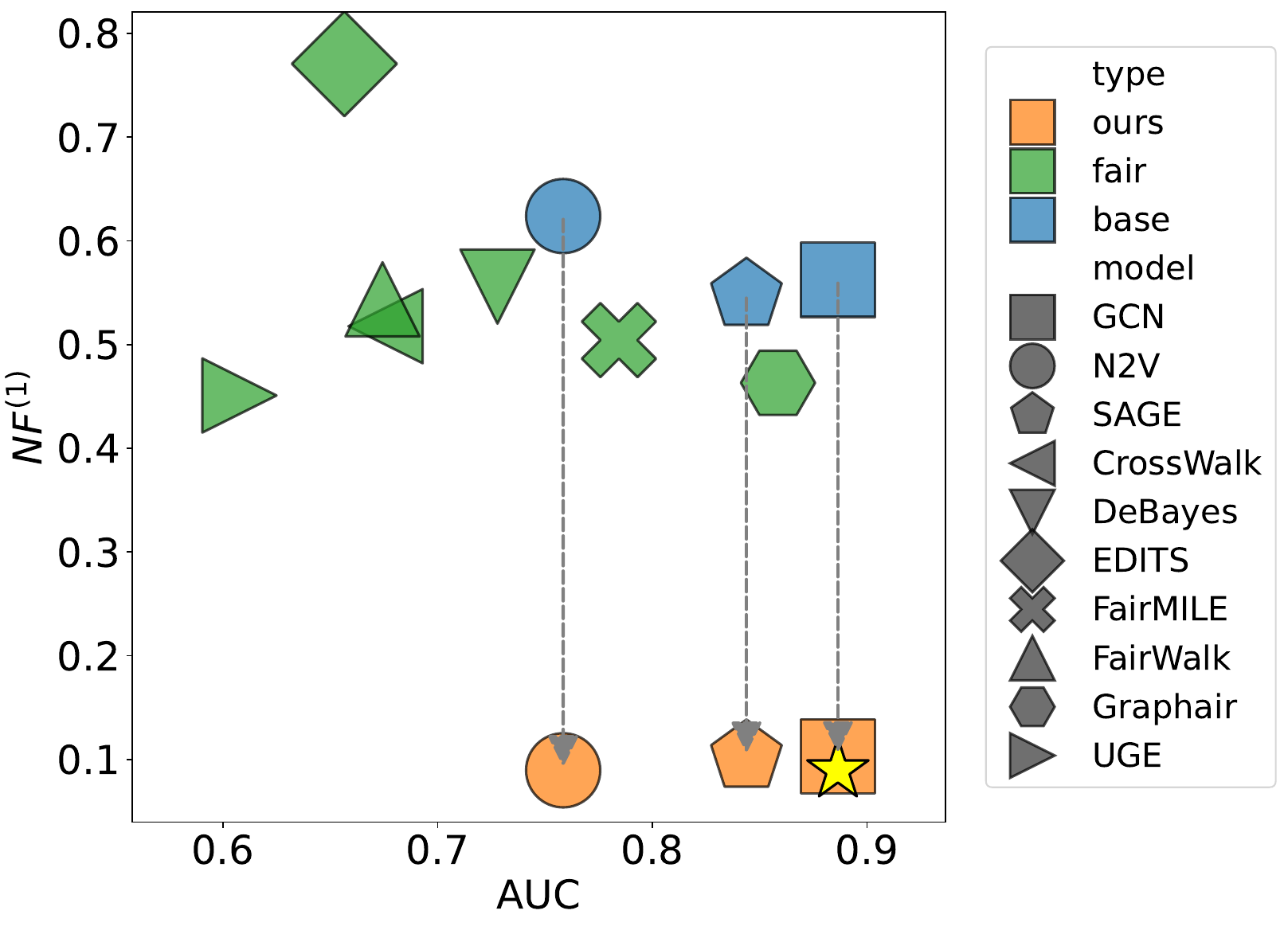}
        \caption{Polblogs - $k=1$}
    \end{subfigure}
    \begin{subfigure}{0.9\linewidth}
        \centering \includegraphics[width=.4\linewidth]{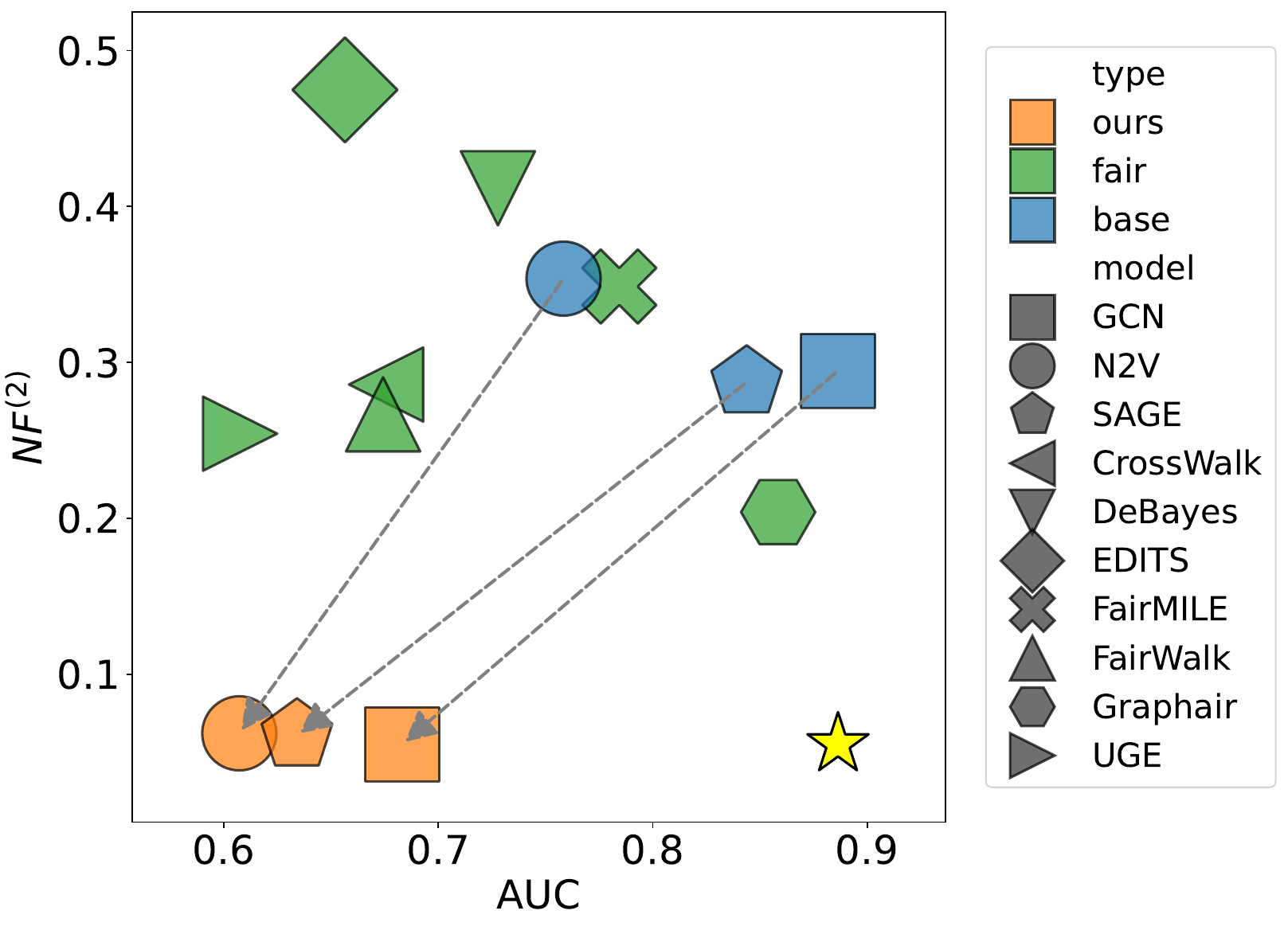}
        \caption{Polblogs - $k=2$}
    \end{subfigure}
    \begin{subfigure}{0.9\linewidth}
        \centering \includegraphics[width=.4\linewidth]{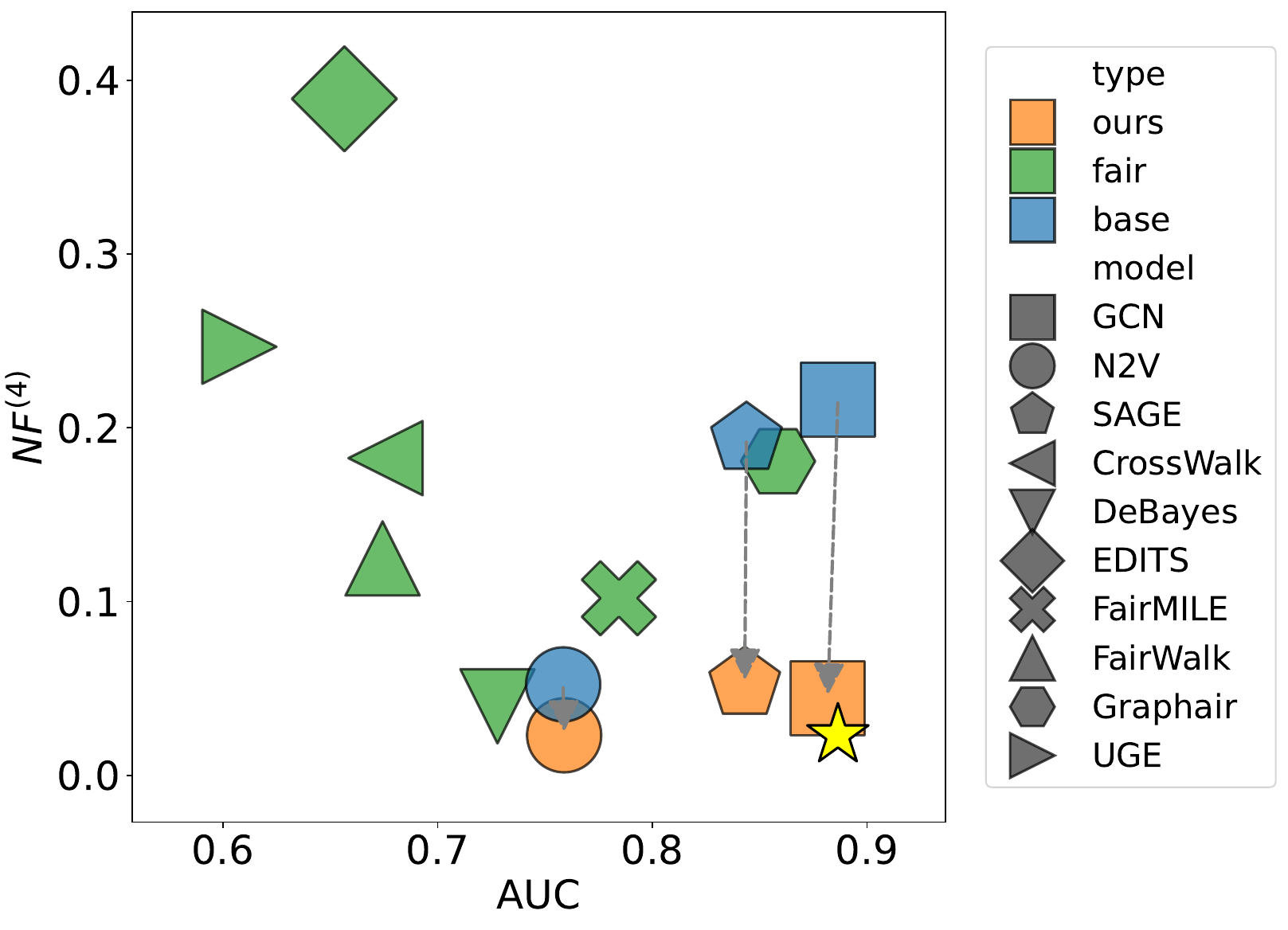}
        \caption{Polblogs - $k=4$}
    \end{subfigure}
    \caption{AUC vs $NF^{(k)}$ plot for base and fairness-aware baselines and for targeted $k$ values. Our post-processing results are shown through arrows from base models. The star marker denotes the hypothetical model achieving the best AUC and the best $NF^{(k)}$ across baselines. Only baselines with AUC $> 0.55$ are displayed.}
    \label{fig:rq3Pol}
\end{figure}

\begin{figure}[htbp]
    \centering
    \begin{subfigure}{0.9\linewidth}
        \centering \includegraphics[width=.4\linewidth]{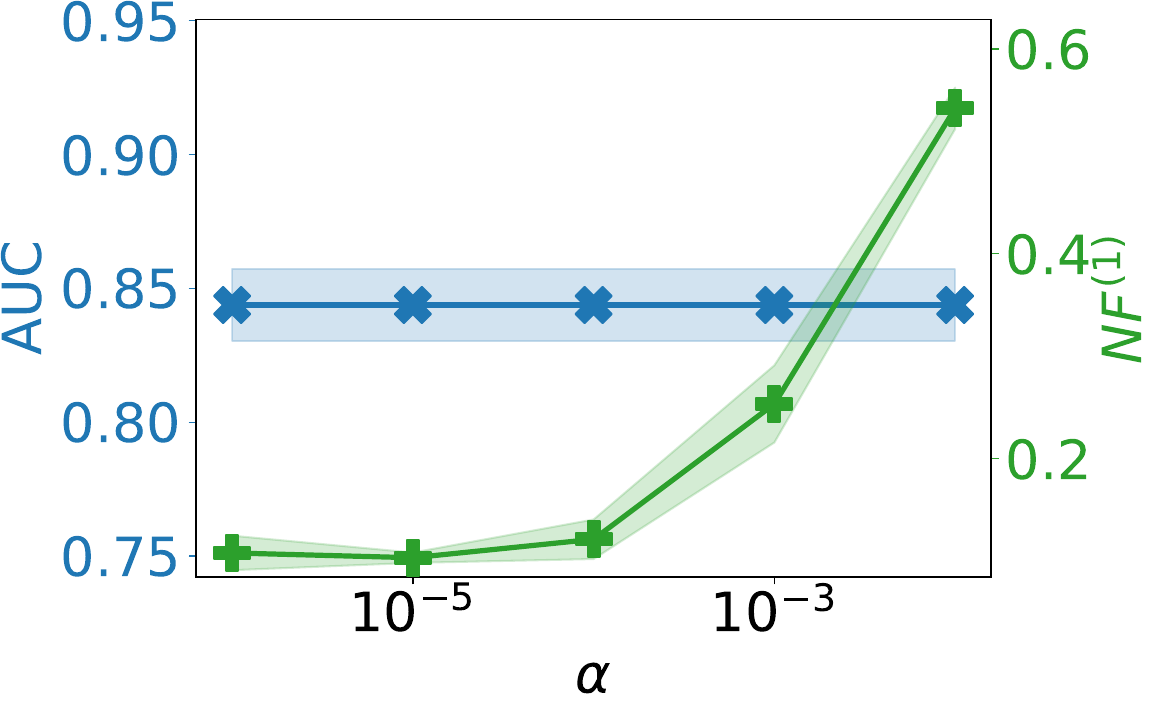}
        \caption{Polblogs - $k=1$}
    \end{subfigure}
    \begin{subfigure}{0.9\linewidth}
        \centering \includegraphics[width=.4\linewidth]{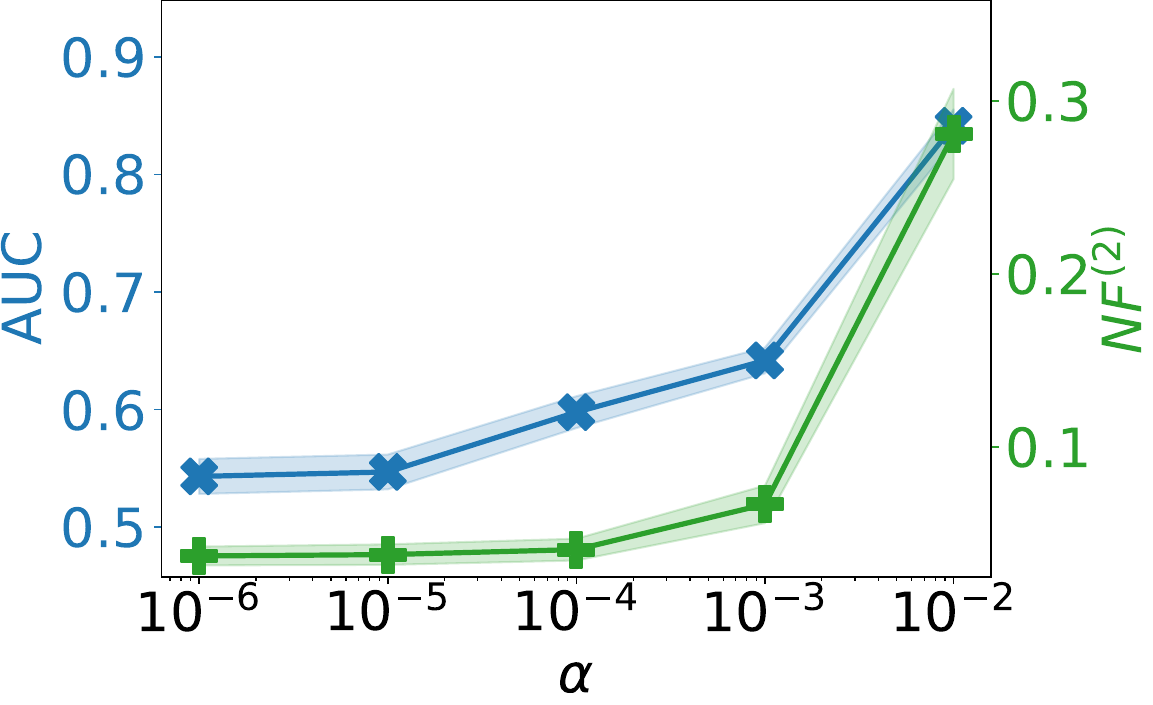}
        \caption{Polblogs - $k=2$}
    \end{subfigure}
    \begin{subfigure}{0.9\linewidth}
        \centering \includegraphics[width=.4\linewidth]{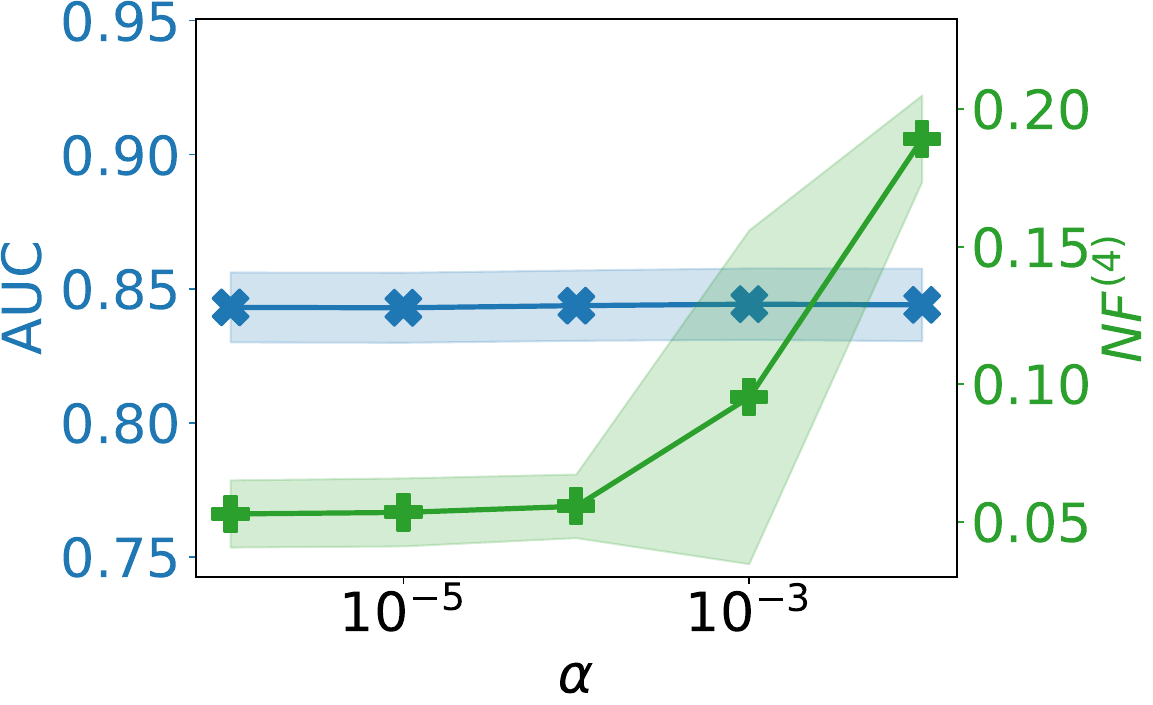}
        \caption{Polblogs - $k=4$}
    \end{subfigure}
    \caption{$\alpha$ effect on $NF^{(k)}$ and $AUC$ in the post-processing method for \textbf{SAGE} predictions (log-scale for x-axis).}
\end{figure}

\begin{figure}[htbp]
    \centering
    \begin{subfigure}{0.9\linewidth}
        \centering \includegraphics[width=.4\linewidth]{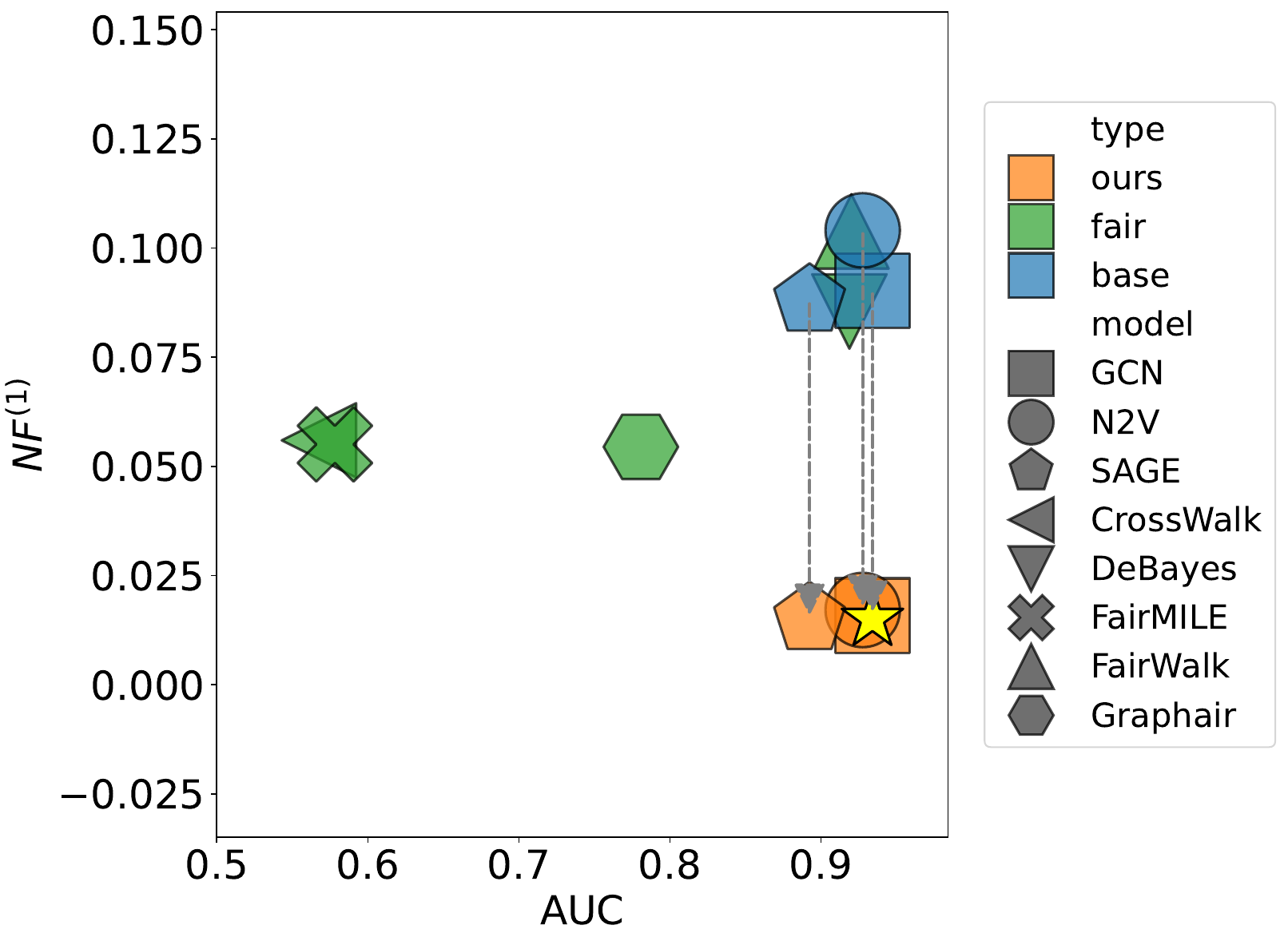}
        \caption{Facebook - $k=1$}
    \end{subfigure}
    \begin{subfigure}{0.9\linewidth}
        \centering \includegraphics[width=.4\linewidth]{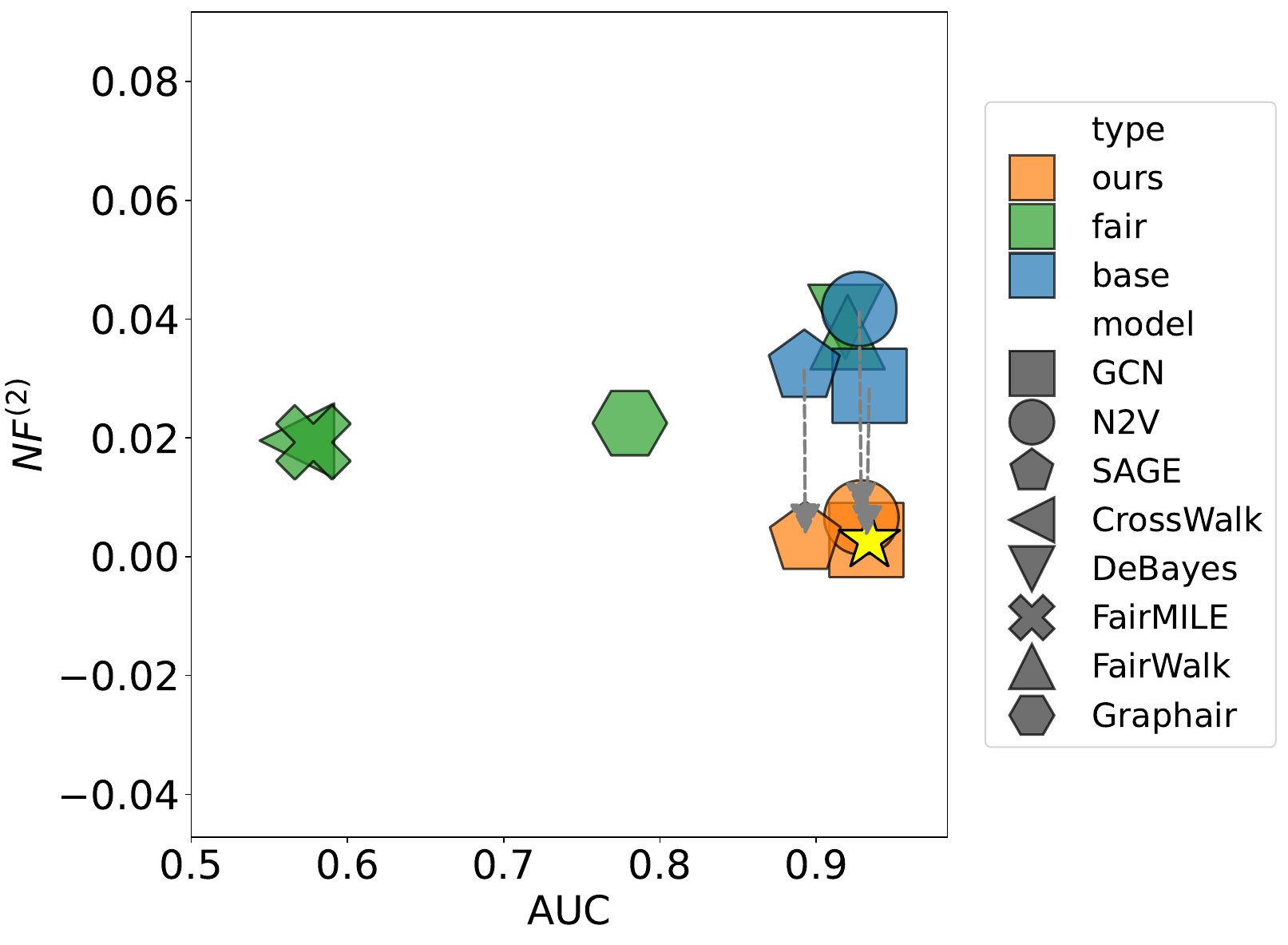}
        \caption{Facebook - $k=2$}
    \end{subfigure}
    \begin{subfigure}{0.9\linewidth}
        \centering \includegraphics[width=.4\linewidth]{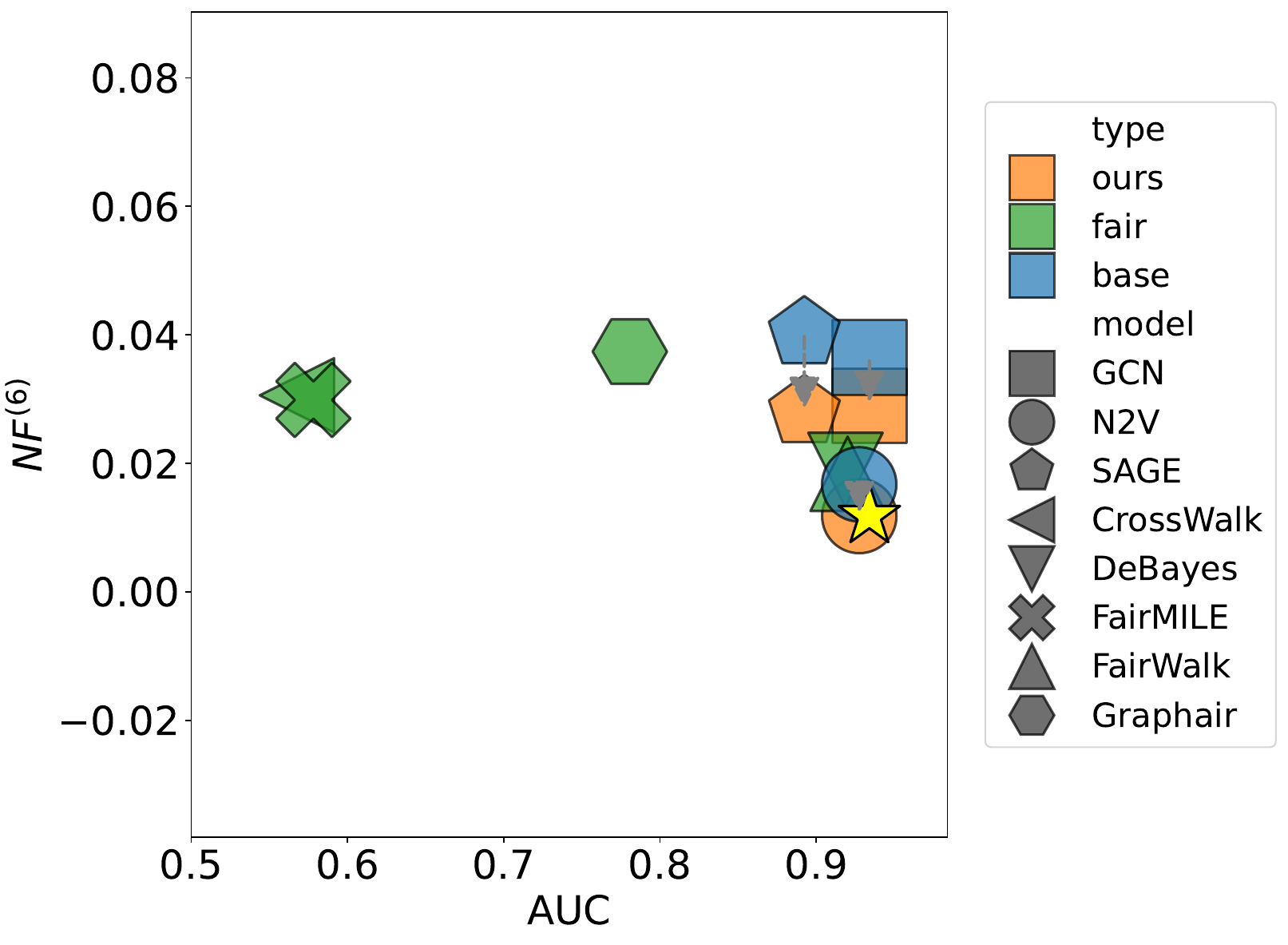}
        \caption{Facebook - $k=6$}
    \end{subfigure}
    \caption{AUC vs $NF^{(k)}$ plot for base and fairness-aware baselines and for targeted $k$ values. Our post-processing results are shown through arrows from base models. The star marker denotes the hypothetical model achieving the best AUC and the best $NF^{(k)}$ across baselines. Only baselines with AUC $> 0.55$ are displayed.}
\end{figure}

\begin{figure}[htbp]
    \centering
    \begin{subfigure}{0.9\linewidth}
        \centering \includegraphics[width=.4\linewidth]{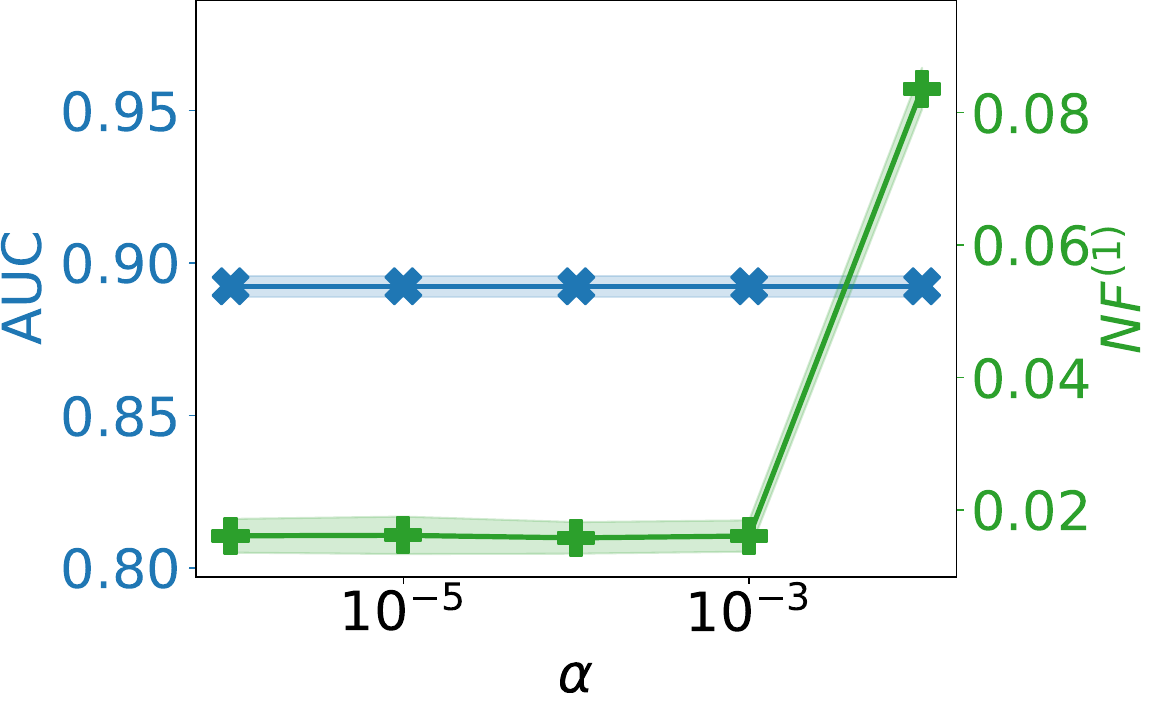}
        \caption{Facebook - $k=1$}
    \end{subfigure}
    \begin{subfigure}{0.9\linewidth}
        \centering \includegraphics[width=.4\linewidth]{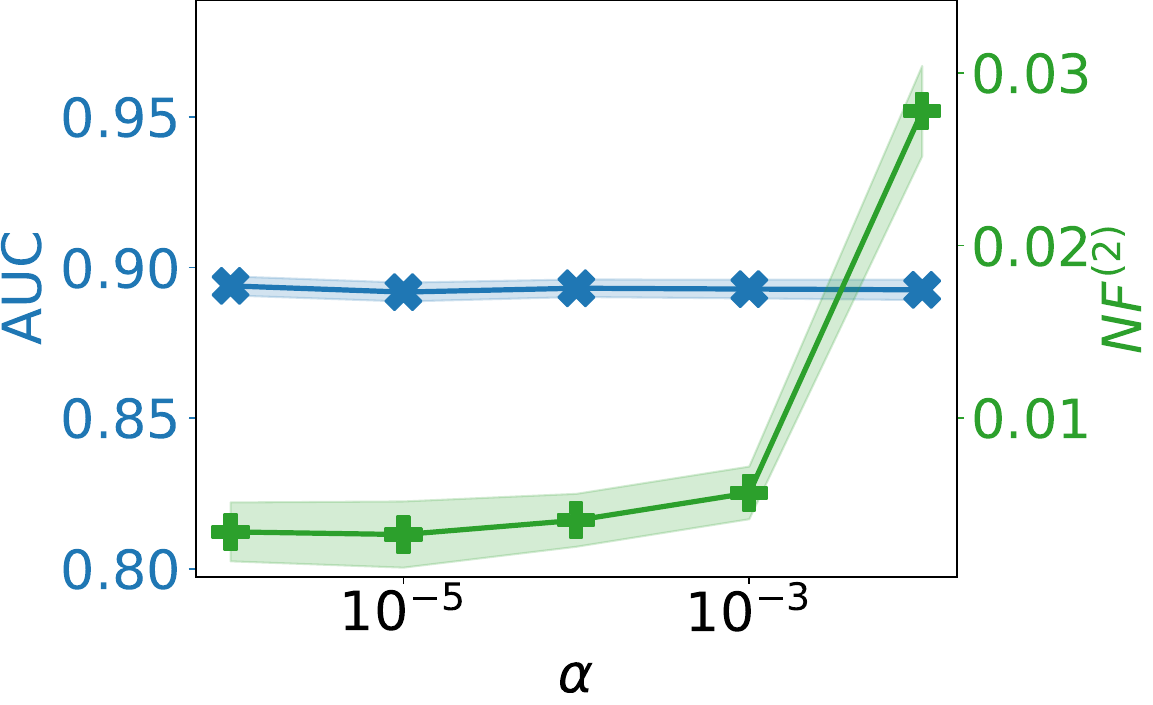}
        \caption{Facebook - $k=2$}
    \end{subfigure}
    \begin{subfigure}{0.9\linewidth}
        \centering \includegraphics[width=.4\linewidth]{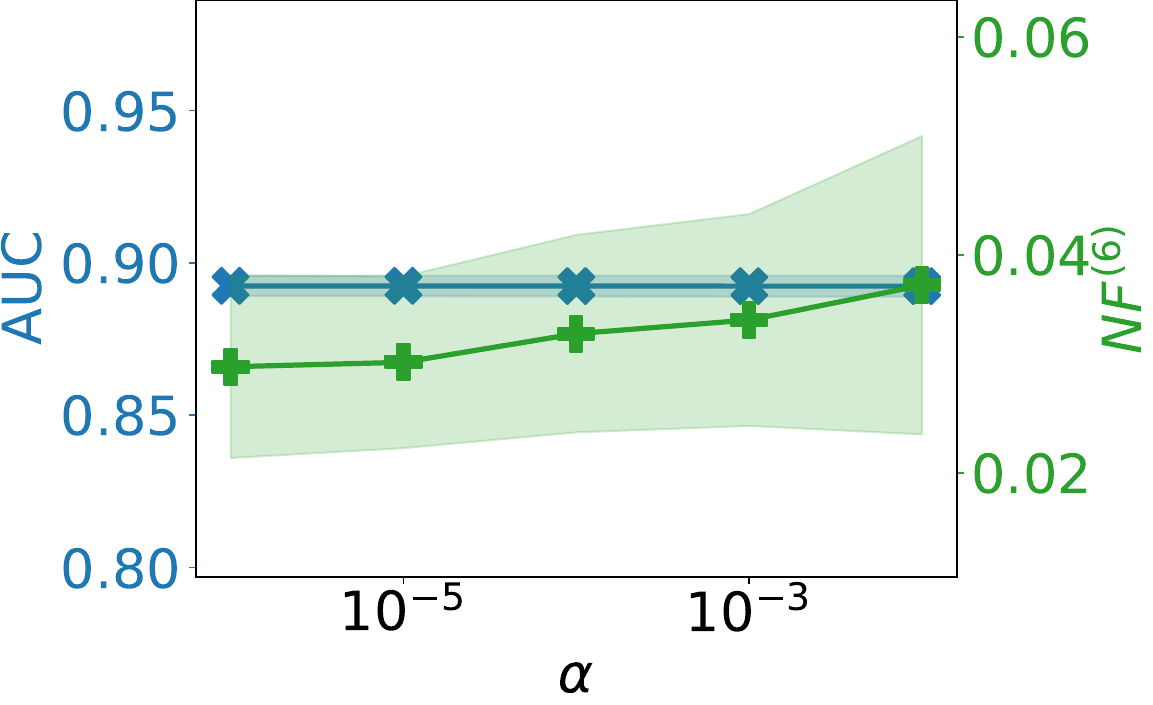}
        \caption{Facebook - $k=6$}
    \end{subfigure}
    \caption{$\alpha$ effect on $NF^{(k)}$ and $AUC$ in the post-processing method for \textbf{SAGE} predictions (log-scale for x-axis).}
\end{figure}

\begin{figure}[htbp]
    \centering
    \begin{subfigure}{0.9\linewidth}
        \centering \includegraphics[width=.4\linewidth]{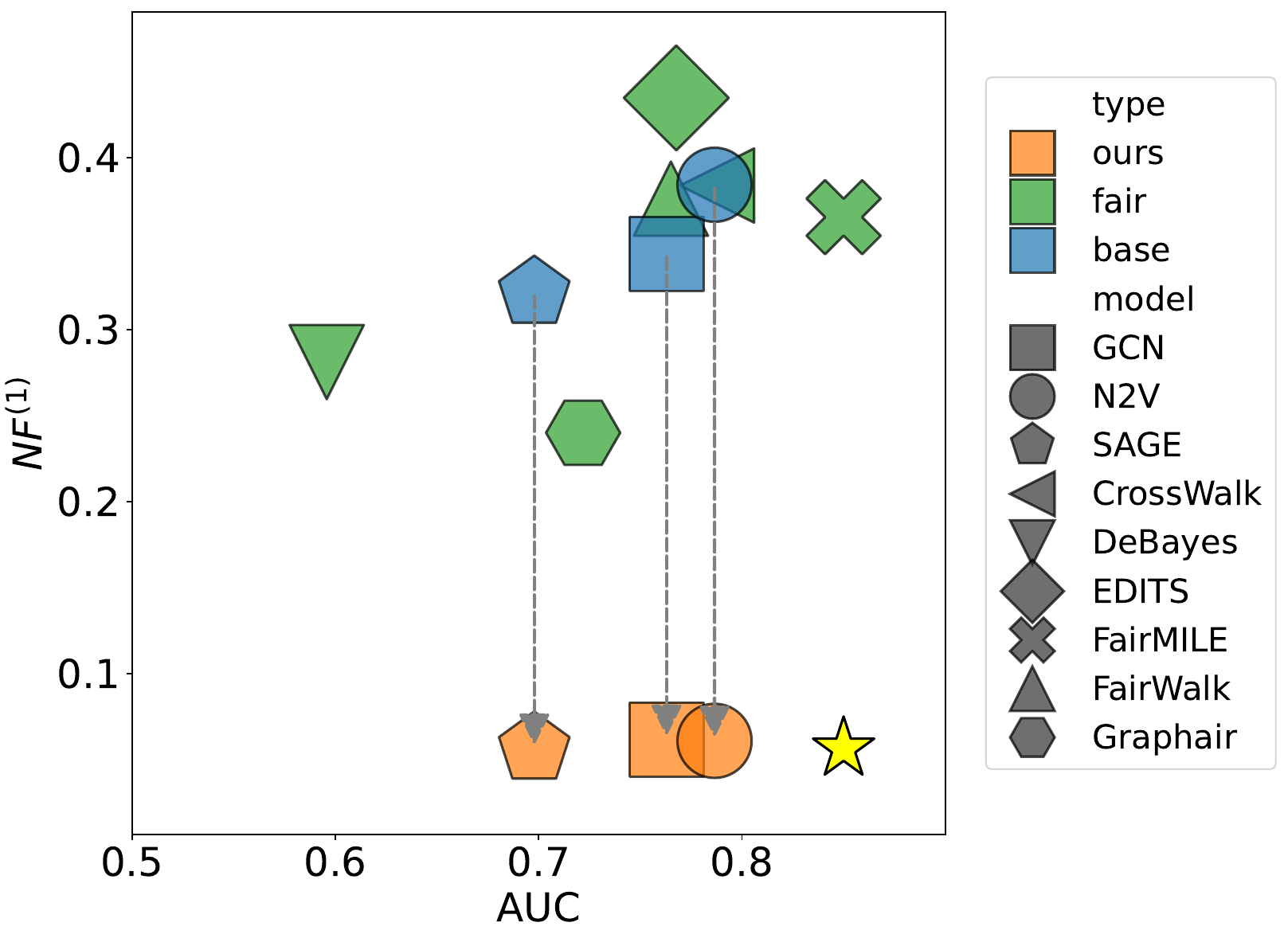}
        \caption{Pokec - $k=1$}
    \end{subfigure}
    \begin{subfigure}{0.9\linewidth}
        \centering \includegraphics[width=.4\linewidth]{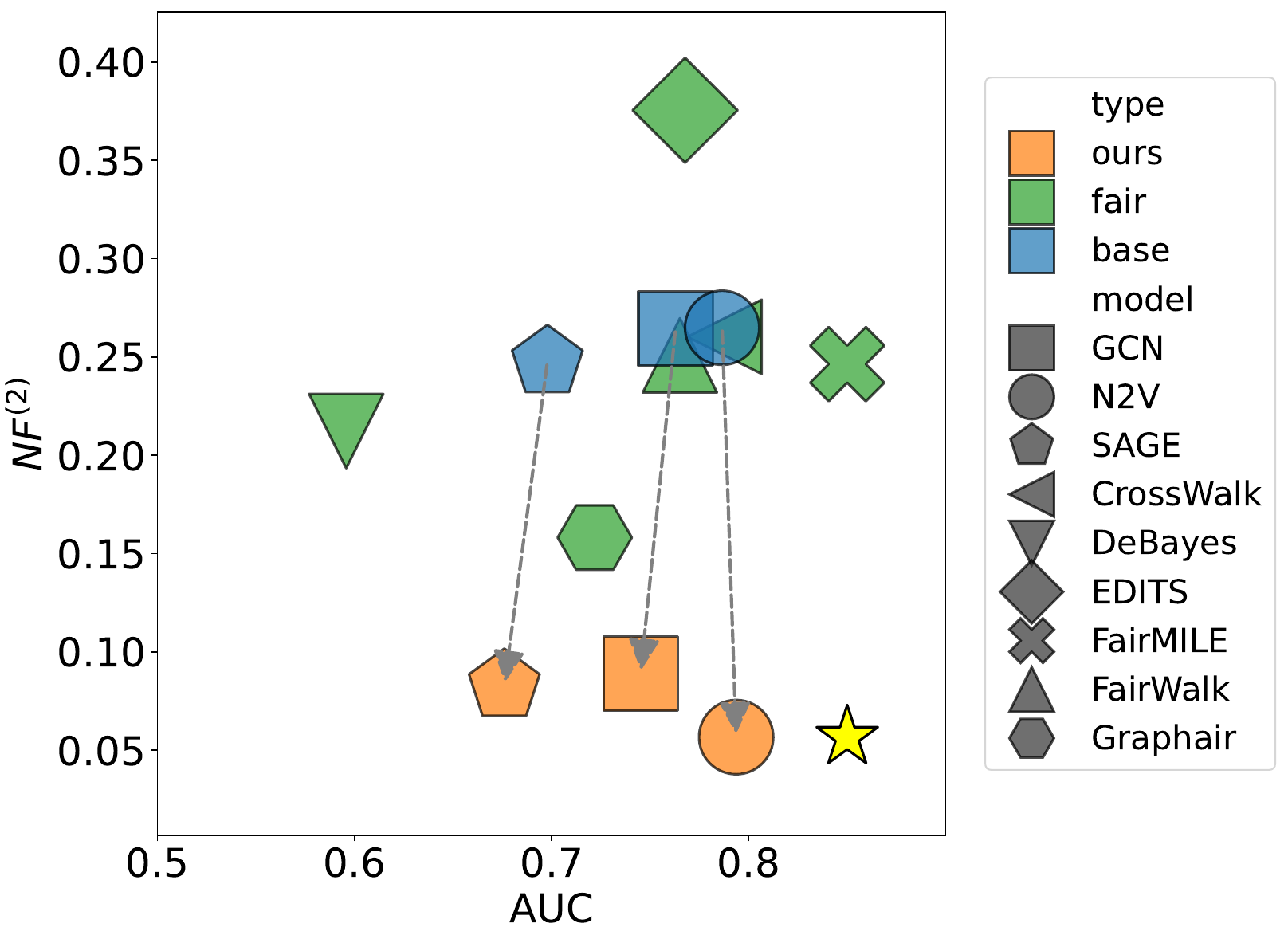}
        \caption{Pokec - $k=2$}
    \end{subfigure}
    \begin{subfigure}{0.9\linewidth}
        \centering \includegraphics[width=.4\linewidth]{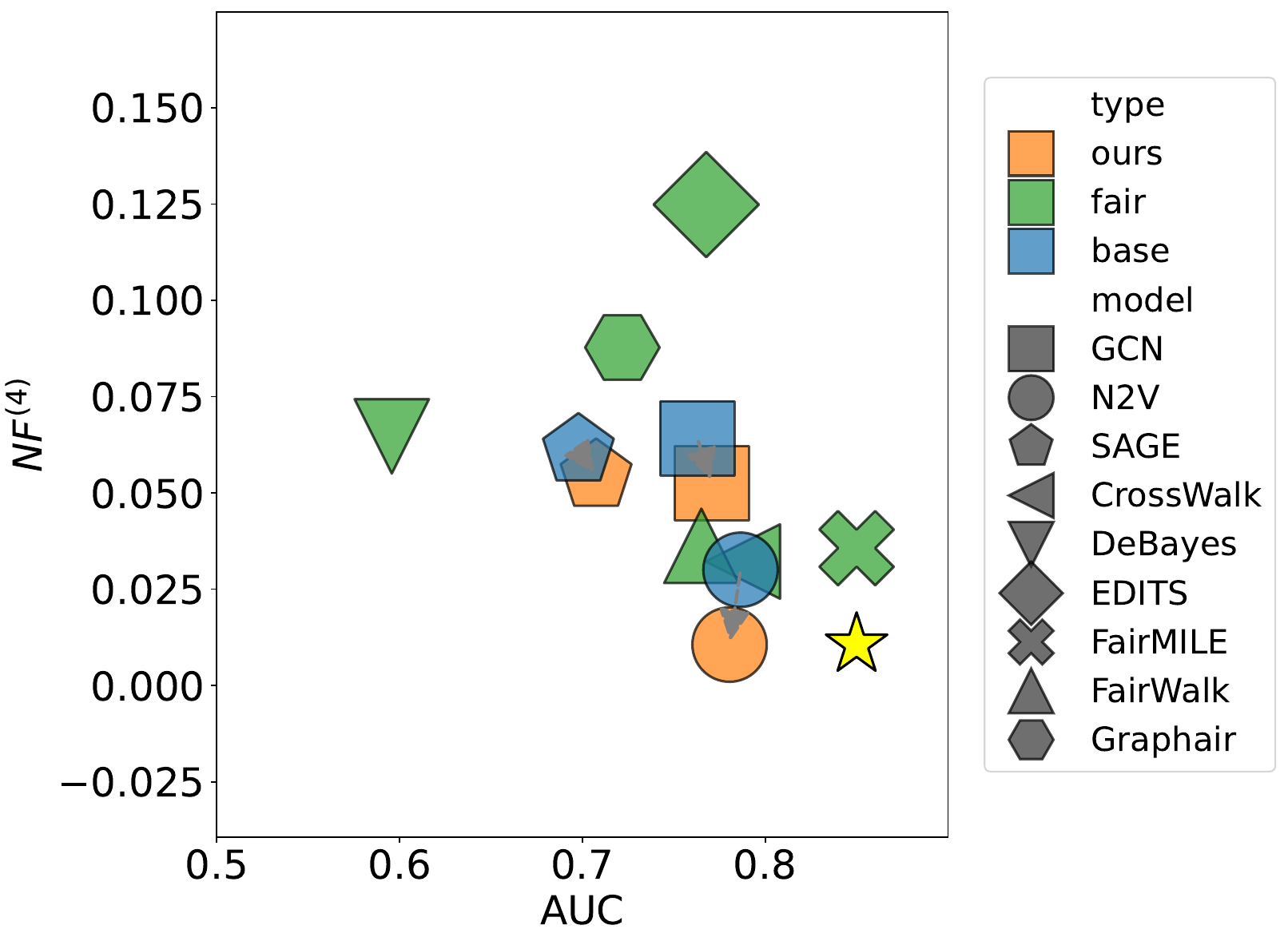}
        \caption{Pokec - $k=4$}
    \end{subfigure}
    \caption{AUC vs $NF^{(k)}$ plot for base and fairness-aware baselines and for targeted $k$ values. Our post-processing results are shown through arrows from base models. The star marker denotes the hypothetical model achieving the best AUC and the best $NF^{(k)}$ across baselines. Only baselines with AUC $> 0.55$ are displayed.}
\end{figure}

\begin{figure}[htbp]
    \centering
    \begin{subfigure}{0.9\linewidth}
        \centering \includegraphics[width=.4\linewidth]{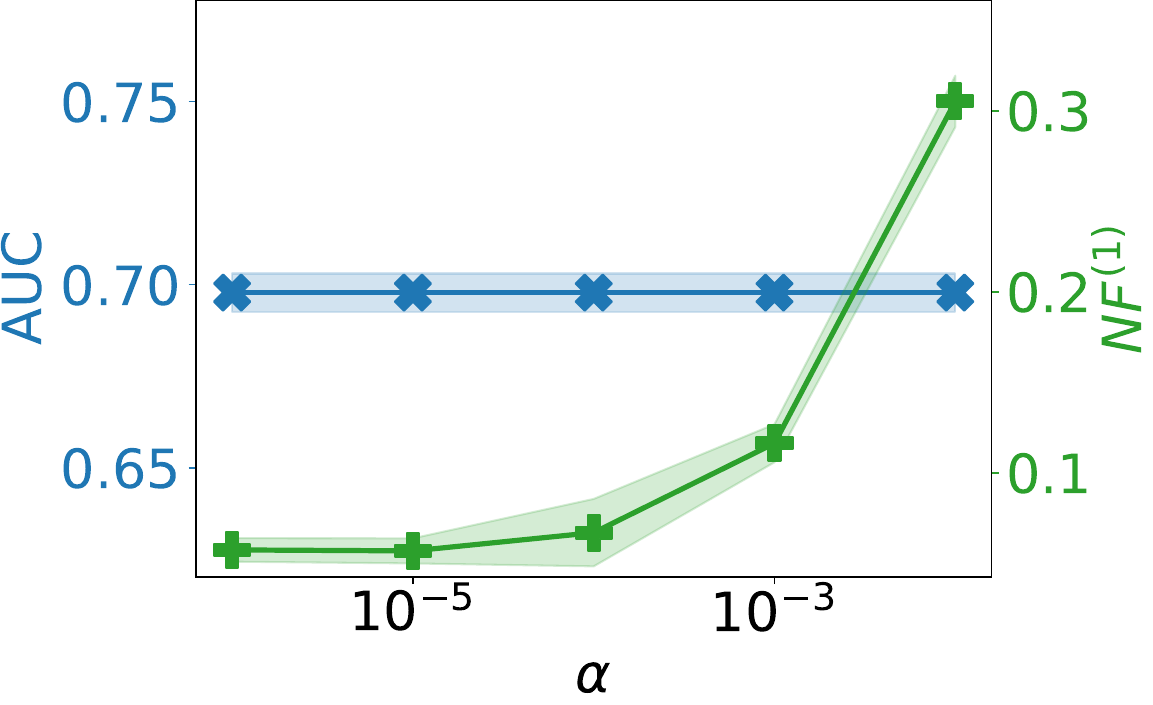}
        \caption{Pokec - $k=1$}
    \end{subfigure}
    \begin{subfigure}{0.9\linewidth}
        \centering \includegraphics[width=.4\linewidth]{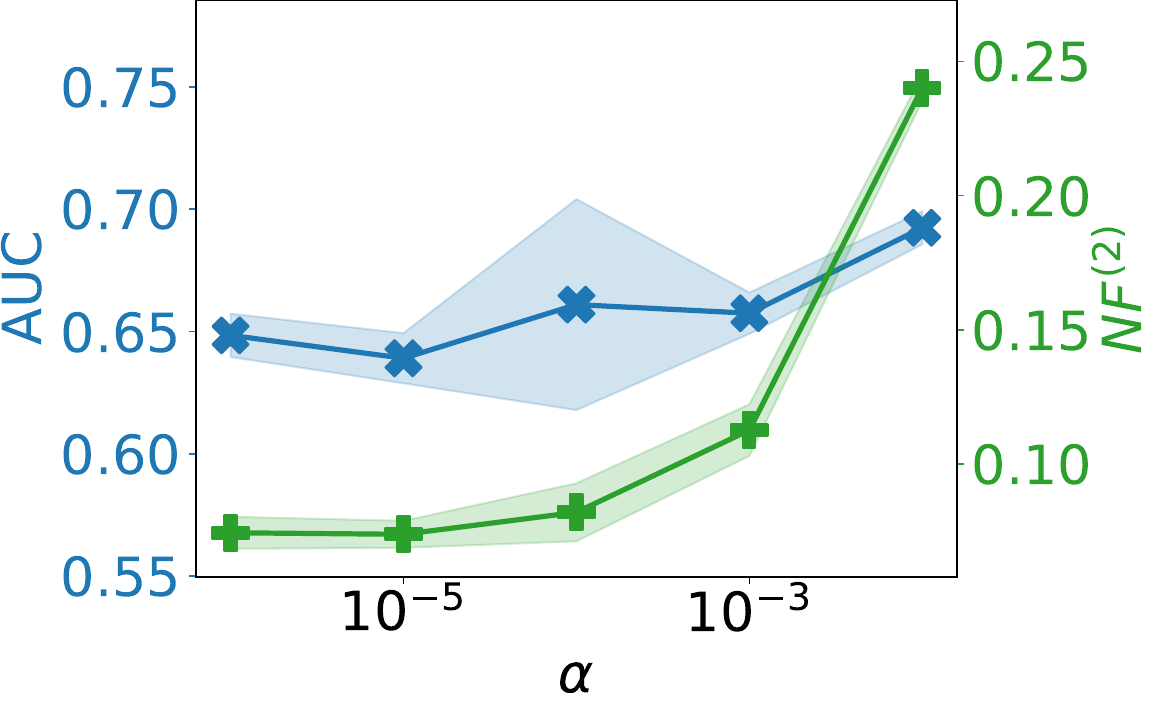}
        \caption{Pokec - $k=2$}
    \end{subfigure}
    \begin{subfigure}{0.9\linewidth}
        \centering \includegraphics[width=.4\linewidth]{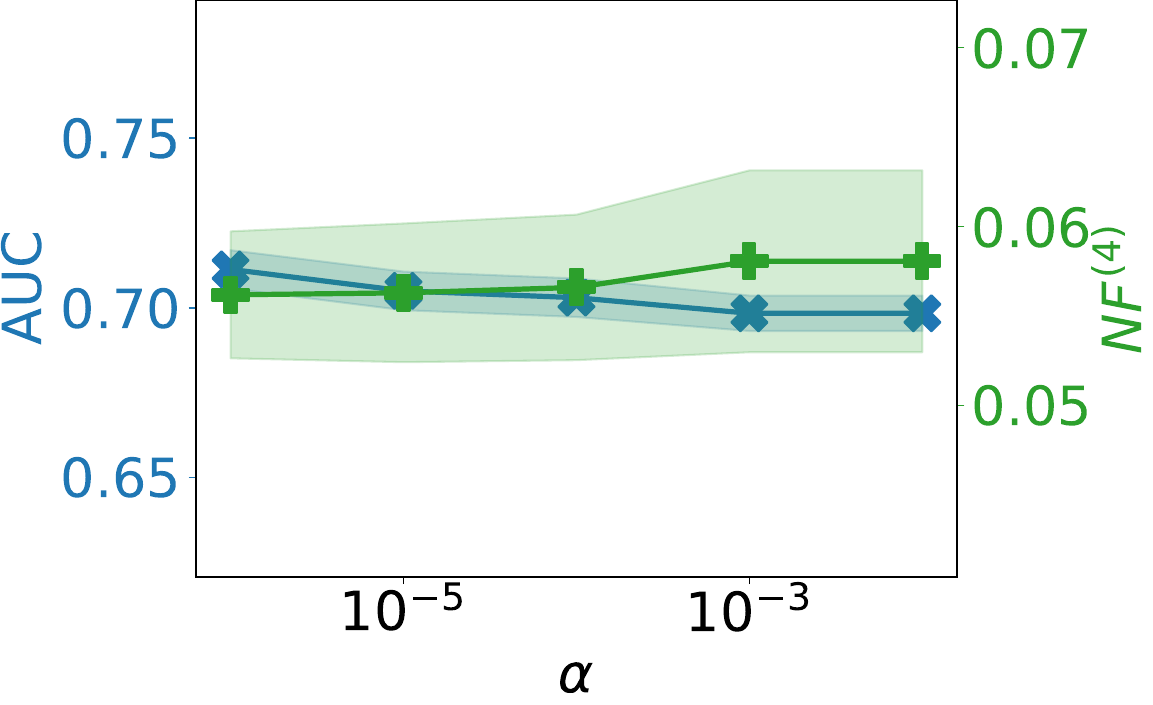}
        \caption{Pokec - $k=4$}
    \end{subfigure}
    \caption{$\alpha$ effect on $NF^{(k)}$ and $AUC$ in the post-processing method for \textbf{SAGE} predictions (log-scale for x-axis).}
\end{figure}

\begin{figure}[htbp]
    \centering
    \begin{subfigure}{0.9\linewidth}
        \centering \includegraphics[width=.4\linewidth]{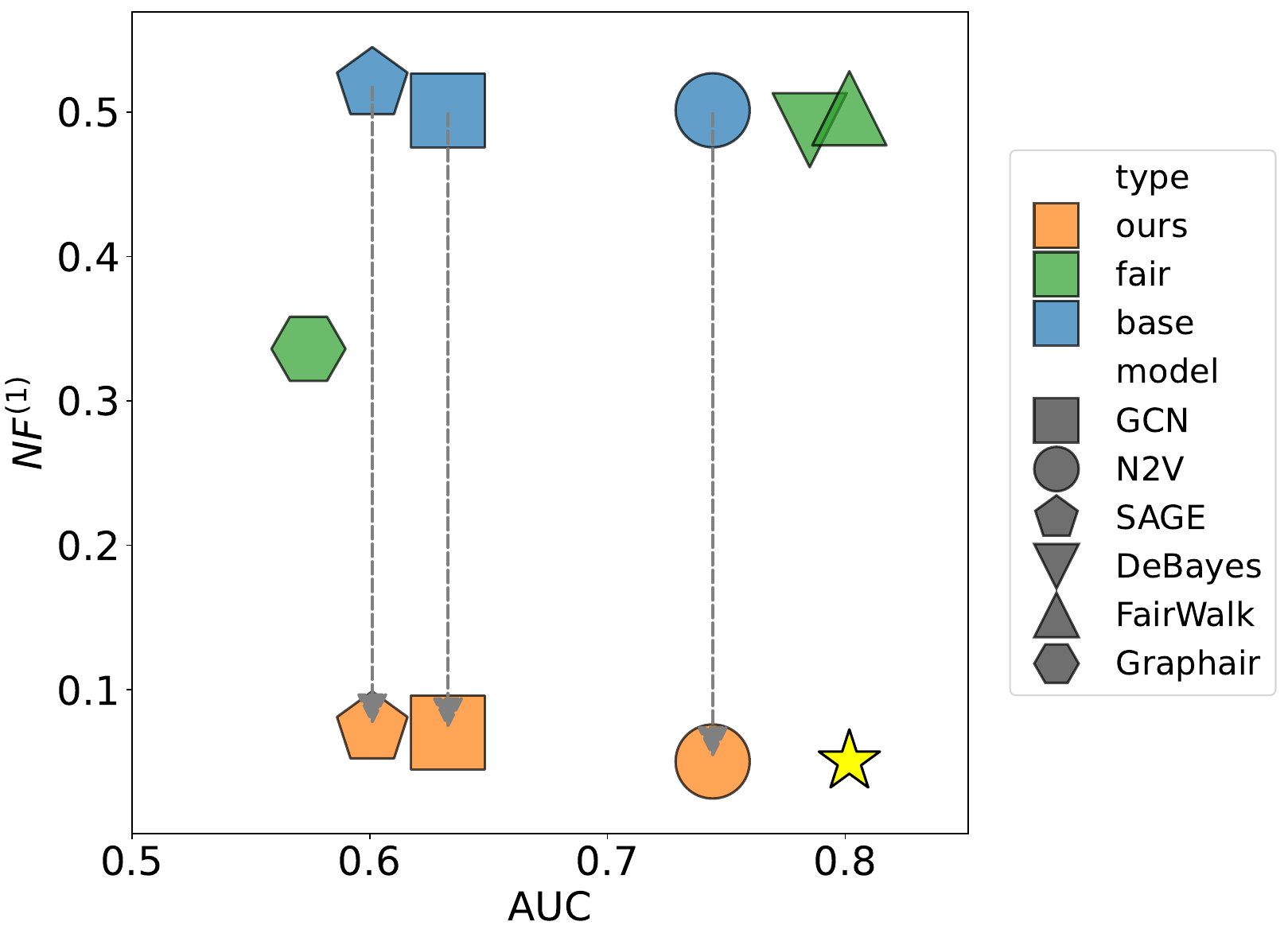}
        \caption{Citeseer - $k=1$}
    \end{subfigure}
    \begin{subfigure}{0.9\linewidth}
        \centering \includegraphics[width=.4\linewidth]{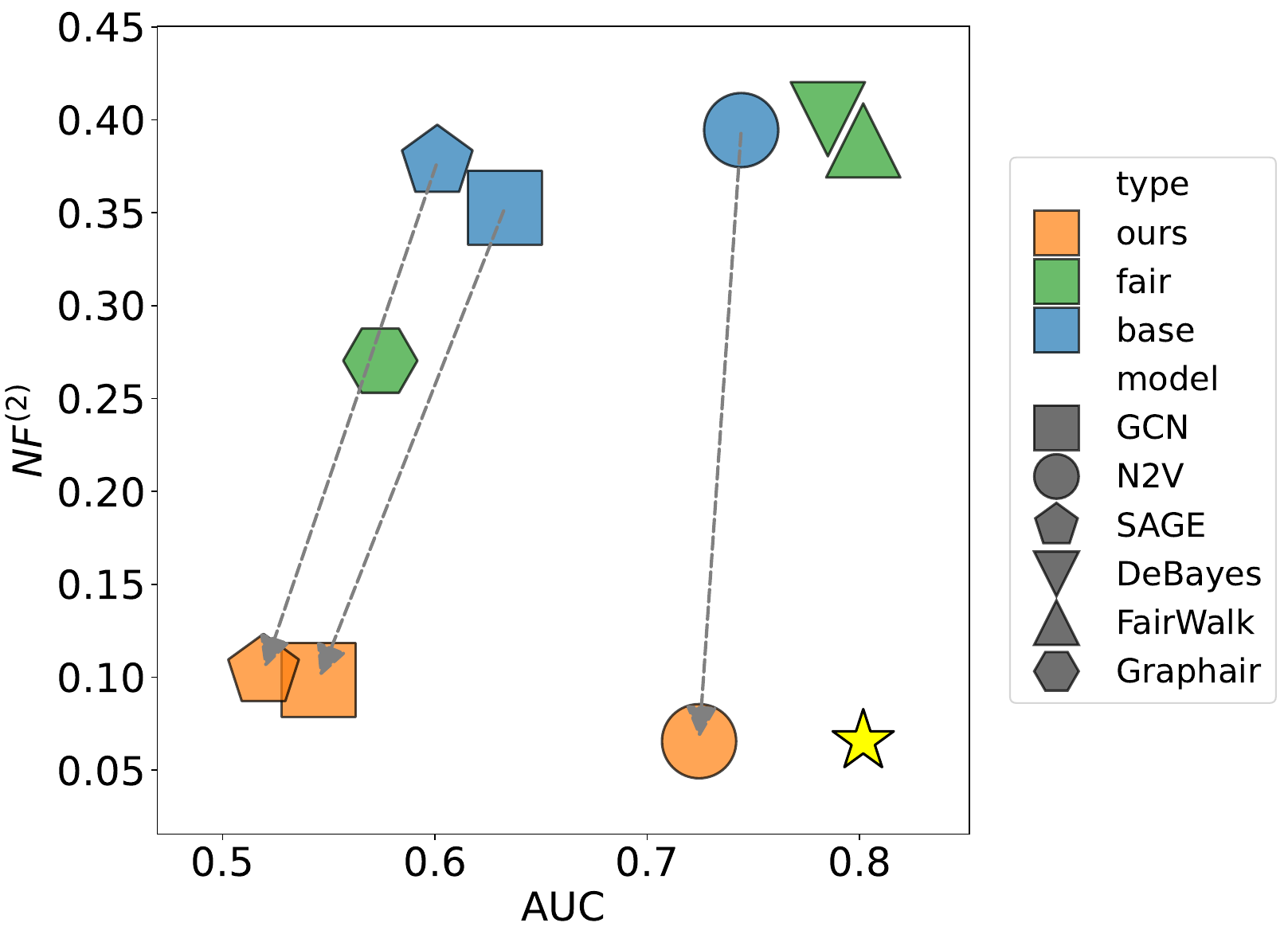}
        \caption{Citeseer - $k=2$}
    \end{subfigure}
    \begin{subfigure}{0.9\linewidth}
        \centering \includegraphics[width=.4\linewidth]{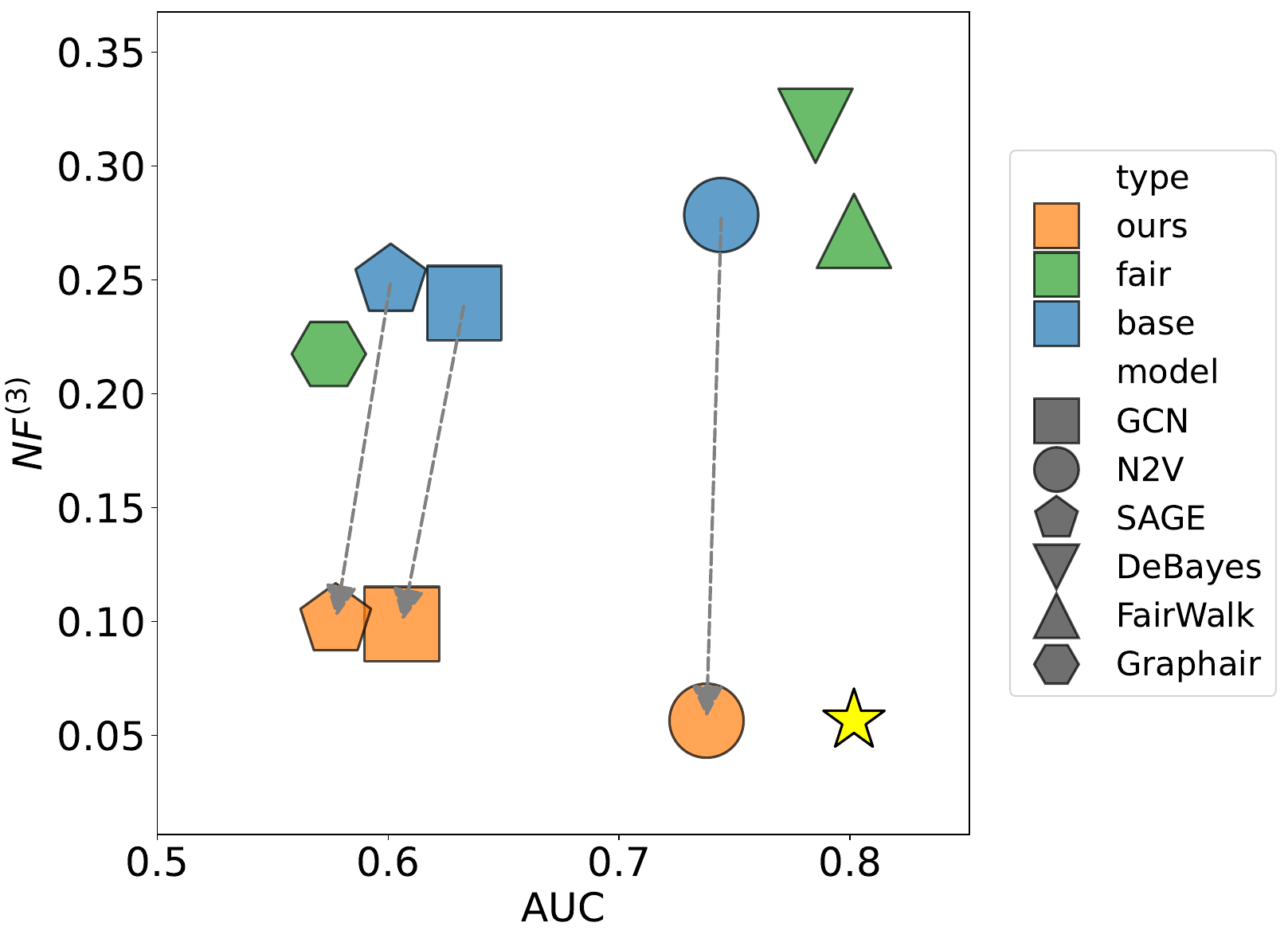}
        \caption{Citeseer - $k=3$}
    \end{subfigure}
    \caption{AUC vs $NF^{(k)}$ plot for base and fairness-aware baselines and for targeted $k$ values. Our post-processing results are shown through arrows from base models. The star marker denotes the hypothetical model achieving the best AUC and the best $NF^{(k)}$ across baselines. Only baselines with AUC $> 0.55$ are displayed.}
\end{figure}

\begin{figure}[htbp]
    \centering
    \begin{subfigure}{0.9\linewidth}
        \centering \includegraphics[width=.4\linewidth]{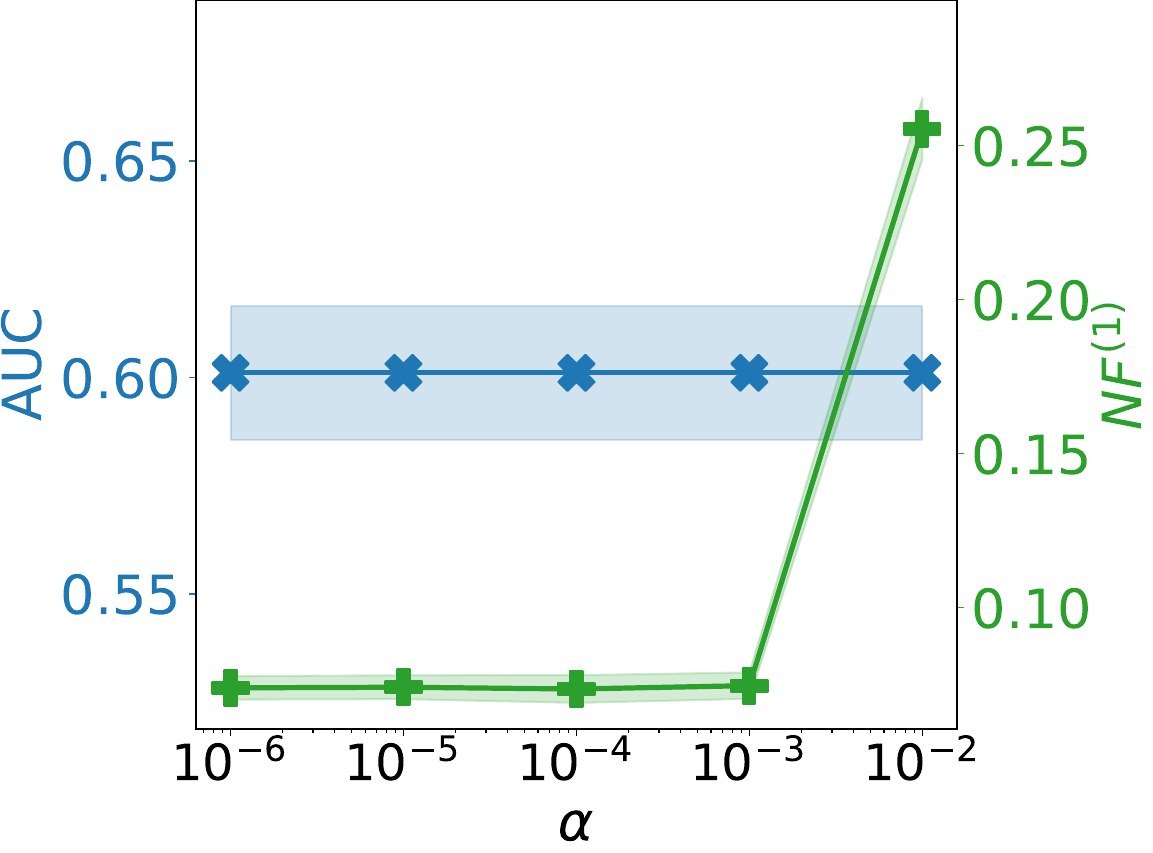}
        \caption{Citeseer - $k=1$}
    \end{subfigure}
    \begin{subfigure}{0.9\linewidth}
        \centering \includegraphics[width=.4\linewidth]{figures/alpha_citeseer_SAGE_2.pdf}
        \caption{Citeseer - $k=2$}
    \end{subfigure}
    \begin{subfigure}{0.9\linewidth}
        \centering \includegraphics[width=.4\linewidth]{figures/alpha_citeseer_SAGE_3.pdf}
        \caption{Citeseer - $k=3$}
    \end{subfigure}
    \caption{$\alpha$ effect on $NF^{(k)}$ and $AUC$ in the post-processing method for \textbf{SAGE} predictions (log-scale for x-axis).}
\end{figure}

\begin{figure}[htbp]
    \centering
    \begin{subfigure}{0.9\linewidth}
        \centering \includegraphics[width=.4\linewidth]{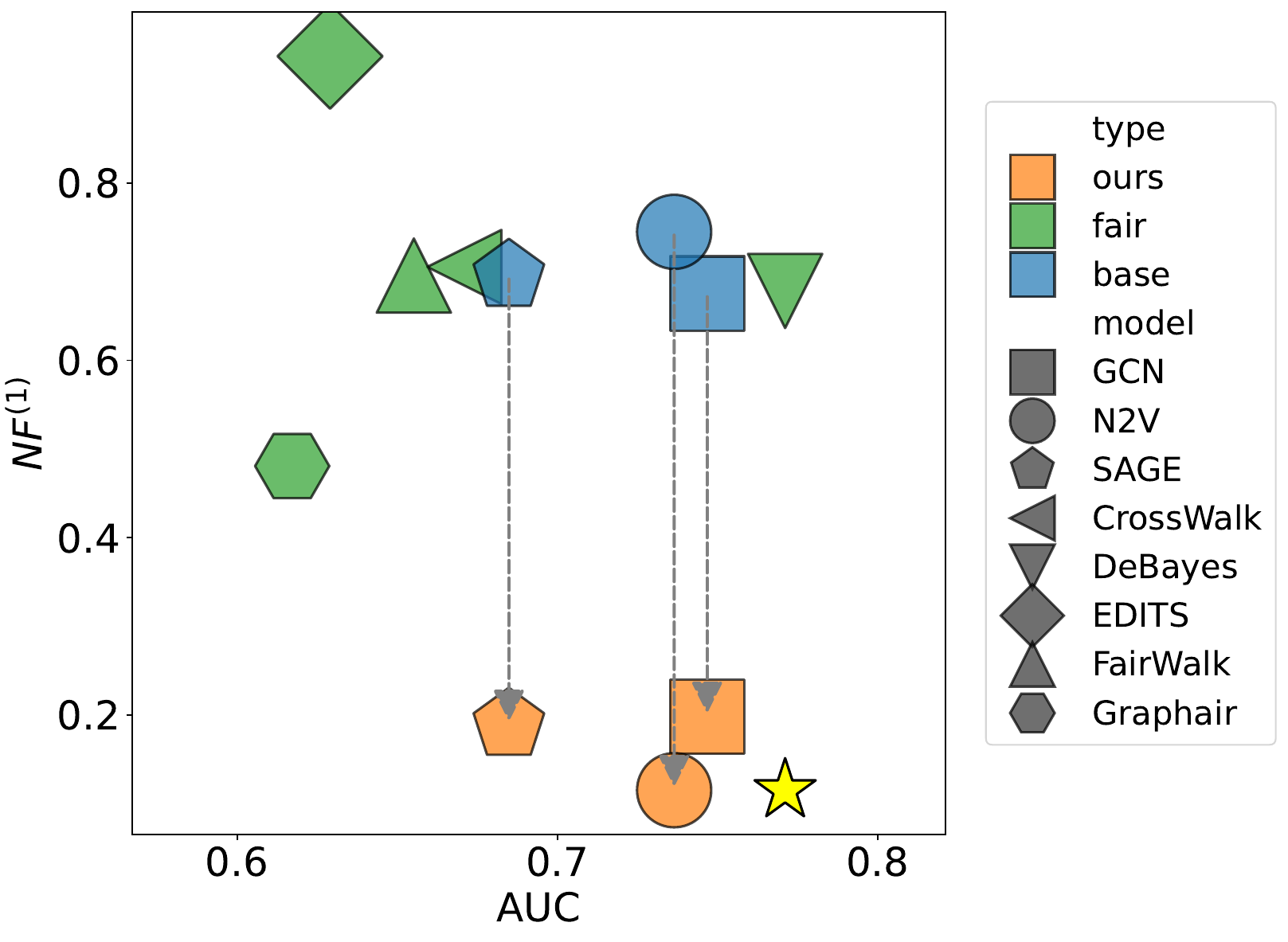}
        \caption{Synthetic - $k=1$}
    \end{subfigure}
    \begin{subfigure}{0.9\linewidth}
        \centering \includegraphics[width=.4\linewidth]{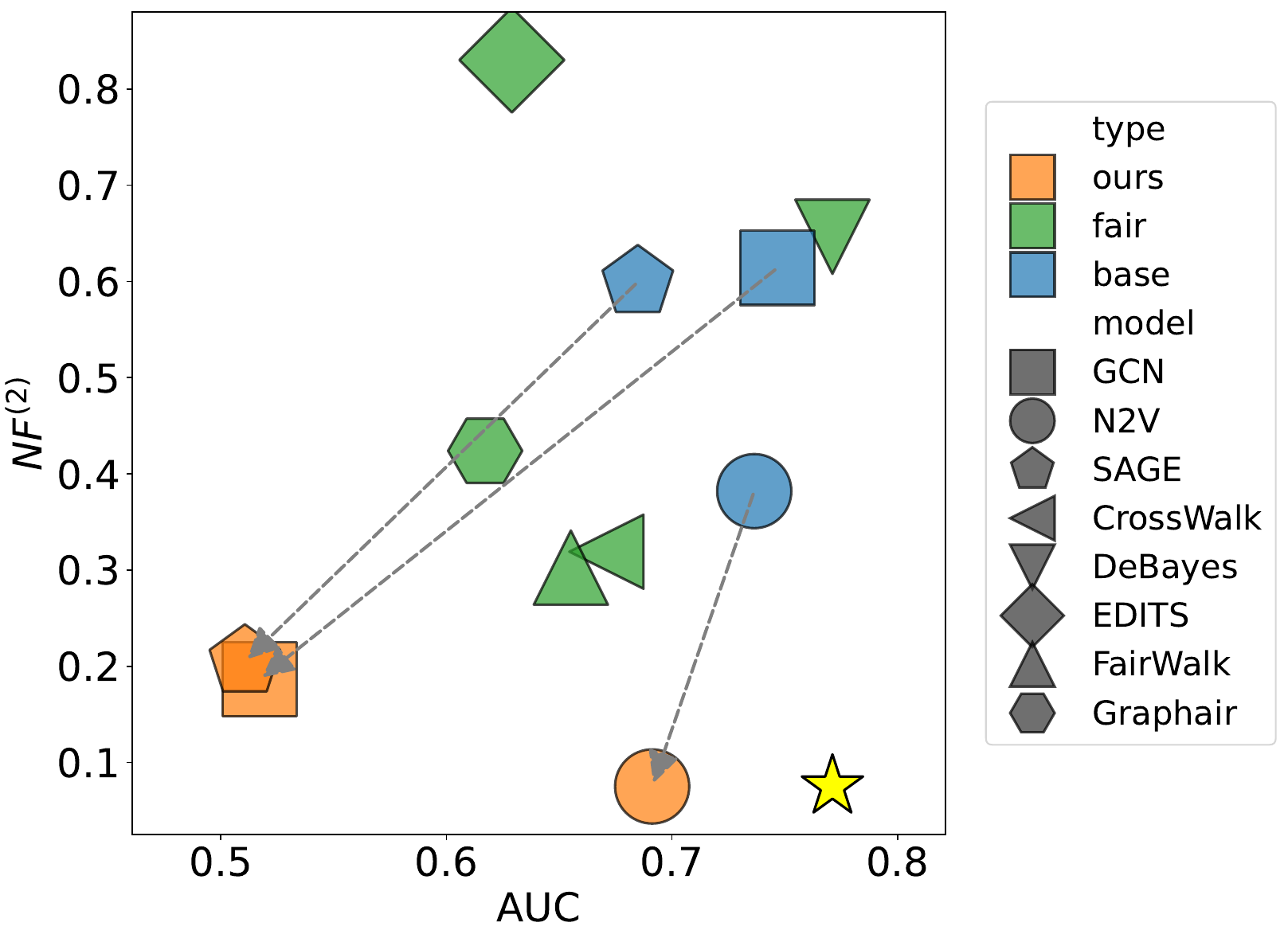}
        \caption{Synthetic - $k=2$}
    \end{subfigure}
    \begin{subfigure}{0.9\linewidth}
        \centering \includegraphics[width=.4\linewidth]{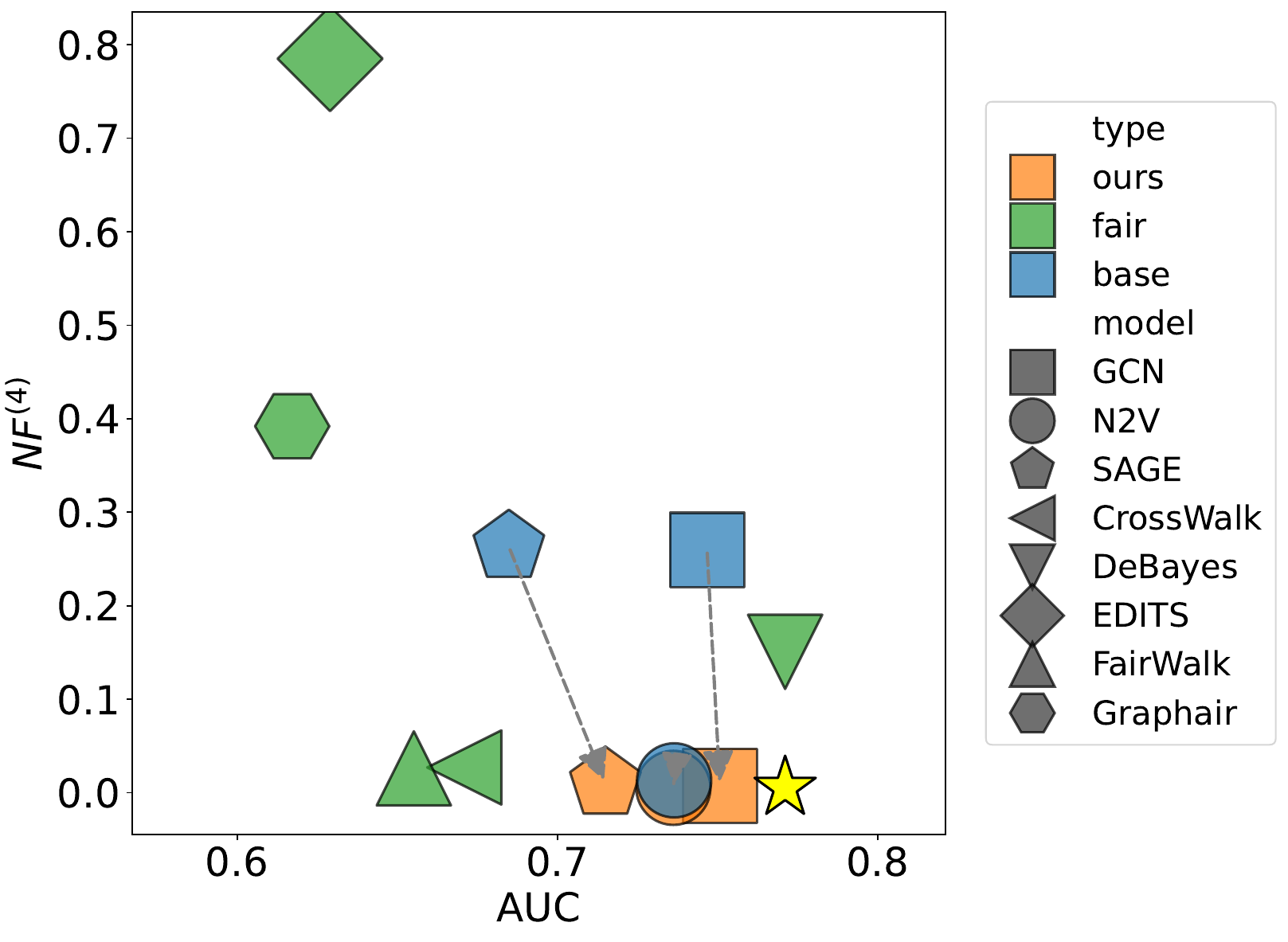}
        \caption{Synthetic - $k=4$}
    \end{subfigure}
    \caption{AUC vs $NF^{(k)}$ plot for base and fairness-aware baselines and for targeted $k$ values. Our post-processing results are shown through arrows from base models. The star marker denotes the hypothetical model achieving the best AUC and the best $NF^{(k)}$ across baselines. Only baselines with AUC $> 0.55$ are displayed.}
\end{figure}

\begin{figure}[htbp]
    \centering
    \begin{subfigure}{0.9\linewidth}
        \centering \includegraphics[width=.4\linewidth]{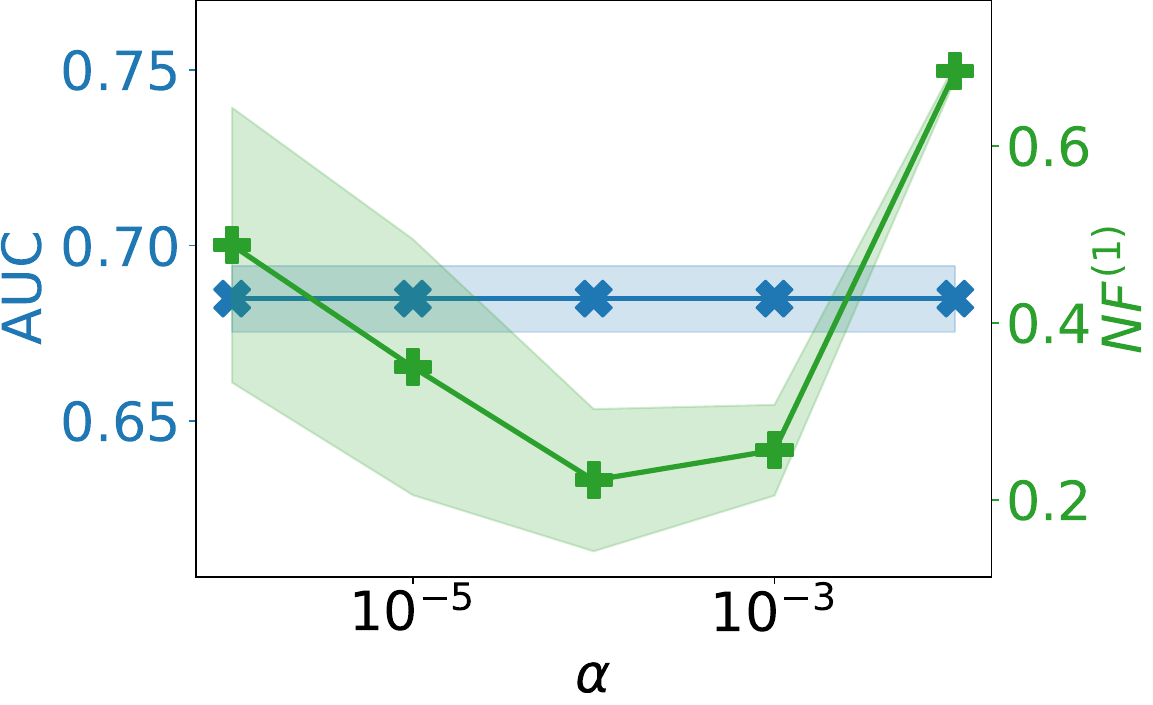}
        \caption{Synthetic - $k=1$}
    \end{subfigure}
    \begin{subfigure}{0.9\linewidth}
        \centering \includegraphics[width=.4\linewidth]{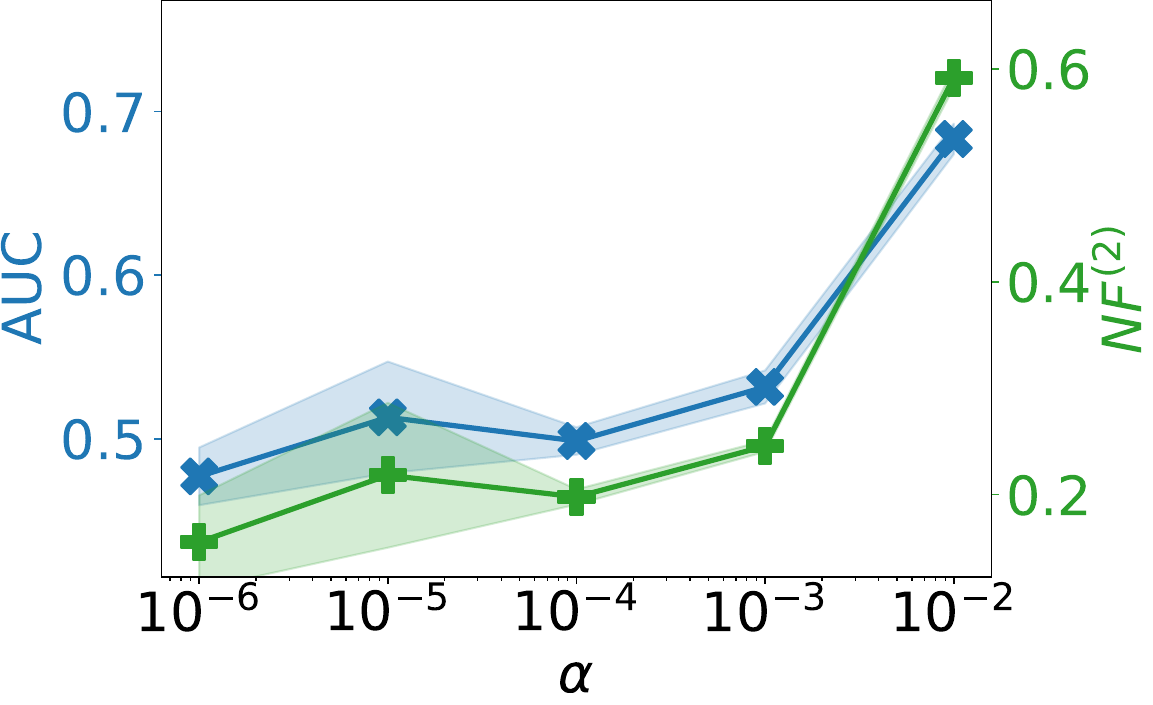}
        \caption{Synthetic - $k=2$}
    \end{subfigure}
    \begin{subfigure}{0.9\linewidth}
        \centering \includegraphics[width=.4\linewidth]{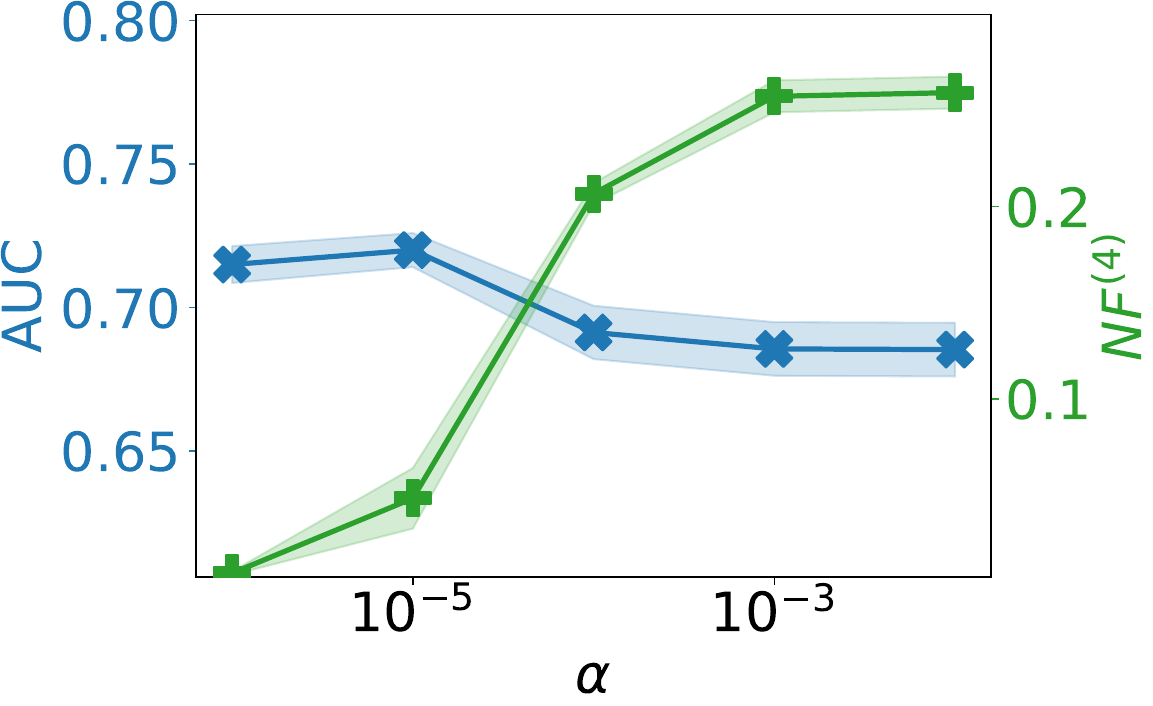}
        \caption{Synthetic - $k=4$}
    \end{subfigure}
    \caption{$\alpha$ effect on $NF^{(k)}$ and $AUC$ in the post-processing method for \textbf{SAGE} predictions (log-scale for x-axis).}
    \label{fig:rq3Sage}

\end{figure}

\cleardoublepage
\footnotesize
\begin{table}[ht]
\centering
\begin{tabular}{llccccccccc}
\toprule
Dataset & Model & $NF^{(1)}$ & $NF^{(2)}$ & $NF^{(3)}$ & $NF^{(4)}$ & $NF^{(5)}$ & $NF^{(6)}$ & DP & EO & AUC \\
\midrule
\textit{Polblogs} & N2V & \footnotesize 0.62 ± \tiny 0.01 & \footnotesize 0.35 ± \tiny 0.01 & \footnotesize 0.07 ± \tiny 0.01 & \footnotesize 0.05 ± \tiny 0.01 & \footnotesize 0.07 ± \tiny 0.04 & \footnotesize 0.08 ± \tiny 0.06 & \footnotesize 0.35 ± \tiny 0.02 & \footnotesize 0.26 ± \tiny 0.02 & \footnotesize 0.76 ± \tiny 0.01 \\
 & GCN & \footnotesize 0.56 ± \tiny 0.01 & \footnotesize 0.29 ± \tiny 0.01 & \footnotesize 0.12 ± \tiny 0.01 & \footnotesize 0.22 ± \tiny 0.01 & \footnotesize 0.12 ± \tiny 0.06 & \footnotesize 0.12 ± \tiny 0.08 & \footnotesize 0.20 ± \tiny 0.02 & \footnotesize 0.09 ± \tiny 0.03 & \footnotesize 0.89 ± \tiny 0.00 \\
 & SAGE & \footnotesize 0.55 ± \tiny 0.02 & \footnotesize 0.29 ± \tiny 0.03 & \footnotesize 0.11 ± \tiny 0.03 & \footnotesize 0.19 ± \tiny 0.02 & \footnotesize 0.12 ± \tiny 0.05 & \footnotesize 0.06 ± \tiny 0.04 & \footnotesize 0.18 ± \tiny 0.05 & \footnotesize 0.06 ± \tiny 0.04 & \footnotesize 0.84 ± \tiny 0.01 \\
 \specialrule{0.2pt}{0pt}{0pt}
 & CrossWalk & \footnotesize 0.52 ± \tiny 0.01 & \footnotesize 0.29 ± \tiny 0.01 & \footnotesize 0.07 ± \tiny 0.01 & \footnotesize 0.18 ± \tiny 0.01 & \footnotesize 0.06 ± \tiny 0.01 & \footnotesize 0.12 ± \tiny 0.05 & \footnotesize 0.13 ± \tiny 0.01 & \footnotesize 0.11 ± \tiny 0.01 & \footnotesize 0.68 ± \tiny 0.01 \\
 & DeBayes & \footnotesize 0.56 ± \tiny 0.00 & \footnotesize 0.41 ± \tiny 0.00 & \footnotesize 0.11 ± \tiny 0.00 & \footnotesize 0.04 ± \tiny 0.01 & \footnotesize 0.07 ± \tiny 0.06 & \footnotesize 0.10 ± \tiny 0.11 & \footnotesize 0.42 ± \tiny 0.01 & \footnotesize 0.33 ± \tiny 0.01 & \footnotesize 0.73 ± \tiny 0.00 \\
 & EDITS & \footnotesize 0.77 ± \tiny 0.01 & \footnotesize 0.47 ± \tiny 0.00 & \footnotesize 0.30 ± \tiny 0.00 & \footnotesize 0.39 ± \tiny 0.00 & \footnotesize 0.09 ± \tiny 0.00 & \footnotesize 0.16 ± \tiny 0.01 & \footnotesize 0.06 ± \tiny 0.00 & \footnotesize 0.01 ± \tiny 0.01 & \footnotesize 0.66 ± \tiny 0.01 \\
 & FairMILE & \footnotesize 0.50 ± \tiny 0.02 & \footnotesize 0.35 ± \tiny 0.02 & \footnotesize 0.05 ± \tiny 0.02 & \footnotesize 0.10 ± \tiny 0.03 & \footnotesize 0.10 ± \tiny 0.04 & \footnotesize 0.26 ± \tiny 0.05 & \footnotesize 0.28 ± \tiny 0.06 & \footnotesize 0.22 ± \tiny 0.06 & \footnotesize 0.78 ± \tiny 0.02 \\
 & FairWalk & \footnotesize 0.54 ± \tiny 0.01 & \footnotesize 0.27 ± \tiny 0.02 & \footnotesize 0.05 ± \tiny 0.01 & \footnotesize 0.12 ± \tiny 0.01 & \footnotesize 0.07 ± \tiny 0.03 & \footnotesize 0.06 ± \tiny 0.05 & \footnotesize 0.14 ± \tiny 0.02 & \footnotesize 0.03 ± \tiny 0.02 & \footnotesize 0.67 ± \tiny 0.01 \\
 & Graphair & \footnotesize 0.46 ± \tiny 0.02 & \footnotesize 0.20 ± \tiny 0.02 & \footnotesize 0.12 ± \tiny 0.01 & \footnotesize 0.18 ± \tiny 0.01 & \footnotesize 0.07 ± \tiny 0.02 & \footnotesize 0.07 ± \tiny 0.05 & \footnotesize 0.10 ± \tiny 0.01 & \footnotesize 0.02 ± \tiny 0.01 & \footnotesize 0.86 ± \tiny 0.01 \\
 & UGE & \footnotesize 0.45 ± \tiny 0.00 & \footnotesize 0.25 ± \tiny 0.00 & \footnotesize 0.11 ± \tiny 0.00 & \footnotesize 0.25 ± \tiny 0.00 & \footnotesize 0.08 ± \tiny 0.00 & \footnotesize 0.02 ± \tiny 0.01 & \footnotesize 0.06 ± \tiny 0.00 & \footnotesize 0.10 ± \tiny 0.01 & \footnotesize 0.61 ± \tiny 0.01 \\
 \specialrule{0.2pt}{0pt}{0pt}
 & GCN-$NF^{(1)}$ & \footnotesize \textbf{0.10} ± \tiny 0.00 & \footnotesize 0.29 ± \tiny 0.01 & \footnotesize 0.12 ± \tiny 0.01 & \footnotesize 0.22 ± \tiny 0.01 & \footnotesize 0.12 ± \tiny 0.06 & \footnotesize 0.12 ± \tiny 0.08 & \footnotesize 0.20 ± \tiny 0.02 & \footnotesize 0.09 ± \tiny 0.03 & \footnotesize 0.89 ± \tiny 0.00 \\
 & GCN-$NF^{(2)}$ & \footnotesize 0.52 ± \tiny 0.01 & \footnotesize \textbf{0.04} ± \tiny 0.00 & \footnotesize 0.12 ± \tiny 0.01 & \footnotesize 0.22 ± \tiny 0.01 & \footnotesize 0.12 ± \tiny 0.06 & \footnotesize 0.12 ± \tiny 0.08 & \footnotesize 0.13 ± \tiny 0.01 & \footnotesize 0.40 ± \tiny 0.01 & \footnotesize 0.61 ± \tiny 0.01 \\
 & GCN-$NF^{(4)}$ & \footnotesize 0.56 ± \tiny 0.01 & \footnotesize 0.30 ± \tiny 0.01 & \footnotesize 0.10 ± \tiny 0.01 & \footnotesize \textbf{0.04} ± \tiny 0.01 & \footnotesize 0.12 ± \tiny 0.06 & \footnotesize 0.12 ± \tiny 0.08 & \footnotesize 0.25 ± \tiny 0.02 & \footnotesize 0.09 ± \tiny 0.03 & \footnotesize 0.88 ± \tiny 0.00 \\
\midrule
\textit{Facebook} & N2V & \footnotesize 0.10 ± \tiny 0.00 & \footnotesize 0.04 ± \tiny 0.00 & \footnotesize 0.02 ± \tiny 0.00 & \footnotesize 0.01 ± \tiny 0.00 & \footnotesize 0.01 ± \tiny 0.01 & \footnotesize 0.02 ± \tiny 0.01 & \footnotesize 0.01 ± \tiny 0.01 & \footnotesize 0.01 ± \tiny 0.01 & \footnotesize 0.93 ± \tiny 0.00 \\
 & GCN & \footnotesize 0.09 ± \tiny 0.00 & \footnotesize 0.03 ± \tiny 0.00 & \footnotesize 0.00 ± \tiny 0.00 & \footnotesize 0.02 ± \tiny 0.00 & \footnotesize 0.02 ± \tiny 0.01 & \footnotesize 0.04 ± \tiny 0.01 & \footnotesize 0.00 ± \tiny 0.00 & \footnotesize 0.02 ± \tiny 0.00 & \footnotesize 0.93 ± \tiny 0.00 \\
 & SAGE & \footnotesize 0.09 ± \tiny 0.00 & \footnotesize 0.03 ± \tiny 0.00 & \footnotesize 0.00 ± \tiny 0.00 & \footnotesize 0.02 ± \tiny 0.00 & \footnotesize 0.02 ± \tiny 0.00 & \footnotesize 0.04 ± \tiny 0.01 & \footnotesize 0.01 ± \tiny 0.00 & \footnotesize 0.03 ± \tiny 0.00 & \footnotesize 0.89 ± \tiny 0.00 \\
 \specialrule{0.2pt}{0pt}{0pt}
 & CrossWalk & \footnotesize 0.06 ± \tiny 0.00 & \footnotesize 0.02 ± \tiny 0.00 & \footnotesize 0.01 ± \tiny 0.00 & \footnotesize 0.02 ± \tiny 0.00 & \footnotesize 0.00 ± \tiny 0.00 & \footnotesize 0.03 ± \tiny 0.00 & \footnotesize 0.00 ± \tiny 0.00 & \footnotesize 0.00 ± \tiny 0.00 & \footnotesize 0.57 ± \tiny 0.01 \\
 & DeBayes & \footnotesize 0.09 ± \tiny 0.00 & \footnotesize 0.04 ± \tiny 0.00 & \footnotesize 0.01 ± \tiny 0.00 & \footnotesize 0.00 ± \tiny 0.00 & \footnotesize 0.01 ± \tiny 0.00 & \footnotesize 0.02 ± \tiny 0.01 & \footnotesize 0.01 ± \tiny 0.00 & \footnotesize 0.05 ± \tiny 0.01 & \footnotesize 0.92 ± \tiny 0.00 \\
 & EDITS & \footnotesize 0.11 ± \tiny 0.00 & \footnotesize 0.04 ± \tiny 0.00 & \footnotesize 0.02 ± \tiny 0.00 & \footnotesize 0.03 ± \tiny 0.00 & \footnotesize 0.01 ± \tiny 0.00 & \footnotesize 0.06 ± \tiny 0.00 & \footnotesize 0.00 ± \tiny 0.00 & \footnotesize 0.01 ± \tiny 0.01 & \footnotesize 0.39 ± \tiny 0.01 \\
 & FairMILE & \footnotesize 0.06 ± \tiny 0.00 & \footnotesize 0.02 ± \tiny 0.00 & \footnotesize 0.00 ± \tiny 0.00 & \footnotesize 0.02 ± \tiny 0.00 & \footnotesize 0.00 ± \tiny 0.00 & \footnotesize 0.03 ± \tiny 0.00 & \footnotesize 0.00 ± \tiny 0.00 & \footnotesize 0.00 ± \tiny 0.00 & \footnotesize 0.58 ± \tiny 0.01 \\
 & FairWalk & \footnotesize 0.10 ± \tiny 0.00 & \footnotesize 0.04 ± \tiny 0.00 & \footnotesize 0.02 ± \tiny 0.00 & \footnotesize 0.01 ± \tiny 0.00 & \footnotesize 0.01 ± \tiny 0.01 & \footnotesize 0.02 ± \tiny 0.01 & \footnotesize 0.01 ± \tiny 0.00 & \footnotesize 0.01 ± \tiny 0.00 & \footnotesize 0.92 ± \tiny 0.01 \\
 & Graphair & \footnotesize 0.05 ± \tiny 0.00 & \footnotesize 0.02 ± \tiny 0.00 & \footnotesize 0.01 ± \tiny 0.00 & \footnotesize 0.02 ± \tiny 0.00 & \footnotesize 0.01 ± \tiny 0.00 & \footnotesize 0.04 ± \tiny 0.01 & \footnotesize 0.01 ± \tiny 0.01 & \footnotesize 0.03 ± \tiny 0.01 & \footnotesize 0.78 ± \tiny 0.02 \\
 & UGE & \footnotesize 0.06 ± \tiny 0.00 & \footnotesize 0.02 ± \tiny 0.00 & \footnotesize 0.00 ± \tiny 0.00 & \footnotesize 0.02 ± \tiny 0.00 & \footnotesize 0.00 ± \tiny 0.00 & \footnotesize 0.03 ± \tiny 0.00 & \footnotesize 0.00 ± \tiny 0.00 & \footnotesize 0.00 ± \tiny 0.00 & \footnotesize 0.54 ± \tiny 0.01 \\
 \specialrule{0.2pt}{0pt}{0pt}
 & GCN-$NF^{(1)}$ & \footnotesize \textbf{0.02} ± \tiny 0.00 & \footnotesize 0.03 ± \tiny 0.00 & \footnotesize 0.00 ± \tiny 0.00 & \footnotesize 0.02 ± \tiny 0.00 & \footnotesize 0.02 ± \tiny 0.01 & \footnotesize 0.04 ± \tiny 0.01 & \footnotesize 0.00 ± \tiny 0.00 & \footnotesize 0.02 ± \tiny 0.00 & \footnotesize 0.93 ± \tiny 0.00 \\
 & GCN-$NF^{(2)}$ & \footnotesize 0.08 ± \tiny 0.00 & \footnotesize \textbf{0.00} ± \tiny 0.00 & \footnotesize 0.00 ± \tiny 0.00 & \footnotesize 0.02 ± \tiny 0.00 & \footnotesize 0.02 ± \tiny 0.01 & \footnotesize 0.04 ± \tiny 0.01 & \footnotesize 0.02 ± \tiny 0.00 & \footnotesize 0.06 ± \tiny 0.00 & \footnotesize 0.93 ± \tiny 0.00 \\
 & GCN-$NF^{(6)}$ & \footnotesize 0.09 ± \tiny 0.00 & \footnotesize 0.03 ± \tiny 0.00 & \footnotesize 0.00 ± \tiny 0.00 & \footnotesize 0.02 ± \tiny 0.00 & \footnotesize 0.02 ± \tiny 0.01 & \footnotesize \textbf{0.03} ± \tiny 0.01 & \footnotesize 0.00 ± \tiny 0.00 & \footnotesize 0.02 ± \tiny 0.00 & \footnotesize 0.93 ± \tiny 0.00 \\
\midrule
\textit{Pokec} & N2V & \footnotesize 0.38 ± \tiny 0.01 & \footnotesize 0.26 ± \tiny 0.01 & \footnotesize 0.05 ± \tiny 0.01 & \footnotesize 0.03 ± \tiny 0.01 & - & - & \footnotesize 0.16 ± \tiny 0.02 & \footnotesize 0.05 ± \tiny 0.02 & \footnotesize 0.79 ± \tiny 0.01 \\
 & GCN & \footnotesize 0.34 ± \tiny 0.01 & \footnotesize 0.26 ± \tiny 0.01 & \footnotesize 0.13 ± \tiny 0.01 & \footnotesize 0.06 ± \tiny 0.00 & - & - & \footnotesize 0.15 ± \tiny 0.01 & \footnotesize 0.11 ± \tiny 0.02 & \footnotesize 0.76 ± \tiny 0.01 \\
 & SAGE & \footnotesize 0.32 ± \tiny 0.01 & \footnotesize 0.25 ± \tiny 0.00 & \footnotesize 0.13 ± \tiny 0.01 & \footnotesize 0.06 ± \tiny 0.01 & - & - & \footnotesize 0.15 ± \tiny 0.01 & \footnotesize 0.09 ± \tiny 0.02 & \footnotesize 0.70 ± \tiny 0.01 \\
 \specialrule{0.2pt}{0pt}{0pt}
 & CrossWalk & \footnotesize 0.38 ± \tiny 0.01 & \footnotesize 0.26 ± \tiny 0.02 & \footnotesize 0.04 ± \tiny 0.01 & \footnotesize 0.03 ± \tiny 0.01 & - & - & \footnotesize 0.15 ± \tiny 0.02 & \footnotesize 0.02 ± \tiny 0.02 & \footnotesize 0.79 ± \tiny 0.01 \\
 & DeBayes & \footnotesize 0.28 ± \tiny 0.01 & \footnotesize 0.21 ± \tiny 0.01 & \footnotesize 0.08 ± \tiny 0.01 & \footnotesize 0.06 ± \tiny 0.00 & - & - & \footnotesize 0.08 ± \tiny 0.01 & \footnotesize 0.03 ± \tiny 0.01 & \footnotesize 0.60 ± \tiny 0.01 \\
 & EDITS & \footnotesize 0.43 ± \tiny 0.01 & \footnotesize 0.38 ± \tiny 0.00 & \footnotesize 0.18 ± \tiny 0.01 & \footnotesize 0.12 ± \tiny 0.00 & - & - & \footnotesize 0.21 ± \tiny 0.01 & \footnotesize 0.10 ± \tiny 0.01 & \footnotesize 0.77 ± \tiny 0.00 \\
 & FairMILE & \footnotesize 0.37 ± \tiny 0.03 & \footnotesize 0.25 ± \tiny 0.03 & \footnotesize 0.14 ± \tiny 0.04 & \footnotesize 0.04 ± \tiny 0.01 & - & - & \footnotesize 0.27 ± \tiny 0.03 & \footnotesize 0.12 ± \tiny 0.03 & \footnotesize 0.85 ± \tiny 0.01 \\
 & FairWalk & \footnotesize 0.38 ± \tiny 0.01 & \footnotesize 0.25 ± \tiny 0.02 & \footnotesize 0.03 ± \tiny 0.01 & \footnotesize 0.04 ± \tiny 0.01 & - & - & \footnotesize 0.13 ± \tiny 0.01 & \footnotesize 0.02 ± \tiny 0.02 & \footnotesize 0.77 ± \tiny 0.01 \\
 & Graphair & \footnotesize 0.24 ± \tiny 0.01 & \footnotesize 0.16 ± \tiny 0.01 & \footnotesize 0.02 ± \tiny 0.00 & \footnotesize 0.09 ± \tiny 0.00 & - & - & \footnotesize 0.02 ± \tiny 0.01 & \footnotesize 0.00 ± \tiny 0.00 & \footnotesize 0.72 ± \tiny 0.02 \\
 & UGE & \footnotesize 0.27 ± \tiny 0.00 & \footnotesize 0.12 ± \tiny 0.00 & \footnotesize 0.01 ± \tiny 0.00 & \footnotesize 0.07 ± \tiny 0.00 & - & - & \footnotesize 0.02 ± \tiny 0.00 & \footnotesize 0.01 ± \tiny 0.01 & \footnotesize 0.55 ± \tiny 0.01 \\
 \specialrule{0.2pt}{0pt}{0pt}
 & N2V-$NF^{(1)}$ & \footnotesize \textbf{0.06} ± \tiny 0.01 & \footnotesize 0.26 ± \tiny 0.01 & \footnotesize 0.05 ± \tiny 0.01 & \footnotesize 0.03 ± \tiny 0.01 & - & - & \footnotesize 0.16 ± \tiny 0.02 & \footnotesize 0.05 ± \tiny 0.02 & \footnotesize 0.79 ± \tiny 0.01 \\
 & N2V-$NF^{(2)}$ & \footnotesize 0.35 ± \tiny 0.01 & \footnotesize \textbf{0.06} ± \tiny 0.01 & \footnotesize 0.05 ± \tiny 0.01 & \footnotesize 0.03 ± \tiny 0.01 & - & - & \footnotesize 0.13 ± \tiny 0.01 & \footnotesize 0.02 ± \tiny 0.02 & \footnotesize 0.77 ± \tiny 0.01 \\
 & N2V-$NF^{(4)}$ & \footnotesize 0.38 ± \tiny 0.01 & \footnotesize 0.26 ± \tiny 0.01 & \footnotesize 0.05 ± \tiny 0.01 & \footnotesize \textbf{0.01} ± \tiny 0.00 & - & - & \footnotesize 0.17 ± \tiny 0.01 & \footnotesize 0.05 ± \tiny 0.02 & \footnotesize 0.78 ± \tiny 0.01 \\
\bottomrule
\end{tabular}
\caption{Comparison of model performance on \textit{Polblogs}, \textit{Facebook}, and \textit{Pokec} datasets, highlighting \textbf{best} scores. 
For our methods, we report only the post-processing variant applied to the LP model yielding the best results.
Presented scores are averaged over 10 iterations with different train/test split random seeds, standard deviation being displayed next to averaged values.}
\label{table:full1}
\end{table}

\cleardoublepage
\footnotesize
\begin{table}[ht]
\centering
\begin{tabular}{llccccccccc}
\toprule
Dataset & Model & $NF^{(1)}$ & $NF^{(2)}$ & $NF^{(3)}$ & $NF^{(4)}$ & $NF^{(5)}$ & $NF^{(6)}$ & DP & EO & AUC \\
\midrule
\textit{Citeseer} & N2V & \footnotesize 0.50 ± \tiny 0.01 & \footnotesize 0.39 ± \tiny 0.01 & \footnotesize 0.28 ± \tiny 0.02 & \footnotesize 0.18 ± \tiny 0.02 & \footnotesize 0.10 ± \tiny 0.02 & \footnotesize 0.05 ± \tiny 0.01 & \footnotesize 0.17 ± \tiny 0.03 & \footnotesize 0.08 ± \tiny 0.05 & \footnotesize 0.74 ± \tiny 0.02 \\
 & GCN & \footnotesize 0.50 ± \tiny 0.01 & \footnotesize 0.35 ± \tiny 0.01 & \footnotesize 0.24 ± \tiny 0.00 & \footnotesize 0.19 ± \tiny 0.00 & \footnotesize 0.14 ± \tiny 0.00 & \footnotesize 0.10 ± \tiny 0.00 & \footnotesize 0.05 ± \tiny 0.01 & \footnotesize 0.04 ± \tiny 0.03 & \footnotesize 0.63 ± \tiny 0.01 \\
 & SAGE & \footnotesize 0.52 ± \tiny 0.01 & \footnotesize 0.38 ± \tiny 0.01 & \footnotesize 0.25 ± \tiny 0.01 & \footnotesize 0.19 ± \tiny 0.00 & \footnotesize 0.15 ± \tiny 0.00 & \footnotesize 0.10 ± \tiny 0.00 & \footnotesize 0.06 ± \tiny 0.01 & \footnotesize 0.05 ± \tiny 0.02 & \footnotesize 0.60 ± \tiny 0.02 \\
 \specialrule{0.2pt}{0pt}{0pt}
 & CrossWalk & \footnotesize 0.30 ± \tiny 0.00 & \footnotesize 0.26 ± \tiny 0.00 & \footnotesize 0.22 ± \tiny 0.00 & \footnotesize 0.18 ± \tiny 0.00 & \footnotesize 0.14 ± \tiny 0.00 & \footnotesize 0.10 ± \tiny 0.00 & \footnotesize 0.00 ± \tiny 0.00 & \footnotesize 0.00 ± \tiny 0.00 & \footnotesize 0.52 ± \tiny 0.02 \\
 & DeBayes & \footnotesize 0.49 ± \tiny 0.01 & \footnotesize 0.40 ± \tiny 0.01 & \footnotesize 0.32 ± \tiny 0.01 & \footnotesize 0.24 ± \tiny 0.01 & \footnotesize 0.17 ± \tiny 0.02 & \footnotesize 0.11 ± \tiny 0.01 & \footnotesize 0.19 ± \tiny 0.02 & \footnotesize 0.13 ± \tiny 0.04 & \footnotesize 0.79 ± \tiny 0.01 \\
 & EDITS & \footnotesize 0.31 ± \tiny 0.00 & \footnotesize 0.27 ± \tiny 0.00 & \footnotesize 0.23 ± \tiny 0.00 & \footnotesize 0.19 ± \tiny 0.00 & \footnotesize 0.15 ± \tiny 0.00 & \footnotesize 0.10 ± \tiny 0.00 & \footnotesize 0.00 ± \tiny 0.00 & \footnotesize 0.01 ± \tiny 0.01 & \footnotesize 0.49 ± \tiny 0.01 \\
 & FairMILE & \footnotesize 0.30 ± \tiny 0.00 & \footnotesize 0.26 ± \tiny 0.00 & \footnotesize 0.22 ± \tiny 0.00 & \footnotesize 0.18 ± \tiny 0.00 & \footnotesize 0.14 ± \tiny 0.00 & \footnotesize 0.10 ± \tiny 0.00 & \footnotesize 0.00 ± \tiny 0.00 & \footnotesize 0.00 ± \tiny 0.00 & \footnotesize 0.52 ± \tiny 0.02 \\
 & FairWalk & \footnotesize 0.50 ± \tiny 0.01 & \footnotesize 0.39 ± \tiny 0.01 & \footnotesize 0.27 ± \tiny 0.02 & \footnotesize 0.18 ± \tiny 0.02 & \footnotesize 0.10 ± \tiny 0.02 & \footnotesize 0.05 ± \tiny 0.01 & \footnotesize 0.17 ± \tiny 0.02 & \footnotesize 0.07 ± \tiny 0.04 & \footnotesize 0.80 ± \tiny 0.01 \\
 & Graphair & \footnotesize 0.34 ± \tiny 0.01 & \footnotesize 0.27 ± \tiny 0.01 & \footnotesize 0.22 ± \tiny 0.01 & \footnotesize 0.18 ± \tiny 0.01 & \footnotesize 0.14 ± \tiny 0.01 & \footnotesize 0.09 ± \tiny 0.01 & \footnotesize 0.01 ± \tiny 0.01 & \footnotesize 0.01 ± \tiny 0.01 & \footnotesize 0.57 ± \tiny 0.03 \\
 & UGE & \footnotesize 0.30 ± \tiny 0.00 & \footnotesize 0.26 ± \tiny 0.00 & \footnotesize 0.22 ± \tiny 0.00 & \footnotesize 0.18 ± \tiny 0.00 & \footnotesize 0.14 ± \tiny 0.00 & \footnotesize 0.10 ± \tiny 0.00 & \footnotesize 0.00 ± \tiny 0.00 & \footnotesize 0.00 ± \tiny 0.00 & \footnotesize 0.49 ± \tiny 0.01 \\
 \specialrule{0.2pt}{0pt}{0pt}
 & N2V-$NF^{(1)}$ & \footnotesize \textbf{0.05} ± \tiny 0.01 & \footnotesize 0.39 ± \tiny 0.01 & \footnotesize 0.28 ± \tiny 0.02 & \footnotesize 0.18 ± \tiny 0.02 & \footnotesize 0.10 ± \tiny 0.02 & \footnotesize 0.05 ± \tiny 0.01 & \footnotesize 0.17 ± \tiny 0.03 & \footnotesize 0.08 ± \tiny 0.05 & \footnotesize 0.74 ± \tiny 0.02 \\
 & N2V-$NF^{(2)}$ & \footnotesize 0.48 ± \tiny 0.01 & \footnotesize \textbf{0.06} ± \tiny 0.01 & \footnotesize 0.28 ± \tiny 0.02 & \footnotesize 0.18 ± \tiny 0.02 & \footnotesize 0.10 ± \tiny 0.02 & \footnotesize 0.05 ± \tiny 0.01 & \footnotesize 0.03 ± \tiny 0.02 & \footnotesize 0.18 ± \tiny 0.07 & \footnotesize 0.72 ± \tiny 0.02 \\
 & N2V-$NF^{(3)}$ & \footnotesize 0.49 ± \tiny 0.01 & \footnotesize 0.36 ± \tiny 0.01 & \footnotesize \textbf{0.06} ± \tiny 0.01 & \footnotesize 0.18 ± \tiny 0.02 & \footnotesize 0.10 ± \tiny 0.02 & \footnotesize 0.05 ± \tiny 0.01 & \footnotesize 0.11 ± \tiny 0.02 & \footnotesize 0.06 ± \tiny 0.05 & \footnotesize 0.74 ± \tiny 0.02 \\
\midrule
\textit{Synthetic} & N2V & \footnotesize 0.74 ± \tiny 0.01 & \footnotesize 0.38 ± \tiny 0.02 & \footnotesize 0.08 ± \tiny 0.00 & \footnotesize 0.01 ± \tiny 0.00 & - & - & \footnotesize 0.32 ± \tiny 0.01 & \footnotesize 0.31 ± \tiny 0.02 & \footnotesize 0.74 ± \tiny 0.01 \\
 & GCN & \footnotesize 0.68 ± \tiny 0.00 & \footnotesize 0.61 ± \tiny 0.00 & \footnotesize 0.04 ± \tiny 0.00 & \footnotesize 0.26 ± \tiny 0.00 & - & - & \footnotesize 0.30 ± \tiny 0.00 & \footnotesize 0.27 ± \tiny 0.01 & \footnotesize 0.75 ± \tiny 0.01 \\
 & SAGE & \footnotesize 0.70 ± \tiny 0.01 & \footnotesize 0.60 ± \tiny 0.01 & \footnotesize 0.04 ± \tiny 0.01 & \footnotesize 0.26 ± \tiny 0.01 & - & - & \footnotesize 0.26 ± \tiny 0.01 & \footnotesize 0.21 ± \tiny 0.01 & \footnotesize 0.68 ± \tiny 0.01 \\
 \specialrule{0.2pt}{0pt}{0pt}
 & CrossWalk & \footnotesize 0.71 ± \tiny 0.01 & \footnotesize 0.32 ± \tiny 0.01 & \footnotesize 0.06 ± \tiny 0.01 & \footnotesize 0.03 ± \tiny 0.00 & - & - & \footnotesize 0.24 ± \tiny 0.01 & \footnotesize 0.21 ± \tiny 0.02 & \footnotesize 0.67 ± \tiny 0.01 \\
 & DeBayes & \footnotesize 0.68 ± \tiny 0.01 & \footnotesize 0.65 ± \tiny 0.01 & \footnotesize 0.19 ± \tiny 0.00 & \footnotesize 0.15 ± \tiny 0.00 & - & - & \footnotesize 0.51 ± \tiny 0.01 & \footnotesize 0.44 ± \tiny 0.01 & \footnotesize 0.77 ± \tiny 0.00 \\
 & EDITS & \footnotesize 0.94 ± \tiny 0.00 & \footnotesize 0.83 ± \tiny 0.00 & \footnotesize 0.22 ± \tiny 0.00 & \footnotesize 0.79 ± \tiny 0.00 & - & - & \footnotesize 0.04 ± \tiny 0.00 & \footnotesize 0.10 ± \tiny 0.01 & \footnotesize 0.63 ± \tiny 0.01 \\
 & FairWalk & \footnotesize 0.70 ± \tiny 0.01 & \footnotesize 0.30 ± \tiny 0.00 & \footnotesize 0.05 ± \tiny 0.00 & \footnotesize 0.03 ± \tiny 0.00 & - & - & \footnotesize 0.22 ± \tiny 0.01 & \footnotesize 0.19 ± \tiny 0.02 & \footnotesize 0.66 ± \tiny 0.01 \\
 & Graphair & \footnotesize 0.48 ± \tiny 0.00 & \footnotesize 0.42 ± \tiny 0.00 & \footnotesize 0.11 ± \tiny 0.00 & \footnotesize 0.39 ± \tiny 0.00 & - & - & \footnotesize 0.01 ± \tiny 0.00 & \footnotesize 0.01 ± \tiny 0.00 & \footnotesize 0.62 ± \tiny 0.01 \\
 & UGE & \footnotesize 0.55 ± \tiny 0.01 & \footnotesize 0.32 ± \tiny 0.00 & \footnotesize 0.07 ± \tiny 0.00 & \footnotesize 0.27 ± \tiny 0.00 & - & - & \footnotesize 0.02 ± \tiny 0.01 & \footnotesize 0.04 ± \tiny 0.02 & \footnotesize 0.52 ± \tiny 0.01 \\
 \specialrule{0.2pt}{0pt}{0pt}
 & GCN-$NF^{(1)}$ & \footnotesize \textbf{0.20} ± \tiny 0.06 & \footnotesize 0.61 ± \tiny 0.00 & \footnotesize 0.04 ± \tiny 0.00 & \footnotesize 0.26 ± \tiny 0.00 & - & - & \footnotesize 0.30 ± \tiny 0.00 & \footnotesize 0.27 ± \tiny 0.01 & \footnotesize 0.75 ± \tiny 0.01 \\
 & GCN-$NF^{(2)}$ & \footnotesize 0.62 ± \tiny 0.00 & \footnotesize \textbf{0.16} ± \tiny 0.02 & \footnotesize 0.04 ± \tiny 0.00 & \footnotesize 0.26 ± \tiny 0.00 & - & - & \footnotesize 0.01 ± \tiny 0.01 & \footnotesize 0.02 ± \tiny 0.01 & \footnotesize 0.50 ± \tiny 0.01 \\
 & GCN-$NF^{(4)}$ & \footnotesize 0.68 ± \tiny 0.00 & \footnotesize 0.62 ± \tiny 0.00 & \footnotesize 0.10 ± \tiny 0.00 & \footnotesize \textbf{0.01} ± \tiny 0.00 & - & - & \footnotesize 0.48 ± \tiny 0.00 & \footnotesize 0.46 ± \tiny 0.02 & \footnotesize 0.75 ± \tiny 0.01 \\
\bottomrule
\end{tabular}
\caption{Comparison of model performance on \textit{Citeseer}, and \textit{Synthetic} datasets, highlighting \textbf{best} scores. 
For our methods, we report only the post-processing variant applied to the LP model yielding the best results.
Presented scores are averaged over 10 iterations with different train/test split random seeds, standard deviation being displayed next to averaged values.}
\label{table:full2}
\end{table}

\end{document}